\documentclass{article}

 \usepackage[preprint]{neurips_2026}

\usepackage[utf8]{inputenc} 
\usepackage[T1]{fontenc}    
\usepackage{xcolor}
\definecolor{darkblue}{RGB}{0,51,102}
\definecolor{darkgreen}{RGB}{0,85,34}
\usepackage{wrapfig}
\usepackage[
  colorlinks=true,
  linkcolor=darkblue,
  citecolor=darkgreen,
  urlcolor=darkblue
]{hyperref}
\usepackage{url}
\usepackage{booktabs}
\usepackage{amsfonts}
\usepackage{nicefrac}
\usepackage{microtype}
\usepackage{amsmath} 
\usepackage{amsthm}
\usepackage{amssymb}
\usepackage{graphicx}
\usepackage{diagbox} 
\usepackage{makecell}
\usepackage{bm}
\usepackage[capitalise]{cleveref}
\usepackage{enumitem}
\usepackage{algorithmic}
\usepackage{placeins}
\usepackage{tikz}
\usepackage{enumitem}
\usepackage{algorithm}
\usetikzlibrary{decorations.pathreplacing}
\usetikzlibrary{calc,arrows.meta}

\newcommand{\hlpink}[1]{\tikz[baseline=(X.base)]\node[fill=pink!30, inner sep=1.5pt, rounded corners=1.5pt] (X) {#1};}
\newcommand{\roundedcell}[1]{%
  \tikz[baseline=(c.base)]\node[rounded corners=2pt, fill=pink!40, inner xsep=3pt, inner ysep=1pt] (c) {#1};%
}

\newtheorem{theorem}{Theorem}[section]

\newtheorem{lemma}{Lemma}[section]
\newtheorem{assumption}{Assumption}[section]
\newtheorem{proposition}{Proposition}[section]

\newtheorem{definition}{Definition}[section]

\newcommand{\op}{\mathrm{op}} 
\newcommand{\nm}{\mathrm{nm}}      
\newcommand{\eq}{\mathrm{eq}}      

\title{Optimizer-Induced Mode Connectivity: \\From AdamW to Muon}

\author{%
    Fangzhao Zhang$^{*1}$, 
    Sungyoon Kim$^{*1}$, 
    Erica Zhang$^{1}$, 
    Yiqi Jiang$^{1}$, 
    Mert Pilanci$^{1}$ \\[6pt]
    $^{1}$Stanford University 
}

\begin{document}
\maketitle
\renewcommand{\thefootnote}{}
\footnotetext{$^{*}$Equal contribution. Author ordering determined by coin flip.}
\footnotetext{Correspondence to: \texttt{\{zfzhao, sykim777\}@stanford.edu}.}
\renewcommand{\thefootnote}{\arabic{footnote}}

\begin{abstract} 
Mode connectivity has been widely studied, yet the role of the optimizer remains underexplored. We revisit it through optimizer-induced implicit regularization, asking how connectivity behaves when restricted to solutions constrained by a given optimizer. For two-layer ReLU networks, we show that solutions from a single optimizer - AdamW, Muon, or others in the Lion-$\mathcal{K}$ family - form a connected set at sufficiently large width, a result not implied by prior work. We then characterize how optimizer-induced regions interact: at large width two different regions can be disjoint or overlap depending on regularization, while in our small-width example AdamW and Muon converge to disconnected zero-loss components separated by a provable loss barrier. Empirically, in GPT-2 pretraining, we observe same-optimizer paths preserve each model's spectrum while cross-optimizer paths traverse a smooth transition. Our results reveal optimizer-dependent structure beyond classical mode connectivity literature. Code available at: \href{https://github.com/pilancilab/Optimizer-Induced-Mode-Connectivity}{pilancilab/Optimizer-Induced-Mode-Connectivity}.
\end{abstract}

\section{Introduction}
\label{sec:intro}

Mode connectivity~\citep{garipov2018losssurfacesmodeconnectivity} is one of the most intriguing phenomena in neural network loss landscapes: two independently trained models can often be connected by a simple curve - such as a piecewise linear or quadratic path - along which the loss remains low. This observation is unexpected given the non-convex nature of neural network training, and has motivated substantial theoretical work explaining when such low-loss paths exist~\citep{freeman2017topologygeometryhalfrectifiednetwork,kuditipudi2020explaininglandscapeconnectivitylowcost,nguyen2021note,nguyen2021solutions,kim2024exploring,ferbach2024provinglinearmodeconnectivity,zhan2025analyzing}. Beyond its theoretical interest, mode connectivity has enabled practical applications including model fusion, weight re-basin, and efficient ensembling~\citep{pena2022rebasinimplicitsinkhorndifferentiation,liu2022deep,akash2022wassersteinbarycenterbasedmodelfusion,ainsworth2023gitrebasinmergingmodels,jordan2023repairrenormalizingpermutedactivations,singh2023modelfusionoptimaltransport}.

Despite substantial progress in understanding mode connectivity, the role of the optimizer in shaping
connectivity has received little attention. Most existing works study connectivity between solutions
obtained under a fixed training procedure, or focus on properties of the loss landscape independent of
optimization dynamics. If all optimizers converge to the same set of solutions, the choice of optimizer
would be irrelevant for studying connectivity; however, in practice, different optimizers may converge
to qualitatively different solution sets ~\citep{wang2025muonoutperformsadamtailend, jianlinblog}.

Recent introduction of the Muon optimizer~\citep{jordan2024muon} makes this divergence concrete. \cref{fig:motivation} shows that AdamW and Muon converge to solutions with visibly different singular value spectra: AdamW exhibits outliers, while Muon’s spectrum is more concentrated. When we interpolate between two models trained with the same optimizer, the intermediate models all share same spectral signature. Yet, naive linear interpolation between them incurs a substantial loss barrier - leaving open whether the two endpoints can in fact be connected by a low-loss path of models with similar spectrum.

\begin{figure}[t]
    \centering
    \includegraphics[width=0.95\linewidth]{intro_plot_v5_resize.png}
    \caption{\textbf{Motivating Experiment.} Singular value histograms for layer 1 \texttt{up$\_$proj} weights in GPT-2 training. Models trained with AdamW and Muon converge to solutions with distinct spectrum. Solutions obtained by the same optimizer can be interpolated by paths that preserve the spectrum. Naive interpolation does not connect models with low loss paths. See \cref{sec::inter_exp} for experimental details.}
    \label{fig:motivation}
\end{figure}


Mode connectivity treats the solution set as a whole; it does not address what happens when we restrict to solutions consistent with a given optimizer's implicit bias. We propose a refinement: different optimizers constrain solutions to different regions of the loss landscape - we ask whether these regularized solution sets are connected. This reveals additional structure invisible at the level of classical mode connectivity~\citep{nguyen2021solutions, itowe}. Beyond theoretical interest, low-loss paths between solutions found by different optimizers produce structurally diverse intermediate models that may improve downstream properties such as quantization or generalization. Specifically,

\begin{itemize}[left=0pt]
    \item \textbf{Mode connectivity under optimizer-induced constraints.}
    We introduce the notions of \emph{intra-optimizer} and \emph{inter-optimizer} mode connectivity (\cref{sec:theory}). For AdamW and several widely used optimizers in the Lion-$\mathcal{K}$ framework including Muon, we show that intra-optimizer connectivity holds for two-layer ReLU networks if the width is sufficiently large. Moreover, we characterize conditions under which inter-optimizer connectivity holds.

    \item \textbf{A finite-width disconnectivity construction.}
    We construct a finite-width example in two-layer ReLU network training where the zero-loss solution set decomposes into multiple connected components. We show that there exists a problem where two solutions found from the same optimizer can provably be connected with a zero-loss path, whereas solutions from different optimizers cannot be connected without incurring loss.
    \item \textbf{Empirical evidence in large language model pretraining.}
    We provide empirical evidence from LLM pretraining that is consistent with our perspective. We find that same-optimizer pairs (Muon–Muon, AdamW–AdamW) are connected by low-loss paths that preserve their spectrum, whereas cross-optimizer pairs (Muon–AdamW) remain connected by a low-loss path while exhibiting a clear spectral transition.
    \item \textbf{Cross-optimizer model merging.} We show that interpolating between solutions from different optimizers traverses diverse weight spectra and can improve out-of-distribution generalization.
\end{itemize}


\begin{figure*}[!ht]
\centering
\begin{tikzpicture}[
    scale=0.9,
    every node/.style={font=\small},
    line cap=round,
    line join=round
]

\def\ptsize{2.7pt}
\def\aptsize{3.0pt}

\coordinate (A) at (0,0);
\coordinate (B) at (4.9,0);
\coordinate (C) at (10.0,0);

\fill[green!20, opacity=0.25] ($(A)+(0,0)$) ellipse (1.75 and 1.12);
\draw[line width=1.15pt] ($(A)+(0,0)$) ellipse (1.75 and 1.12);

\fill[green!60!black] ($(A)+(-0.78,0.02)$) circle (\aptsize);
\fill[green!60!black] ($(A)+(0.82,0.18)$) circle (\aptsize);
\draw[green!60!black, thick] ($(A)+(-0.78,0.02)$) -- ($(A)+(0.82,0.18)$);

\node[font=\footnotesize, text=green!30!black] at ($(A)+(0,-0.52)$) {Solution Set};
\node at ($(A)+(0,-1.55)$) {(a) Classical View};

\fill[green!20, opacity=0.25] ($(B)+(0,0)$) ellipse (1.90 and 1.18);
\draw[line width=1.15pt] ($(B)+(0,0)$) ellipse (1.90 and 1.18);

\fill[blue!35, opacity=0.30] ($(B)+(-0.78,0.12)$) ellipse (0.60 and 0.40);
\draw[blue, line width=1.1pt] ($(B)+(-0.78,0.12)$) ellipse (0.60 and 0.40);

\fill[red!35, opacity=0.30] ($(B)+(0.82,0.10)$) ellipse (0.50 and 0.50);
\draw[red, line width=1.1pt] ($(B)+(0.82,0.10)$) ellipse (0.50 and 0.50);

\fill[blue] ($(B)+(-1.00,0.20)$) circle (\ptsize);
\fill[blue] ($(B)+(-0.58,0.02)$) circle (\ptsize);
\draw[blue, thick] ($(B)+(-1.00,0.20)$) -- ($(B)+(-0.58,0.02)$);

\fill[red] ($(B)+(0.68,-0.05)$) circle (\ptsize);
\fill[red] ($(B)+(0.96,0.23)$) circle (\ptsize);
\draw[red, thick] ($(B)+(0.68,-0.05)$) -- ($(B)+(0.96,0.23)$);

\draw[line width=1.0pt, black!65]
  ($(B)+(-0.58,0.02)$) -- ($(B)+(0.68,-0.05)$);

\node[blue, font=\footnotesize] at ($(B)+(-0.78,-0.58)$) {AdamW};
\node[red, font=\footnotesize] at ($(B)+(0.82,-0.58)$) {Muon};

\draw[line width=1.1pt, black!70]
  ($(B)+(-0.35,0.55)$)
  .. controls ($(B)+(-0.15,0.68)$) and ($(B)+(0.20,0.68)$)
  .. ($(B)+(0.42,0.55)$);

\node[font=\footnotesize, text=black] at ($(B)+(0.03,0.88)$) {may intersect};

\node at ($(B)+(0,-1.55)$) {(b) Optimizer-aware View};

\fill[green!20, opacity=0.25] ($(C)+(-1.35,0.08)$) ellipse (1.10 and 0.68);
\draw[line width=1.15pt] ($(C)+(-1.35,0.08)$) ellipse (1.10 and 0.68);

\fill[green!20, opacity=0.25] ($(C)+(1.45,0.08)$) ellipse (1.10 and 0.68);
\draw[line width=1.15pt] ($(C)+(1.45,0.08)$) ellipse (1.10 and 0.68);

\fill[blue!35, opacity=0.38] ($(C)+(-1.35,0.12)$) ellipse (0.70 and 0.38);
\draw[blue, line width=1.1pt] ($(C)+(-1.35,0.12)$) ellipse (0.70 and 0.38);

\fill[red!35, opacity=0.38] ($(C)+(1.45,0.12)$) ellipse (0.70 and 0.38);
\draw[red, line width=1.1pt] ($(C)+(1.45,0.12)$) ellipse (0.70 and 0.38);

\fill[blue] ($(C)+(-1.68,0.22)$) circle (\ptsize);
\fill[blue] ($(C)+(-1.02,0.02)$) circle (\ptsize);
\draw[blue, thick] ($(C)+(-1.68,0.22)$) -- ($(C)+(-1.02,0.02)$);

\fill[red] ($(C)+(1.12,0.02)$) circle (\ptsize);
\fill[red] ($(C)+(1.78,0.22)$) circle (\ptsize);
\draw[red, thick] ($(C)+(1.12,0.02)$) -- ($(C)+(1.78,0.22)$);

\draw[dashed, line width = 1pt] ($(C)+(-1.02,0.02)$) -- ($(C)+(1.12,0.02)$);

\node[blue, font=\footnotesize] at ($(C)+(-1.35,-0.90)$) {AdamW};
\node[red, font=\footnotesize] at ($(C)+(1.45,-0.90)$) {Muon};

\node at ($(C)+(0.05,-1.55)$) {(c) Barrier between optimizers};

\end{tikzpicture}

\caption{\textbf{Mode Connectivity Under Optimizer-Induced Constraints.} Solid line denotes low-loss path, dashed line denotes path with a barrier.
(a) Classical mode connectivity treats low-loss solutions as lying in a connected set.
(b) Regularized solution sets induced by each optimizer are connected for sufficiently wide networks. The regions may or may not intersect, depending on the problem and regularization.
(c) In the finite-width construction, the solution set can decompose into disconnected components selected by different optimizer-induced regularizations.}
\label{fig:refined_view}
\end{figure*}

\section{Background}
\label{sec:background} 

\subsection{Notation}
\label{sec:notation}
For a matrix $W \in \mathbb{R}^{d \times m}$ with singular values $\sigma_1(W) \geq \sigma_2(W) \geq \cdots$, we use the entrywise max norm $\|W\|_{\max} := \max_{i,j} |W_{ij}|$, the entrywise $\ell_1$ norm $\|W\|_1 := \sum_{i,j} |W_{ij}|$, the Frobenius norm $\|W\|_F$, the spectral norm $\|W\|_{\mathrm{op}} := \sigma_1(W)$, and the nuclear norm $\|W\|_* := \sum_i \sigma_i(W)$. We avoid $\|\cdot\|_\infty$ on matrices, which can mean either the entrywise max or the induced operator norm. For vectors, $\|v\|_p$ denotes the standard $\ell_p$ norm. 

We write a two-layer ReLU network of width $m$ as $f(x) = (xW)_+ \alpha$, where $(\cdot)_+ := \max(\cdot, 0)$ is the ReLU function. Throughout, $W_i \in \mathbb{R}^d$ denotes the $i$-th column of $W$ (the first-layer weights of neuron $i$), and $\alpha_i \in \mathbb{R}$ is the corresponding second-layer weight for $i = 1, 2, \cdots m$. 

For $h \in \mathbb{R}^d$, the sign vector $\mathbf{1}(Xh \geq 0) \in \{0,1\}^n$ records which inputs activate a neuron with first-layer weight $h$. We call these \emph{hyperplane arrangement patterns} and write $D_1, \ldots, D_P$ for the $P$ distinct ones realized as $h$ varies over $\mathbb{R}^d$. The integer $P$ depends only on $X$ and is $\mathcal{O}(n^r)$, where $r = \mathrm{rank}(X)$. Finally, $[m] := \{1, \ldots, m\}$ and $\arg\min$ denotes the set of all minimizers.

\subsection{Adaptive Optimizers and Their Implicit Bias}
\label{sec:imbias}
Preconditioned optimization methods have become a dominant paradigm in modern deep learning. AdamW~\citep{loshchilov2019decoupledweightdecayregularization} decouples weight decay from the adaptive update and has become the standard optimizer for large-scale neural network training. Beyond the diagonal preconditioning that AdamW performs, matrix-preconditioned methods such as Shampoo~\citep{gupta2018shampoopreconditionedstochastictensor} and SOAP~\citep{vyas2025soapimprovingstabilizingshampoo} use richer second-moment structure. Muon~\citep{jordan2024muon} has brought renewed attention to this direction by using matrix orthogonalization via Newton-Schulz iterations, with subsequent work demonstrating its scalability for language model training~\citep{liu2025muonscalablellmtraining,ai2025practicalefficiencymuonpretraining,kimiteam2025kimik2openagentic}.


These optimizers differ in the geometry of their updates, leading to qualitatively different solutions at convergence. Prior work showed that adaptive optimizers can converge to solutions with different generalization behavior than SGD~\citep{wilson2017marginal}. Theoretical studies have characterized the implicit bias of adaptive methods in homogeneous models~\citep{wang2021implicit}, and extended implicit regularization results to Adam~\citep{cattaneo2023implicit} and Muon~\citep{fan2025implicitbiasspectraldescent}. Most relevant to our work is the perspective of identifying limit points of adaptive algorithms as KKT points of constrained optimization. \citet{xie2024implicitbiasadamwellinfty} showed that if AdamW with updates
\[
\theta_{t+1}
=
\theta_t-\eta_t\!\left(
\frac{m_{t+1}}{\sqrt{v_{t+1}}+\epsilon}
+\lambda \theta_t
\right),
\]
where $m_{t+1}=\beta_1 m_t+(1-\beta_1)\nabla f(\theta_t)$ and $v_{t+1}=\beta_2 v_t+(1-\beta_2)\nabla f(\theta_t)^{2}$ converges, it converges to $\ell_\infty$-norm constrained solutions. More generally, the Lion-$\mathcal{K}$ framework~\citep{chen2025lionsecretlysolvesconstrained} considers updates
\[
m_{t+1} = \mu m_t + \nabla f(\theta_t), \qquad
\theta_{t+1}
=
\theta_t
-
\eta_t\big(\nabla K(m_{t+1}) + \lambda \theta_t\big),
\]
for some convex function $K$ with subgradient $\nabla K$. If $K$ is a norm, the limit points satisfy the dual-norm constraint $K_d(\theta) \le 1/\lambda$. In particular, Muon corresponds to $K = \|\cdot\|_*$~\citep{chen2025muonoptimizesspectralnorm}, the nuclear norm, yielding the operator norm constraint $\|\theta\|_{\mathrm{op}} \le 1/\lambda$.
\subsection{Mode Connectivity}
Mode connectivity was first observed by~\citet{garipov2018losssurfacesmodeconnectivity} and~\citet{draxler2019essentiallybarriersneuralnetwork}, and has since been studied extensively. The simplest form is linear mode connectivity (LMC), first studied by~\citet{frankle2020linearmodeconnectivitylottery}, where the path to interpolate two models $\theta_A$ and $\theta_B$ is the line segment $\phi_\theta(t)=(1-t)\theta_A + t\theta_B$. While LMC typically fails in raw parameter space, it often emerges after accounting for symmetries of the model. Neural networks admit parameter transformations - such as neuron permutations or orthogonal reparametrizations - that preserve the represented function~\citep{zhao2023symmetryteleportationacceleratedoptimization,zhao2024improvingconvergencegeneralizationusing,zhao2025understandingmodeconnectivityparameter,theus2025generalizedlinearmodeconnectivity,tran2025linearmodeconnectivitymixtureofexperts}. The incorporation of permutation symmetry into LMC study~\citep{entezari2022rolepermutationinvariancelinear} has led to practical re-basin methods~\citep{pena2022rebasinimplicitsinkhorndifferentiation,ainsworth2023gitrebasinmergingmodels,singh2023modelfusionoptimaltransport}. More recent work has extended mode connectivity analysis to Transformer architectures~\citep{imfeld2024transformerfusionoptimaltransport,verma2024mergingtexttransformermodels,theus2025generalizedlinearmodeconnectivity}.

\section{Mode Connectivity Under Optimizer-Induced Constraints}
\label{sec:theory}
\subsection{Conceptual Framework}
\label{sec:framework}

Consider the optimization problem $\min_{\theta \in \mathbb{R}^{p}} f(\theta)$. As discussed in \cref{sec:background}, implicit-bias results identify the limit points of AdamW and Lion-$\mathcal{K}$ optimizers as KKT points of norm-constrained problems. These results motivate studying the set of optimal solutions satisfying the corresponding implicit constraints, since any limit point found by AdamW or Lion-$\mathcal{K}$ must satisfy them.
\begin{definition}[Regularized and optimizer-regularized solution sets]
\label{def:regsolset}
For a norm $R$ and $\lambda > 0$, the regularized solution set associated with $R$ is
\[
\mathcal{O}_{R}(\lambda)
:=
\operatorname*{arg\,min}_{\theta \in \mathbb{R}^p} f(\theta)
\;\cap\;
\{
\theta \in \mathbb{R}^p
\ | \ 
R(\theta) \le 1/\lambda
\}.
\]
For AdamW or any Lion-$\mathcal{K}$ optimizer $\mathcal{A}$ run with weight decay $\lambda$, the optimizer-regularized solution set is $\mathcal{O}^\mathcal{A}(\lambda) := \mathcal{O}_{\kappa_\mathcal{A}}(\lambda)$, where $\kappa_\mathcal{A}$ is the norm induced by $\mathcal{A}$'s implicit bias.
\end{definition}
Concretely, $\kappa_{\mathrm{AdamW}} = \|\cdot\|_{\max}$ and $\kappa_{\mathrm{Muon}} = \|\cdot\|_{\mathrm{op}}$, while a general Lion-$\mathcal{K}$ optimizer $L_K$ associated with norm $K$ gives $\kappa_{L_K} = K_d$. We use a norm subscript ($\mathcal{O}_R$) when the norm is the primary object and an optimizer superscript ($\mathcal{O}^\mathcal{A}$) when the optimizer is the primary object. By construction, $\mathcal{O}^\mathcal{A}(\lambda)$ contains the limit points of $\mathcal{A}$ as a subset; we study this regularized set as a structural object in its own right rather than attempting to identify the limit-point set exactly.

With this object at hand, the question we want to ask becomes precise: \emph{is $\mathcal{O}^\mathcal{A}(\lambda)$ connected?} Two natural notions arise: connectivity within one optimizer's regularized set, and connectivity across different optimizers' sets.
\begin{definition}[Intra- and inter-optimizer connectivity]
The loss landscape is \textbf{intra-optimizer connected} for an optimizer $\mathcal{A}$ and regularization $\lambda > 0$ if $\mathcal{O}^\mathcal{A}(\lambda)$ is connected. It is \textbf{inter-optimizer connected} for optimizers $\mathcal{A}_1, \mathcal{A}_2$ and regularizations $\lambda_1, \lambda_2 > 0$ if $\mathcal{O}^{\mathcal{A}_1}(\lambda_1) \cup \mathcal{O}^{\mathcal{A}_2}(\lambda_2)$ is connected.  
\end{definition}
Two consequences of these definitions are worth highlighting. Intra-optimizer connectivity implies that any two limit points of $\mathcal{A}$ can be joined by a continuous path along which the implicit-bias constraint is preserved. The reverse direction does not hold in general - connectivity of $\mathcal{O}^\mathcal{A}(\lambda)$ is a necessary but not sufficient condition for limit-point connectivity - but disconnectivity is inherited: if $\mathcal{O}^{\mathcal{A}_1}(\lambda_1)$ and $\mathcal{O}^{\mathcal{A}_2}(\lambda_2)$ lie in different components of the loss landscape, then so do the corresponding limit points. For certain subgradient choices in Lion-$\mathcal{K}$, the optimizer-regularized solution set coincides with the limit-point solution set; see Proposition~\ref{prop:exact}.

Although intra-optimizer connectivity is automatic for convex problems, the regularized solution set for two-layer ReLU networks is nonconvex. Existing connectivity results for the unrestricted solution set \citep{freeman2017topologygeometryhalfrectifiednetwork,nguyen2021note} do not directly imply the regularized version.

In the following sections, we specialize this framework to two-layer ReLU networks in the interpolation regime. We focus on four optimizers: AdamW and three instances of Lion-$\mathcal{K}$ optimizers with $K = \|\cdot\|_1$, $\|\cdot\|_F$, and $\|\cdot\|_*$, corresponding to Signum \citep{bernstein2018signsgd}, normalized momentum GD  \citep{cutkosky2020momentum} with decoupled weight decay, and Muon, respectively. Their induced norms are $\kappa_{\mathrm{AdamW}} = \|\cdot\|_{\max}$, $\kappa_{\mathrm{Signum}} = \|\cdot\|_{\max}$, $\kappa_{\mathrm{NormMom}} = \|\cdot\|_F$, and $\kappa_{\mathrm{Muon}} = \|\cdot\|_{\mathrm{op}}$. We show that intra-optimizer connectivity holds for all four at sufficiently large width $m$ (\cref{sec:intra-mode connectivity}). On the other hand, inter-optimizer connectivity depends on the data, the regularization parameters, and the network width (\cref{sec:inter-mode connectivity}). 

\subsection{Intra-Optimizer Connectivity in Two-Layer ReLU Network}
\label{sec:intra-mode connectivity}

We first specialize the regularized solution set to two-layer ReLU networks in the
interpolation regime. Consider
$
\min_{W,\alpha} \mathcal L((XW)_+\alpha,y),
$
where \(X\in\mathbb R^{n\times d}\), \(y\in\mathbb R^n\),
\(W\in\mathbb R^{d\times m}\), \(\alpha\in\mathbb R^m\) and $\mathcal{L}(x,y) = 0 \Leftrightarrow x = y$.
We assume throughout this section that the data $(X,y)$ is realizable by a two-layer ReLU network.

\begin{assumption}[Two-layer realizability]
\label{assump:realizability}
There exists \((W_0,\alpha_0)\) such that $(XW_0)_+\alpha_0 = y.$ Also, when $m_0$ denotes the number of elements in $\alpha_0$, we assume $m \geq m_0$.
\end{assumption}
Under \cref{assump:realizability}, the optimal set of the training problem becomes the set of interpolators.  

The subtlety we need to consider is that the optimizer is applied to the joint variable \((W,\alpha)\), whereas the implicit-bias results in \cite{xie2024implicitbiasadamwellinfty}, \cite{chen2025muonoptimizesspectralnorm} are stated for a single parameter vector or matrix. For a Lion-$\mathcal{K}$ optimizer with norm $K$, we must consider the regularized solution set 
\begin{equation}
\label{eq:twolayerReg}
\mathcal{O}_{K_d}(\lambda)
=
\{
(W,\alpha) \ | \
(XW)_+\alpha = y,\;
\max\{K_d(W),K_d(\alpha)\}\le 1/\lambda\}. 
\end{equation}

\cref{appxprop2:decoupleweights} rigorously discusses why \cref{eq:twolayerReg} becomes the right regularized solution set when we train two-layer networks. The idea is when we apply the Lion-$\mathcal K$ optimization iteration separately for $W$ and $\alpha$, it is effectively equivalent to applying $K(W)+K(\alpha)$ to the block variable $(W,\alpha)$. Hence the dual norm becomes $\max \{K_d(W), K_d(\alpha)\}$ and we have \cref{eq:twolayerReg} as the regularized solution set. Similarly we have that for AdamW, the regularized solution set of interest is \cref{eq:twolayerReg} with $K = \|\cdot\|_1$.

We now prove that given $(X, y, \lambda)$, for sufficiently wide network with width $m \geq m_0$, intra-mode connectivity holds for Normalized GD with momentum and Muon (\cref{t1:intramode_rot_31}), as well as Signum and AdamW (\cref{t2:intra_signum_32}). Our proof strategy consists of two steps: first, we show that there exists a set $S$ and a threshold $m^*$ that is only dependent on $(X, y, \lambda)$ such that when $m \ge m^*$, for any $A \in \mathcal{O}_R(\lambda)$ for $R: \|\cdot\|_F, \|\cdot\|_{\max}, \|\cdot\|_{\op}$, there exists a path from $A$ to $S \subseteq \mathcal{O}_R(\lambda)$ that satisfies:
\[ (W, \alpha) \in S \implies \sum_{i=1}^m 1(W_i \neq 0) \le m^*/2, \quad W_i \alpha_i = 0 \quad \textrm{implies} \quad W_i = 0, \alpha_i = 0. \]
This step reduces the number of nonzero first-layer weights to a finite threshold. The second step is connecting the two solutions in $S$ (\cref{Lemma1:smaller}). The idea behind \cref{Lemma1:smaller} is that $S$ is sparse enough so that simple interpolation connects the two points in $S$ after appropriate permutation. 

\begin{lemma}
\label{Lemma1:smaller}
Let $m \ge m^*$ and $R: \|\cdot\|_F, \|\cdot\|_{\max}, \|\cdot\|_{\op}$. Let $S \subseteq \mathcal{O}_R(\lambda)$ satisfies $(W, \alpha) \in S \implies \sum_{i=1}^m 1(w_i \neq 0) \le m^*/2$. For all $(W, \alpha) \in S$ and any two points $A \ne B \in S$, there exists a continuous path in $\mathcal{O}_R(\lambda)$ that connects $A$ and $B$. 
\end{lemma}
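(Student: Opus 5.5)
Because $A$ and $B$ together use at most $m^*$ neurons, we can place their active neurons on disjoint index sets inside the width $m \ge m^*$ network. We then move from $A$ to $B$ by turning $B$'s neurons on while turning $A$'s off, keeping the output exactly equal to $y$ and all weights inside the norm ball throughout.

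\textbf{Step 1 (permute into disjoint supports).} Write $A=(W^A,\alpha^A)$ and $B=(W^B,\alpha^B)$. Let $I_A,I_B\subseteq[m]$ be the indices of nonzero first-layer columns. Each has size at most $m^*/2$, so $|I_A|+|I_B|\le m^*\le m$. We may assume neurons with $W_i=0$ also have $\alpha_i=0$. If not, we first linearly shrink such $\alpha_i$ to $0$. This does not change $(XW)_+\alpha$, since $(XW_i)_+=0$, and each norm of $\alpha$ only decreases along this path (all norms are monotone under shrinking a coordinate toward zero coordinatewise? for $\ell_2$ and $\ell_\infty$ on the vector $\alpha$ yes; for vectors $\|\cdot\|_{\op}=\|\cdot\|_2$). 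Next we permute the neurons of $A$ so that $I_A$ occupies a block of indices disjoint from where we will place $I_B$. A permutation cannot in general be realized by a straight line, so we realize it as a sequence of transpositions. Each transposition swaps neuron $i\in I_A$ into an empty slot $j$: first grow the copy at slot $j$, then shrink the original. Concretely, set $W_j=W_i$ immediately; this is harmless because $\alpha_j=0$. Then move $\alpha_j$ from $0$ to $\alpha_i$ and $\alpha_i$ from $\alpha_i$ to $0$ linearly, $(1-t)\alpha_i$ and $t\alpha_i$, which preserves the sum. Finally zero $W_i$. In the same way we permute $B$ so that its support lies in the complementary block. Since $\mathcal O_R(\lambda)$ is symmetric under permutations, the permuted $B$ is still in $S$-like form, and it suffices to connect the permuted versions.

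\textbf{Step 2 (swap in $B$).} Now $A$ and $B$ have disjoint supports. The main step is: set $W=W^A+W^B$ (disjoint columns), keeping $\alpha=\alpha^A$, whose entries vanish on $I_B$. Then interpolate only the outer layer, $\alpha(t)=(1-t)\alpha^A+t\alpha^B$. The output $(1-t)y+ty=y$ is preserved because the hidden features are fixed and disjoint. Finally zero out $W^A$, whose outer weights are now $0$.

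\textbf{Main obstacle: staying in the norm ball.} The required bound is $\max\{R(W),R(\alpha)\}\le 1/\lambda$ along the path. For $\alpha$ this follows by convexity of the ball. The hard part is the intermediate first layer $W^A+W^B$ with disjoint columns, and likewise the intermediate states in Step 1, where $W_j=W_i$ is duplicated.

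For $\|\cdot\|_{\max}$ there is no problem, since disjoint columns keep the max unchanged. For $\|\cdot\|_F$ and $\|\cdot\|_{\op}$ the concatenation can exceed $1/\lambda$.

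The fix I would try first is to avoid having both layers fully present at once, using ReLU positive homogeneity $(x\,cW_i)_+\alpha_i/c$ and coupling the scalings. Turn on $B$'s neurons via $W^B_i\to sW^B_i$ with $\alpha^B_i$ scaled compensatingly, and shrink $A$'s first layer as $A$'s contribution fades. More precisely, use the path
\[
W(t)=\sqrt{1-t}\,W^A+\sqrt t\,W^B,\qquad \alpha(t)=\sqrt{1-t}\,\alpha^A+\sqrt t\,\alpha^B.
\]
Disjointness gives output $(1-t)y+ty=y$. It also gives $\|W(t)\|_F^2=(1-t)\|W^A\|_F^2+t\|W^B\|_F^2$ and $W(t)W(t)^\top=(1-t)W^AW^{A\top}+tW^BW^{B\top}$, so both the Frobenius and operator norms stay $\le 1/\lambda$. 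The same identities hold for $\alpha$ with $\ell_2$. For the max norm, the entries are at most $\max(\sqrt{1-t},\sqrt t)$ times the original ones, so that bound holds too.

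The same square-root trick handles the duplication moves in Step 1: move a neuron $i\to j$ via $(\sqrt{1-t}W_i,\sqrt{1-t}\alpha_i)$ and $(\sqrt t W_i,\sqrt t\alpha_i)$, using the orthogonal split of columns. With this, the whole path lies in $\mathcal O_R(\lambda)$, completing the proof.
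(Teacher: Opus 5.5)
Your final argument is correct and is essentially the paper's proof. You relocate neurons into empty slots via the split $(\sqrt{1-t}\,W_i,\sqrt{1-t}\,\alpha_i)$, $(\sqrt t\,W_i,\sqrt t\,\alpha_i)$ until the supports of $A$ and $B$ are disjoint. You then follow $W(t)=\sqrt{1-t}\,W^A+\sqrt t\,W^B$, $\alpha(t)=\sqrt{1-t}\,\alpha^A+\sqrt t\,\alpha^B$, and control the norms by the Frobenius identity, the Gram-matrix identity with convexity of $\lambda_{\max}$, and entrywise shrinkage for the max norm. Your preliminary zeroing of $\alpha_i$ when $W_i=0$ usefully makes explicit the condition $\alpha_j=0$ at the empty slot, which the paper's swap uses without stating.

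In a clean write-up, discard the first versions of Steps 1 and 2, namely duplicating $W_j=W_i$ and passing through $W^A+W^B$. As you note yourself, those intermediate points can leave the Frobenius and operator-norm balls.
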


Now we show how to find the specific $S$ and $m^*$ for each regularization. For $R: \|\cdot\|_F$ and $\|\cdot\|_{\op}$, we choose $S$ to be $\mathcal{O}_R^{\nm}(\lambda)$, the set of non-mergeable solutions. For convenience we define whether a model has inactive neurons removed.
\begin{definition}[Inactive neurons removed]
$(W,\alpha)$ is said to have \emph{inactive neurons removed} if
\(W_i \alpha_i = 0\) implies \(W_i = 0\) and \(\alpha_i = 0\).
\end{definition}
\begin{definition}[Non-mergeable solutions]
The set of non-mergeable solutions $\mathcal{O}_R^{\nm}(\lambda)$ is defined as the set of $(W,\alpha) \in \mathcal{O}_R(\lambda)$ where $(W,\alpha)$ has inactive neurons removed and for any $i \neq j$ such that $w_i \alpha_i \neq 0$ and $w_j \alpha_j \neq 0$,
\(1(XW_i \ge 0) \neq 1(XW_j \ge 0)\) or \((\alpha_i \ge 0) \neq 1(\alpha_j \ge 0).\)
\end{definition}

Hence if we denote $P$ as number of all possible arrangement patterns $\mathbf{1}(Xh \ge 0)$, for each $(W, \alpha) \in \mathcal{O}_R^{nm}(\lambda)$ we have $\sum_{i=1}^m \mathbf{1}(W_i \ne 0) \le 2P$. If we choose $m^*=4P$, we can see that it satisfies the premises in \cref{Lemma1:smaller}. To complete our proof strategy we show that for any $A \in \mathcal{O}_R(\lambda)$, we have a continuous path from $A$ to a point in $\mathcal{O}_R^{\nm}(\lambda)$. The idea is using a nonlinear merging strategy to merge two neurons with same activation and second layer sign, while not violating the norm constraint.

\begin{proposition}
\label{prop3:reduction}
(Reducing to non-mergeable solution) Let $R: \|\cdot\|_F$ or $\|\cdot\|_{\op}$. Then for all $A \in \mathcal{O}_R(\lambda)$, there exists a continuous path from $A$ to $\mathcal{O}_R^{\nm}(\lambda)$ in $\mathcal{O}_R(\lambda)$.
\end{proposition}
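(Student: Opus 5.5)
We build the path by concatenating finitely many elementary moves. Each move keeps the network output $(XW)_+\alpha$ equal to $y$ and never increases $R(W)$ or $R(\alpha)$. The first move removes inactive neurons. The second merges two neurons that share both an activation pattern and a second-layer sign. Each merge lowers the number of nonzero neurons by one, so after at most $m$ steps we land in $\mathcal{O}_R^{\nm}(\lambda)$.

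\textbf{Step 1 (removing inactive neurons).} Suppose $W_i\alpha_i=0$ with, say, $\alpha_i=0$. Then neuron $i$ contributes nothing to the output. Along the path $t\mapsto (1-t)W_i$ the output is unchanged. Since both $\|\cdot\|_F$ and $\|\cdot\|_{\op}$ are monotone under shrinking a single column (it is a compression $W\mapsto WD$ with $D$ diagonal and $\|D\|_{\op}\le 1$), the constraint stays satisfied. The case $W_i=0$, $\alpha_i\ne0$ is symmetric: shrink $\alpha_i$ to $0$. The norm of $\alpha$ is a vector norm, either Frobenius or operator, and both reduce to $\ell_2$, so it is monotone as well.

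\textbf{Step 2 (merging).} Take $i\ne j$ with $\mathbf 1(XW_i\ge0)=\mathbf 1(XW_j\ge0)=D$ and $\operatorname{sign}\alpha_i=\operatorname{sign}\alpha_j=s$. First use positive homogeneity to rebalance each active neuron to $\|W_k\|_2=|\alpha_k|$, via $(W_k,\alpha_k)\mapsto(cW_k,\alpha_k/c)$ along a continuous path in $c$. We must check that this does not break the constraint; this is the step I expect to be the main obstacle, discussed at the end. Once the pair is balanced, the two neurons contribute $s\,D X(\,|\alpha_i|W_i+|\alpha_j|W_j)$. Set $v=|\alpha_i|W_i+|\alpha_j|W_j$; this vector has the same pattern $D$ by convexity of the cone. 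The merged neuron is $(v/\sqrt{\|v\|_2},\,s\sqrt{\|v\|_2})$. We reach it continuously while keeping the output fixed by the nonlinear path
\[
W_i(t)=\tfrac{|\alpha_i|W_i+t|\alpha_j|W_j}{\sqrt{a(t)}},\quad \alpha_i(t)=s\sqrt{a(t)},\quad W_j(t)=\sqrt{1-t}\,W_j,\quad \alpha_j(t)=\sqrt{1-t}\,\alpha_j .
\]
Here $a(t)=\||\alpha_i|W_i+t|\alpha_j|W_j\|_2$, and the output is constant because the pattern is preserved. The norm bound follows from the triangle inequality: $\|v\|_2\le |\alpha_i|\|W_i\|_2+|\alpha_j|\|W_j\|_2=\|W_i\|_2^2+\|W_j\|_2^2$.

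For $R=\|\cdot\|_F$, Step 2 controls $\|W\|_F^2$ and $\|\alpha\|_2^2$ simultaneously. The squared column norms satisfy $\|W_i(t)\|^2+\|W_j(t)\|^2\le\|W_i\|^2+\|W_j\|^2$, and the same inequality holds for $\alpha$.

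For $R=\|\cdot\|_{\op}$, the difficulty is larger. Replacing the two columns $W_i,W_j$ by one column of norm at most $\sqrt{\|W_i\|^2+\|W_j\|^2}$ does not obviously preserve $\|W\|_{\op}$. I would handle it by showing that the new pair of columns $[W_i(t),W_j(t)]$ satisfies $[W_i(t),W_j(t)][W_i(t),W_j(t)]^\top\preceq [W_i,W_j][W_i,W_j]^\top$, so that $WW^\top$ only decreases in Loewner order and hence $\|W\|_{\op}$ cannot increase. At $t=1$ this reduces to $vv^\top/\|v\|\preceq W_iW_i^\top+W_jW_j^\top$ for balanced columns, which is a Cauchy–Schwarz-type inequality: for any $u$, $(u^\top v)^2/\|v\|\le(\|W_i\||u^\top W_i|+\|W_j\||u^\top W_j|)^2/\|v\|$. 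I would need to bound this by $(u^\top W_i)^2+(u^\top W_j)^2$, which fails in general unless $\|v\|\ge\|W_i\|^2+\|W_j\|^2$. So the fallback is to first rotate or choose the interpolation differently, for example merging only in the direction where the columns are nearly aligned.

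The rebalancing in Step 2 has the same issue. Rescaling $W_k$ upward can increase $\|W\|_{\op}$, and for $\alpha$ the relevant norm is $\ell_2$, which is benign. So for the operator norm I would rebalance only downward on whichever side exceeds, and replace the balanced identity by a weighted version. Verifying the Loewner decrease along the whole path for the operator norm is the crux of the proof.
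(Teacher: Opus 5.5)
Your Step 1 and the finite induction on the number of neurons match the paper. Step 2, however, has a genuine gap, and it is exactly the part you flag as the crux.

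\textbf{The rebalancing is not admissible.} The move $(W_k,\alpha_k)\mapsto(cW_k,\alpha_k/c)$ to $\|W_k\|_2=|\alpha_k|$ can break the constraint. The constraint caps $R(W)$ and $\|\alpha\|_2$ \emph{separately}, while balancing only lowers the sum $\|W_k\|_2^2+\alpha_k^2$, so whichever factor grows can cross $1/\lambda$. In particular, the $\alpha$ side is not benign. For $R=\|\cdot\|_{\op}$, take $W=[e_1\;e_2]$, $\alpha=(1/\sqrt2,1/\sqrt2)$, $\lambda=1$, with data putting $e_1,e_2$ in the same activation cone. This point is feasible, yet balancing either neuron gives $\|\alpha\|_2^2\ge 2^{-1/2}+1/2>1$. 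For $R=\|\cdot\|_F$, if $\|W\|_F=1/\lambda$ and $\|W_k\|_2<|\alpha_k|$ for both $k\in\{i,j\}$, balancing the pair strictly increases $\|W\|_F$.

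\textbf{The balanced merge can raise the operator norm.} Even for an already balanced pair, the merged column $v/\sqrt{\|v\|_2}$ can genuinely increase $\|W\|_{\op}$; it does not merely fail your sufficient condition. Take $W_i=e_1$, $W_j=e_2$, $\alpha_i=\alpha_j=1$, and a third column $\sqrt2(e_1+e_2)$ with a tiny second-layer weight. Then $\|W\|_{\op}^2$ rises from $5$ to $4+\sqrt2$ while $\|\alpha\|_2^2<5$. So the operator-norm half of the statement is unproved, and the fallbacks you sketch are not carried out.

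\textbf{The missing idea.} Skip balancing, and normalize the absorbing neuron by the second-layer magnitudes rather than by $\sqrt{\|v\|_2}$. Take $(W_i(t),\alpha_i(t))=\sqrt{1-t}\,(W_i,\alpha_i)$ and
\[
W_j(t)=\frac{t|\alpha_i|W_i+|\alpha_j|W_j}{\sqrt{t\alpha_i^2+\alpha_j^2}},\qquad \alpha_j(t)=s\sqrt{t\alpha_i^2+\alpha_j^2},\qquad t\in[0,1].
\]
This path starts at the original point and keeps the output fixed, by the shared pattern and positive homogeneity. It also keeps $\|\alpha(t)\|_2$ exactly constant. For any $u$, apply Cauchy--Schwarz to $(\sqrt t|\alpha_i|,|\alpha_j|)$ and $(\sqrt t\,u^\top W_i,\,u^\top W_j)$. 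This gives $(u^\top W_j(t))^2\le t(u^\top W_i)^2+(u^\top W_j)^2$. Adding $(u^\top W_i(t))^2=(1-t)(u^\top W_i)^2$ yields $W(t)W(t)^\top\preceq WW^\top$ in Loewner order. That single inequality bounds $\|W(t)\|_F$ (via the trace) and $\|W(t)\|_{\op}$ (via the top eigenvalue) simultaneously, with no rescaling step.

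This is the paper's argument. Dividing by $\sqrt{\alpha_i^2+\alpha_j^2}$ instead of $\sqrt{\|v\|_2}$ is the weighting you were looking for: it lets Cauchy--Schwarz close without needing $\|v\|_2\ge\|W_i\|_2^2+\|W_j\|_2^2$.
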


\cref{prop3:reduction} and \cref{Lemma1:smaller} lead to the intra-mode connectivity of Normalized GD with Momentum and Muon.

\begin{theorem}
\label{t1:intramode_rot_31}
Intra-optimizer connectivity holds for Normalized GD with Momentum and Muon when $m \ge 4P$. Here $P$ is the number of possible hyperplane arrangement patterns $\mathbf{1}(Xh \ge 0)$. 
\end{theorem}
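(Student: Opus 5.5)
The theorem follows by combining \cref{prop3:reduction} with \cref{Lemma1:smaller}, taking $S=\mathcal{O}_R^{\nm}(\lambda)$ and $m^*=4P$. By \cref{def:regsolset}, the optimizer-regularized sets are $\mathcal{O}^{\mathrm{NormMom}}(\lambda)=\mathcal{O}_{\|\cdot\|_F}(\lambda)$ and $\mathcal{O}^{\mathrm{Muon}}(\lambda)=\mathcal{O}_{\|\cdot\|_{\op}}(\lambda)$, with the block constraint of \cref{eq:twolayerReg}. So it suffices to show that for $R\in\{\|\cdot\|_F,\|\cdot\|_{\op}\}$ and $m\ge 4P$ the set $\mathcal{O}_R(\lambda)$ is path-connected. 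Path-connectedness implies connectedness. If the set is empty or a single point, there is nothing to prove.

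\textbf{Step 1 (sparsity of $S$).} I first check that $S=\mathcal{O}_R^{\nm}(\lambda)$ meets the hypothesis of \cref{Lemma1:smaller}. Take $(W,\alpha)\in S$. Because inactive neurons have been removed, every neuron with $W_i\neq 0$ also satisfies $W_i\alpha_i\neq 0$. By non-mergeability, the map $i\mapsto(\mathbf 1(XW_i\ge 0),\mathbf 1(\alpha_i\ge 0))$ is injective on these neurons. Its image lies in a set of size $2P$, so
\[
\sum_{i=1}^m \mathbf 1(W_i\ne 0)\le 2P = m^*/2 .
\]
This is exactly the cardinality condition needed.

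\textbf{Step 2 (gluing).} Take arbitrary $A,B\in\mathcal{O}_R(\lambda)$. \cref{prop3:reduction} gives continuous paths $\gamma_A$ from $A$ to some $A'\in S$ and $\gamma_B$ from $B$ to some $B'\in S$, both staying inside $\mathcal{O}_R(\lambda)$. If $A'\ne B'$, \cref{Lemma1:smaller} (valid since $m\ge m^*$) gives a path $\gamma$ from $A'$ to $B'$ inside $\mathcal{O}_R(\lambda)$. If $A'=B'$, I use the constant path. The concatenation $\gamma_A\cdot\gamma\cdot\gamma_B^{-1}$ is then a continuous path from $A$ to $B$ that never leaves $\mathcal{O}_R(\lambda)$. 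Since $A$ and $B$ were arbitrary, the set is path-connected.

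\textbf{Main obstacle.} The substance lies in the two cited results, not in the gluing. The part most worth double-checking is that the paths from \cref{prop3:reduction} and \cref{Lemma1:smaller} respect the block constraint $\max\{R(W),\|\alpha\|_{R'}\}\le 1/\lambda$ used in \cref{eq:twolayerReg}, rather than a constraint on $W$ alone. Concretely, the merging and permutation steps must not increase $R(W)$ or the norm of $\alpha$. For the spectral norm, zeroing or merging columns is the delicate case. I would verify this by noting two facts. First, removing columns and permuting columns cannot increase either $\|\cdot\|_F$ or $\|\cdot\|_{\op}$. Second, the merges in \cref{prop3:reduction} are constructed to preserve the constraint. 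Once these are checked, the argument above completes the proof of the theorem.
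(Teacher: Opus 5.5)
Your proposal is correct and follows the paper's proof essentially verbatim. Like the paper, you take $S=\mathcal{O}_R^{\nm}(\lambda)$ and $m^*=4P$, use \cref{prop3:reduction} to move any two points of $\mathcal{O}_R(\lambda)$ into $S$, join them inside $S$ via \cref{Lemma1:smaller}, and obtain the bound $\sum_i \mathbf{1}(W_i\neq 0)\le 2P$ by an explicit injectivity argument (using that inactive neurons are removed) that spells out a count the paper only asserts.
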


The case of Signum and AdamW needs a more involved analysis, because it is not immediately clear how one would reduce the solutions as we did for Normalized GD with momentum and Muon. We will define a different set $S$ where all solutions in $\mathcal{O}_{\|\cdot\|_{\max}}(\lambda)$ can move to, as well as the critical width $m^*$. We first define the set of equalized solutions.

\begin{definition}[Equalized solutions]
We define the equalized solution set $\mathcal{O}^{\eq}_{\|\cdot\|_{\max}}(\lambda)$ as the set of 
$(W,\alpha) \in \mathcal{O}_{\|\cdot\|_{\max}}$ where $(W,\alpha)$ has inactive neurons removed, $\alpha_i \in \{0, 1/\lambda, -1/\lambda\}$ and for any $i \neq j$, if
    \(
    \mathbf{1}(XW_i \ge 0) = \mathbf{1}(XW_j \ge 0)
    \)
    \text{and}
    \(
    \alpha_i = \alpha_j, 
    \)
    then \( W_i=W_j.
    \)
\end{definition}

The equalized solution set is the set of solutions where the second layer weights are set to either $0$ or $\pm 1/\lambda$, and if first-layer weights have the same hyperplane arrangement pattern they are identical. Also, we let the $(t, s)$-equalized solution set $\mathcal{O}_{\|\cdot\|_{\max}, (t, s)}^{\eq}$ for a vector $(t,s) \in \mathbb{R}^{2P}$ as:
\[
\begin{aligned}
\mathcal{O}_{\|\cdot\|_{\max}, (t, s)}^{eq} = \big\{ (W, \alpha) \in \mathcal{O}_{\|\cdot\|_{\max}}^{\eq} \mid \textstyle \sum_i &\mathbf{1}(XW_i \ge 0 = D_p) \mathbf{1}(\alpha_i > 0) = t_p, \\
\textstyle & \textstyle \sum_i \mathbf{1}(XW_i \ge 0 = D_p) \mathbf{1}(\alpha_i < 0) = s_p \quad \forall p \in [P] \big\}.
\end{aligned}
\]
$(t, s)$ is a length $2P$ vector of integers that encode the "support" of the sign of $(W, \alpha)$. 

To define the critical width $m^*$, we use a convex reformulation of the network parameterization based on the encoding $(t,s)\in\mathbb{Z}_{\ge0}^{2P}$. Similar ideas have been used to characterize the optimal set of neural networks \citep{mishkin2023optimal,kim2024exploring}.

\begin{definition}
For $(t,s) \in \mathbb{Z}_{\ge 0}^{2P}$, we define the convexified $(t,s)$-solution set
\[
P_{(t,s)} =
\left\{
(u_i,v_i)_{i=1}^P \;\middle|\;
\begin{array}{l}
\sum_{i=1}^P D_i X (u_i - v_i) = y, \|u_i\|_\infty \le \frac{t_i}{\lambda^2}, \quad
\|v_i\|_\infty \le \frac{s_i}{\lambda^2}, \\[3pt]
\mathbf{1}(Xu_i \ge 0) = D_i \ \  \text{if } \  u_i \ne 0, 
\mathbf{1}(Xv_i \ge 0) = D_i \ \ \text{if } \ v_i \ne 0
\end{array}
\right\}.
\]
\end{definition}

The critical threshold $m^*$ will follow from  $P_{(t, s)}$.
Let $Z = \{ (t, s) \mid P_{(t, s)} \ne \emptyset \}$ and let $Z_A$ be the set of minimal elements of $Z$ under the usual partial order. An application of Dickson's lemma shows that $Z_A$ is finite (\cref{Prop5:Dickson}), and each element $p$ of $Z$ has a minimal element that is smaller than or equal to $p$. Hence $Z_A$ can be thought of as a finite set of "smaller" supports one can arrive at from $\mathcal{O}_{\|\cdot\|_{\max}}^{\eq}(\lambda)$. Given this $Z_A$ we can choose $S$, $m^{*}$ to accomplish the first step of our proof.
\begin{proposition}
\label{Prop6:l1reduction}
Let $S = \bigcup_{(t,s) \in Z_A} \mathcal{O}_{\|\cdot\|_{\max}, (t,s)}^{\eq}(\lambda)$ and $m^* = 2 \max_{(t,s) \in Z_A} \sum_{i=1}^P t_i + s_i$. Then the following hold:
\begin{enumerate}[left=0pt]
    \item[i)] Let $m \geq m^*$. If $(W, \alpha) \in S$, $\sum_{i=1}^m 1(W_i \neq 0) \leq m^*/2$ and $(W,\alpha)$ has inactive neurons removed.
    \item[ii)] For all $A \in \mathcal{O}_{\|\cdot\|_{\max}}(\lambda)$, there exists continuous path from $A$ to $S$ in $\mathcal{O}_{\|\cdot\|_{\max}}(\lambda)$.
\end{enumerate}
\end{proposition}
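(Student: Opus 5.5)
Part i) is a counting statement. Part ii) is proved in three moves: first drive every solution into the equalized set $\mathcal{O}^{\eq}_{\|\cdot\|_{\max}}(\lambda)$, then use the convexified sets $P_{(t,s)}$ to shrink the support down to a minimal element of $Z$, and finally read the result back as a point of $S$.

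\textbf{Part i).} Take $(W,\alpha)\in\mathcal{O}^{\eq}_{\|\cdot\|_{\max},(t,s)}(\lambda)$ with $(t,s)\in Z_A$. By definition it has inactive neurons removed. Every neuron with $W_i\neq 0$ has $\alpha_i=\pm1/\lambda$ and some pattern $D_p$, so it is counted in exactly one of the $t_p$ or $s_p$. Hence $\sum_i \mathbf 1(W_i\ne0)=\sum_p (t_p+s_p)\le m^*/2$.

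\textbf{Part ii), step 1: removing inactive neurons.} Starting from $A$, every neuron with $W_i\alpha_i=0$ contributes nothing to the output. I continuously shrink whichever of $W_i,\alpha_i$ is nonzero to zero; the output and the max-norm bound are both preserved along the way.

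\textbf{Step 2: equalizing.} For an active neuron, the output contribution is $(XW_i)_+\alpha_i=(X W_i\alpha_i\lambda)_+\cdot \mathrm{sign}(\alpha_i)/\lambda$ when $\alpha_i>0$, and similarly for $\alpha_i<0$.
\begin{itemize}[left=0pt]
\item I follow the path $\alpha_i(\tau)$ from $\alpha_i$ to $\mathrm{sign}(\alpha_i)/\lambda$ together with $W_i(\tau)=W_i\alpha_i/\alpha_i(\tau)$. Positive homogeneity keeps the product fixed and preserves the activation pattern.
\item Since $|\alpha_i(\tau)|\ge|\alpha_i|$, we get $\|W_i(\tau)\|_\infty\le\|W_i\|_\infty\le 1/\lambda$, so the path stays feasible.
\item Next, I average all neurons sharing the same pattern $D_p$ and the same sign. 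Let $\bar W$ be their mean and set $W_j(\tau)=(1-\tau)W_j+\tau \bar W$. The pattern cone $\{h:\mathbf 1(Xh\ge0)=D_p\}$ is convex, so each interpolant keeps pattern $D_p$. On that cone $(Xh)_+=D_pXh$ is linear, so the sum of outputs is unchanged. The $\ell_\infty$ ball is convex, so feasibility is kept.
\end{itemize}
This lands in $\mathcal{O}^{\eq}_{\|\cdot\|_{\max},(t,s)}(\lambda)$ for some $(t,s)\in Z$. It lies in $Z$ because setting $u_p=\sum_{\alpha_i>0,\,D_p}W_i/\lambda$ (and similarly $v_p$) gives $\|u_p\|_\infty\le t_p/\lambda^2$, so $P_{(t,s)}\neq\emptyset$.

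\textbf{Step 3: shrinking to a minimal support (main obstacle).} By the Dickson argument (\cref{Prop5:Dickson}) there is $(t',s')\in Z_A$ with $(t',s')\le(t,s)$, and I take some $(u',v')\in P_{(t',s')}$. The plan is to move within the convexified set $\{(u,v)\}$, whose feasible region for fixed patterns is convex.
\begin{itemize}[left=0pt]
\item Linearly interpolate from the current $(u,v)$ to $(u',v')$. Both lie in the convex set defined by the larger bounds $t_p/\lambda^2$, $s_p/\lambda^2$ together with the pattern cones. The pattern constraint requires care when one endpoint of a block is zero; the zero vector lies in the closure of the cone, and I would handle this by choosing the realization so that the pattern is preserved.
\item I realize the path in parameter space by splitting each $u_p(\tau)$ equally among the $t_p$ neurons with $\alpha=1/\lambda$, giving $W_i=\lambda u_p/t_p$. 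This has max-norm at most $1/\lambda$, has the right pattern, and reproduces the output $y$.
\item At $\tau=1$, the block $u'_p$ obeys the bound $t'_p/\lambda^2$. I then re-split it over only $t'_p$ neurons and send the surplus $t_p-t'_p$ neurons to zero. To do this continuously, I shift mass: scale the weights of the surplus neurons down while scaling the kept ones up, keeping the total fixed and each weight within $t'_p$-feasible bounds.
\item Finally I zero the now-inactive neurons' $\alpha$.
\end{itemize}
The endpoint is equalized with counts $(t',s')$, hence lies in $S$. The delicate point is preserving activation patterns along the interpolation when a block passes through zero or sits on the cone boundary. If this fails, I would restrict to a convex sub-piece of the cone, or move the blocks one pattern at a time.
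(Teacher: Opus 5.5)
Your proposal is correct and follows the paper's proof. Part i) uses the same counting, and part ii) uses the same chain: remove inactive neurons, rescale $\alpha_i$ to $\pm1/\lambda$, average within each (pattern, sign) group, pick a minimal $(t',s')\le(t,s)$ by Dickson, and interpolate. The one difference is in the last step: the paper first builds a point of $\mathcal{O}^{\eq}_{\|\cdot\|_{\max},(t',s')}(\lambda)$ via Proposition~\ref{prop4:connectPandTh}, then interpolates shared neuron slots while shrinking surplus ones in one pass; you instead interpolate the aggregated blocks $(u,v)$ and then consolidate onto $t'_p$ neurons.

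The ``delicate point'' you flag is not an issue. With $C_p=\{h:\mathbf 1(Xh\ge0)=D_p\}$, the set $C_p\cup\{0\}$ is convex and $(Xh)_+=D_pXh$ on it, and $t'_p\le t_p$ ensures no block must become nonzero without neurons to carry it. You should state explicitly that minimality of $(t',s')$ forces $u'_p\neq0$ whenever $t'_p>0$, since that is what makes the endpoint's counts exactly $(t',s')$.
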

\cref{Prop6:l1reduction} and \cref{Lemma1:smaller} together show intra-mode connectivity of Signum and AdamW for sufficiently wide networks.
\begin{theorem}   
\label{t2:intra_signum_32}
(Intra-optimizer connectivity for Signum and AdamW) Intra-mode connectivity holds for Signum and AdamW when $m\geq m^{*}$, where $m^*$ is only determined with $(X, y, \lambda)$.
\end{theorem}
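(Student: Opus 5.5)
The plan is to combine \cref{Prop6:l1reduction} with \cref{Lemma1:smaller}, mirroring the proof of \cref{t1:intramode_rot_31}.

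\textbf{Step 1: identify the relevant set.} For Signum ($K=\|\cdot\|_1$) the induced norm is $\|\cdot\|_{\max}$. By the discussion after \cref{eq:twolayerReg}, the regularized solution set is
\[
\mathcal{O}_{\|\cdot\|_{\max}}(\lambda)=\{(W,\alpha)\mid (XW)_+\alpha=y,\ \max\{\|W\|_{\max},\|\alpha\|_\infty\}\le 1/\lambda\}.
\]
The same set arises for AdamW, since its implicit bias is the same $\ell_\infty$-type constraint on the block variable. It therefore suffices to show that this single set is path-connected once $m\ge m^*$, where $m^*=2\max_{(t,s)\in Z_A}\sum_i(t_i+s_i)$. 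This $m^*$ is finite by \cref{Prop5:Dickson}. It depends only on $(X,y,\lambda)$, because $Z$ and hence $Z_A$ are defined through the sets $P_{(t,s)}$, which involve only $X$, $y$, $\lambda$ and the arrangement patterns $D_i$.

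\textbf{Step 2: move each endpoint into $S$.} Take arbitrary $A,B\in\mathcal{O}_{\|\cdot\|_{\max}}(\lambda)$ and let $S=\bigcup_{(t,s)\in Z_A}\mathcal{O}^{\eq}_{\|\cdot\|_{\max},(t,s)}(\lambda)$. Part ii) of \cref{Prop6:l1reduction} gives continuous paths inside $\mathcal{O}_{\|\cdot\|_{\max}}(\lambda)$ from $A$ to some $A'\in S$ and from $B$ to some $B'\in S$. Part i) guarantees that every element of $S$ has at most $m^*/2$ nonzero first-layer columns and has inactive neurons removed. These are exactly the hypotheses of \cref{Lemma1:smaller} with $R=\|\cdot\|_{\max}$.

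\textbf{Step 3: connect within $S$.} Applying \cref{Lemma1:smaller} yields a path from $A'$ to $B'$ in $\mathcal{O}_{\|\cdot\|_{\max}}(\lambda)$; if $A'=B'$ the constant path suffices. Concatenating the three paths $A\to A'\to B'\to B$ gives a continuous path in $\mathcal{O}_{\|\cdot\|_{\max}}(\lambda)$. Hence the set is path-connected, so connected, and intra-optimizer connectivity holds for both optimizers.

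\textbf{Main obstacle.} The step requiring the most care is the identification in Step 1: that AdamW's regularized set for the joint variable $(W,\alpha)$ is exactly the Signum set, i.e., the per-block $\ell_\infty$ constraints combine as a max. I would justify this by invoking \cref{appxprop2:decoupleweights} together with \citet{xie2024implicitbiasadamwellinfty}, applied separately to each parameter block. The remaining steps are a direct assembly of the results already stated.
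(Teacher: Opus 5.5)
Your proposal is correct and is essentially the paper's proof. You move each endpoint into $S=\bigcup_{(t,s)\in Z_A}\mathcal{O}^{\eq}_{\|\cdot\|_{\max},(t,s)}(\lambda)$ via part ii) of \cref{Prop6:l1reduction}, use part i) (sparsity and inactive neurons removed) to meet the hypotheses of \cref{Lemma1:smaller}, and concatenate the paths. One small remark: the AdamW identification is cleanest by applying \citet{xie2024implicitbiasadamwellinfty} once to the vectorized joint parameter $(\operatorname{vec}(W),\alpha)$, as \cref{appxprop2:decoupleweights} does, rather than block by block, since the joint $\ell_\infty$ norm is automatically $\max\{\|W\|_{\max},\|\alpha\|_\infty\}$.
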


\subsection{Inter-Optimizer Connectivity: the Role of Width}
\label{sec:inter-mode connectivity}

We now study how the regularized sets $\mathcal{O}_{R_1}(m,\lambda_1)$ and $\mathcal{O}_{R_2}(m,\lambda_2)$ are arranged in the loss landscape, where we write $m$ explicitly to track the dependence on width. Here $R_1 \neq R_2$ and they are from $\{\|\cdot\|_{\max}, \|\cdot\|_F, \|\cdot\|_{\op}\}$. The geometric picture depends sharply on $m$, and we treat the large-width and small-width regimes separately.
\paragraph{Large width regime.}
\cref{appxprop7:regime_FOP,prop9:nontrivial_l1} establish that the regularized solution set $\mathcal{O}_R(m,\lambda)$ is nonempty and connected when $m \geq m^*_R$ and $0 < \lambda \leq \lambda^*_R(m)$. We study how $\mathcal{O}_{R_1}(m,\lambda_1)$ and $\mathcal{O}_{R_2}(m,\lambda_2)$ are arranged when $m \geq \max\{m^*_{R_1}, m^*_{R_2}\}$ and $\lambda_i \leq \lambda^*_{R_i}(m)$ - we refer to this as the \emph{large-width} regime. In this regime, inter-optimizer connectivity reduces to the relative position of $\mathcal{O}_{R_1}(m,\lambda_1)$ and $\mathcal{O}_{R_2}(m,\lambda_2)$. The union $\mathcal{O}_{R_1}(m,\lambda_1) \cup \mathcal{O}_{R_2}(m,\lambda_2)$ is connected if and only if the two sets touch - formally, whether the zero-loss set passes through the overlap of the two norm balls $B(R_1) \cap B(R_2)$, where $B(R) = \{(W,\alpha) \mid \max\{R(W), R(\alpha)\} \leq 1/\lambda\}$. \cref{thm:inter-connectivity} makes this precise: it fixes $\lambda_1$ and varies $\lambda_2$, showing that the number of connected components of the union either stays at one throughout (when the two sets always overlap) or undergoes a single transition from one to two as $\lambda_2$ grows past a critical threshold.

\begin{theorem}
\label{thm:inter-connectivity}
Let $R_1, R_2$ be two regularizers in $\{\|\cdot\|_{\max}, \|\cdot\|_F, \|\cdot\|_{\op}\}$, $m \geq \max\{m_{R_1}^*, m_{R_2}^*\}$, and $0 < \lambda_1 \leq \lambda_{R_1}^*(m)$ fixed. As $\lambda_2$ varies from $0$ to $\lambda_{R_2}^{*}(m)$:
\begin{enumerate}[left=0pt]
    \item[i)] If $\mathcal{O}_{R_1}(\lambda_1) \cap \mathcal{O}_{R_2}(\lambda_{R_2}^{*}(m)) \neq \emptyset$, the union $\mathcal{O}_{R_1}(\lambda_1) \cup \mathcal{O}_{R_2}(\lambda_2)$ has a single connected component for all $\lambda_2$.
    \item[ii)] If $\mathcal{O}_{R_1}(\lambda_1) \cap \mathcal{O}_{R_2}(\lambda_{R_2}^{*}(m)) = \emptyset$, there exists $\lambda_2^*(\lambda_1)$ such that the union has one component when $\lambda_2 \leq \lambda_2^*(\lambda_1)$ and two components when $\lambda_2 > \lambda_2^*(\lambda_1)$.
\end{enumerate}
\end{theorem}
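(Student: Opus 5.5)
The plan is to reduce everything to two facts established earlier: in the large-width regime each $\mathcal{O}_{R}(m,\lambda)$ is nonempty and connected for $0<\lambda\le\lambda^*_R(m)$ (\cref{appxprop7:regime_FOP,prop9:nontrivial_l1}, combined with \cref{t1:intramode_rot_31,t2:intra_signum_32}), and the sets are nested in $\lambda$. Nestedness is immediate from \cref{def:regsolset}: if $\lambda_2\le\lambda_2'$ then $\{R_2\le 1/\lambda_2'\}\subseteq\{R_2\le 1/\lambda_2\}$, so $\mathcal{O}_{R_2}(\lambda_2')\subseteq\mathcal{O}_{R_2}(\lambda_2)$. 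The union of two nonempty connected sets is connected whenever they intersect. Conversely, both sets are closed, since the interpolation constraint $(XW)_+\alpha=y$ and the norm-ball constraint are closed conditions. So if they are disjoint, the union is a disjoint union of two nonempty closed sets, hence disconnected, with exactly two components (each piece being connected). Hence the number of components is $1$ if $\mathcal{O}_{R_1}(\lambda_1)\cap\mathcal{O}_{R_2}(\lambda_2)\neq\emptyset$ and $2$ otherwise. The theorem then becomes a statement about the monotone family $I(\lambda_2):=\mathcal{O}_{R_1}(\lambda_1)\cap\mathcal{O}_{R_2}(\lambda_2)$.

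For i), if $I(\lambda^*_{R_2}(m))\neq\emptyset$, then nestedness gives $I(\lambda_2)\supseteq I(\lambda^*_{R_2}(m))\neq\emptyset$ for every $\lambda_2\le\lambda^*_{R_2}(m)$. So the union is connected throughout.

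For ii), I define $\lambda_2^*(\lambda_1):=\sup\{\lambda_2\in(0,\lambda^*_{R_2}(m)] : I(\lambda_2)\neq\emptyset\}$, with the convention $\lambda_2^*=0$ if this set is empty. By monotonicity, $I(\lambda_2)\neq\emptyset$ for $\lambda_2<\lambda_2^*$ and $I(\lambda_2)=\emptyset$ for $\lambda_2>\lambda_2^*$. This yields one component below the threshold and two above it. Note that $\lambda_2^*<\lambda^*_{R_2}(m)$ is not needed for the statement, but it holds as long as $I$ is empty at the endpoint together with the attainment property below.

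The main obstacle is the boundary case $\lambda_2=\lambda_2^*$: I must show the supremum is attained, so that ``$\le$'' in the statement is correct. I will argue by compactness. Write $\lambda_2^*=\rho^{-1}$. For any sequence $\lambda_2^{(k)}\uparrow\lambda_2^*$, pick $\theta_k\in I(\lambda_2^{(k)})$. All $\theta_k$ lie in the $R_1$-ball of radius $1/\lambda_1$, which is compact because $R_1$ is a norm on the finite-dimensional space $\mathbb{R}^{d\times m}\times\mathbb{R}^m$ with $m$ fixed. Extract a convergent subsequence $\theta_k\to\bar\theta$. Then $\bar\theta$ interpolates by continuity of $(W,\alpha)\mapsto(XW)_+\alpha$, satisfies the $R_1$ bound, and satisfies $\max\{R_2(\bar W),R_2(\bar\alpha)\}\le\lim 1/\lambda_2^{(k)}=1/\lambda_2^*$ by continuity of norms. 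Hence $\bar\theta\in I(\lambda_2^*)$. The same argument must also handle the degenerate case where the sup-set is empty. There the claim for ``$\lambda_2\le\lambda_2^*$'' would be vacuous, but I expect it cannot occur: as $\lambda_2\to0$ the $R_2$ constraint becomes inactive on the compact set $\mathcal{O}_{R_1}(\lambda_1)$, which is nonempty. So $I(\lambda_2)=\mathcal{O}_{R_1}(\lambda_1)$ for small $\lambda_2$, giving $\lambda_2^*>0$.
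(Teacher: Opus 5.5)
Your proposal is correct and takes essentially the same route as the paper. For i), both arguments use nestedness of $\mathcal{O}_{R_2}(\lambda_2)$ in $\lambda_2$ together with connectivity of each set. For ii), the threshold comes from monotonicity, and compactness of $\mathcal{O}_{R_1}(\lambda_1)$ shows the intersection is nonempty at the threshold. The differences are cosmetic: the paper takes the infimum of the $\lambda_2$ with empty intersection and minimizes $\max\{R_2(W),R_2(\alpha)\}$ over $\mathcal{O}_{R_1}(\lambda_1)$, where you take a supremum and extract a convergent subsequence. Your checks that $\lambda_2^*>0$ and that closedness alone yields exactly two components settle points the paper leaves implicit.
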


\paragraph{Finite-width construction.}
At a smaller width, a sharper picture is possible: the zero-loss solution set $\mathcal{O} = \{(W,\alpha) \mid (XW)_+\alpha = y\}$ can decompose into multiple connected components, with $\mathcal{O}_{R_1}(m,\lambda)$ and $\mathcal{O}_{R_2}(m,\lambda)$ landing in different components. In this case there is no zero-loss path between them at all - any path joining a solution in $\mathcal{O}_{R_1}(m,\lambda)$ to a solution of $\mathcal{O}_{R_2}(m,\lambda)$ must incur loss, permitting a stronger notion of separation. A general statement of the idea is conveyed in \cref{prop2:connectivity}.

\begin{proposition}
\label{prop2:connectivity}
Let $\mathcal{O}=\{(W,\alpha)\mid (XW)_+\alpha=y\}$ have connected components $\mathcal{C}_1,\ldots,\mathcal{C}_N$, and suppose $\mathcal{O}_{R_1}(\lambda_1) \subseteq \mathcal{C}_i$ and $\mathcal{O}_{R_2}(\lambda_2) \subseteq \mathcal{C}_j$ for some $i \neq j$. Then any two points in the same $\mathcal{O}_{R_\ell}(\lambda_\ell)$ are connected by a continuous path in $\mathcal{O}$, but no continuous path in $\mathcal{O}$ joins a point of $\mathcal{O}_{R_1}(\lambda_1)$ to a point of $\mathcal{O}_{R_2}(\lambda_2)$.
\end{proposition}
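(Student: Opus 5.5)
This is essentially a point-set topology statement, and I plan to prove it directly from the definition of connected components, with no properties of ReLU networks needed. Recall that the connected components $\mathcal{C}_1,\ldots,\mathcal{C}_N$ of $\mathcal{O}$ partition $\mathcal{O}$. Each component is a maximal connected subset, so every connected subset of $\mathcal{O}$ lies entirely inside exactly one $\mathcal{C}_k$. The two claims then follow from this partition property, one applied in each direction.

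\textbf{Negative claim.} This is the robust part. Suppose $\gamma:[0,1]\to\mathcal{O}$ is continuous with $\gamma(0)\in\mathcal{O}_{R_1}(\lambda_1)\subseteq\mathcal{C}_i$ and $\gamma(1)\in\mathcal{O}_{R_2}(\lambda_2)\subseteq\mathcal{C}_j$. The image $\gamma([0,1])$ is the continuous image of a connected interval, so it is connected and contained in $\mathcal{O}$. It therefore lies in a single component, which meets both $\mathcal{C}_i$ and $\mathcal{C}_j$. This forces $\mathcal{C}_i=\mathcal{C}_j$, contradicting $i\neq j$ and the disjointness of distinct components. The intended interpretation follows immediately: any path joining the two regularized sets must leave $\mathcal{O}$, i.e.\ it incurs positive loss somewhere along the way, since $\mathcal{L}(x,y)=0\iff x=y$.

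\textbf{Positive claim.} Here I must produce a path within $\mathcal{O}$ between two points of the same $\mathcal{O}_{R_\ell}(\lambda_\ell)$. Both points lie in $\mathcal{C}_i$ (respectively $\mathcal{C}_j$), but a connected component is not automatically path-connected, so this step is the real obstacle. I see two routes and would try the first.

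\emph{Route (a).} Show that $\mathcal{O}$ is locally path-connected. Then its connected components coincide with its path components, and any two points of $\mathcal{C}_i$ are joined by a path in $\mathcal{C}_i\subseteq\mathcal{O}$. To argue this, I would note that $\mathcal{O}$ is a semialgebraic set: it is cut out by the piecewise-polynomial equations $(XW)_+\alpha=y$, with pieces indexed by the activation patterns. Closed semialgebraic sets are locally contractible, hence locally path-connected, and so each component is path-connected.

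\emph{Route (b).} Avoid general topology by leaning on the earlier results. Whenever the intra-optimizer results apply (\cref{t1:intramode_rot_31}, \cref{t2:intra_signum_32}), $\mathcal{O}_{R_\ell}(\lambda_\ell)$ is itself path-connected by explicit paths lying in $\mathcal{O}_{R_\ell}(\lambda_\ell)\subseteq\mathcal{O}$, and the claim is immediate. Since the proposition is stated generally, including for small width, I would rely on route (a) and mention route (b) as a remark.

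Finally, I would clarify in the write-up that the components in the hypothesis are meant as path components. Under that reading, the positive claim is literally the definition, and the negative argument above goes through verbatim with path components in place of connected components.
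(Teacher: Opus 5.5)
Your proposal is correct and follows the paper's argument. For the negative claim you argue exactly as the paper does: $\gamma([0,1])$ is connected, so it lies in a single component, contradicting $i\neq j$. For the positive claim you reduce, as the paper does, to the components of $\mathcal{O}$ being path-connected. The paper simply asserts that fact without justification, so your route (a) fills the one step the paper leaves implicit: $\mathcal{O}$ is a closed semialgebraic set, hence locally contractible, hence locally path-connected, so its components coincide with its path components.
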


We instantiate this for AdamW and Muon. Take $X = [A; -A] \in \mathbb{R}^{2d \times d}$ with $A \in \mathbb{R}^{d \times d}$ invertible, $y \in \mathbb{R}^{2d}$ with nonzero entries, and width $m = 2$. Each vector $w \in \mathbb{R}^d$ yields an activation pattern $(Xw)_+$ with at most $d$ nonzero entries, selecting one side of each pair $(A_{i\cdot}, -A_{i\cdot})$ where $A_{i\cdot}$ denotes the $i$-th row of $A$. When the two neurons must activate on disjoint subsets of the data - which $y$ enforces - different activation patterns correspond to distinct connected components of $\mathcal{O}$, indexed by $\sigma \in \{\pm 1\}^d$. \cref{appxp2:connectedcomponentchar} characterizes these components, and \cref{prop4:connectPandTh} computes the operator-norm and $\ell_\infty$ minimizers among them. By choosing $A$ such that the operator-norm-minimizing component differs from the $\ell_\infty$-minimizing component(\cref{prop5:computation}), we obtain regularization parameters under which AdamW and Muon land in different components. For a formal statement of \cref{t1:Finitewidth_construction} see \cref{appxt1:Finitewidth_construction}.

\begin{theorem}[Informal]
\label{t1:Finitewidth_construction}
There exist a dataset $(X,y)$ and regularization parameters $\lambda_{\mathrm{AdamW}}, \lambda_{\mathrm{Muon}} > 0$ such that the AdamW- and Muon-regularized solution sets lie in different connected components of the zero-loss set $\mathcal{O}$. In particular, same-optimizer solutions are connected by zero-loss paths (up to permutation symmetry), while cross-optimizer solutions cannot be joined by any zero-loss path. Moreover, for $\mathcal{L}(x,y) = \tfrac{1}{2}\|x-y\|_2^2$, any continuous path connecting an AdamW solution to a Muon solution must incur loss at least $\tfrac{1}{2}$ at some point along the path.
\end{theorem}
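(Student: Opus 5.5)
The plan follows the outline the paper sketches: build an explicit dataset, describe the connected components of the zero-loss set, and then pick $\lambda$'s so that each regularized set is confined to a single component. We take $X=[A;-A]\in\mathbb{R}^{2d\times d}$ with $A$ invertible and width $m=2$. The label vector $y$ is chosen so that every interpolator must use both neurons, and the two neurons must activate on complementary sides of each pair $(A_{i\cdot},-A_{i\cdot})$. A convenient choice is $y=(y^+,y^-)$ with $y^+>0$ and $y^->0$ entrywise. Since $(Xw)_+$ is supported on one side of each pair, a single neuron covers at most one of $y_i^+,y_i^-$. Two neurons must therefore split each pair: neuron 1 takes side $\sigma_i$ and neuron 2 takes $-\sigma_i$.

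Both second-layer weights are then positive. Writing $\alpha_1 W_1 = A^{-1}(\sigma\text{-selected }y)$ (signed appropriately), the feasible set for a fixed $\sigma$ is $\{(W_1,W_2,\alpha)\colon \alpha_k W_k = c_k(\sigma),\ \alpha_k>0\}$, up to the swap of the two neurons. Each such piece is a product of two curves $t\mapsto(c/t,t)$, so it is connected. This is what I expect \cref{appxp2:connectedcomponentchar} to formalize.

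Distinct $\sigma$ give disjoint closed pieces. Along any continuous path in $\mathcal{O}$, $\alpha_k$ cannot cross $0$, because a neuron with $\alpha_k=0$ leaves some $y_i>0$ uncovered. Likewise each $(AW_k)_i$ cannot cross $0$, because at the crossing neither side of pair $i$ would be covered with a nonzero value. Together these show that the pieces indexed by $\sigma$ (modulo $\sigma\mapsto-\sigma$, which corresponds to a neuron permutation) are exactly the connected components $\mathcal{C}_\sigma$.

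Next, for each component we compute the smallest feasible $\lambda^{-1}$ under each norm: $\rho_R(\sigma)=\min_{\mathcal{C}_\sigma}\max\{R(W),R(\alpha)\}$. Balancing gives $\alpha_k=t_k$ and $W_k=c_k/t_k$, so $\rho_R(\sigma)$ is an explicit one-dimensional minimization; this should be \cref{prop4:connectPandTh}. Choose $A$, for instance a small $2\times2$ example found by direct computation as in \cref{prop5:computation}, so that the $\|\cdot\|_{\op}$-minimizing component $\sigma_{\op}$ differs from the $\|\cdot\|_{\max}$-minimizing component $\sigma_{\max}$, and each minimizer is strict.

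Then take $1/\lambda_{\mathrm{Muon}}\in[\rho_{\op}(\sigma_{\op}),\min_{\sigma\ne\sigma_{\op}}\rho_{\op}(\sigma))$, and choose $\lambda_{\mathrm{AdamW}}$ analogously for $\|\cdot\|_{\max}$. With these choices $\mathcal{O}_{\op}(\lambda_{\mathrm{Muon}})\subseteq\mathcal{C}_{\sigma_{\op}}$ and $\mathcal{O}_{\max}(\lambda_{\mathrm{AdamW}})\subseteq\mathcal{C}_{\sigma_{\max}}$, and \cref{prop2:connectivity} gives both the same-optimizer and cross-optimizer conclusions. For the same-optimizer paths, we still need connectedness inside the norm ball, not merely inside $\mathcal{C}_\sigma$. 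This follows by continuously rescaling $(W_k,\alpha_k)\to(W_k/t,t\alpha_k)$ toward the balanced point, which does not increase the max of the two norms when monotone scaling is done appropriately.

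The main obstacle is the quantitative barrier. A path from $\mathcal{C}_{\sigma_{\max}}$ to $\mathcal{C}_{\sigma_{\op}}$ must pass through a point where some coordinate crossing occurs: either $\alpha_k=0$, or $(AW_k)_i=0$ for an index $i$ where the signs differ. At such a point, at least one entry of $y$ is completely unrepresented, i.e.\ predicted as $0$ or with the wrong sign relative to $y_i>0$. The loss is then at least $\tfrac12 y_i^2$. Hence normalizing $y$ so that $\min_i|y_i|\ge1$ yields the bound $\tfrac12$.

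The delicate step is that the path is no longer in $\mathcal{O}$, so the crossing argument must be run on the general parameter path. I would handle it by intermediate value on the continuous functions $\sigma$-coordinates $(AW_k)_i$ and $\alpha_k$. At the first time a sign pattern changes, the ReLU outputs on both sides of pair $i$ come from only one neuron's zero coordinate. Checking that this forces $\hat y_i\le 0$ on one of the pair is where the careful case analysis lies.
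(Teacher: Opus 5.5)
Your plan is the paper's proof. It uses $X=[A;-A]$ at width $2$, decomposes $\mathcal{O}$ by $\sigma=\mathrm{sign}(AW_{\cdot 1})$, computes per-component minimal norms, places $1/\lambda$ between the smallest and runner-up values, and gets the barrier from the intermediate value theorem. Your half-open window is the right one: at the runner-up value, the runner-up components enter the ball.

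\textbf{Components and permutation.} One statement is wrong. The pieces are not connected components ``modulo $\sigma\mapsto-\sigma$.'' $\mathcal{C}_{\sigma\mid-\sigma}$ and $\mathcal{C}_{-\sigma\mid\sigma}$ are different components of $\mathcal{O}$, since passing between them flips every coordinate of $AW_{\cdot 1}$ and none can vanish in $\mathcal{O}$. So there are $2^d$ components. Both norms are invariant under swapping the neurons, so $\rho_R(\sigma)=\rho_R(-\sigma)$, and each regularized set meets both $\mathcal{C}_{\sigma_R\mid-\sigma_R}$ and $\mathcal{C}_{-\sigma_R\mid\sigma_R}$. Hence you cannot invoke \cref{prop2:connectivity} verbatim. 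Same-optimizer solutions are joined by zero-loss paths only after a neuron swap, which is exactly the ``up to permutation'' qualifier in the statement. Cross-optimizer separation survives because $\sigma_{\max}\neq\pm\sigma_{\op}$. Your rescaling argument for connectivity inside the norm ball is more than is needed. A zero-loss path only has to stay in $\mathcal{O}$, and $\mathcal{C}_{\sigma\mid-\sigma}$ is a continuous image of $(0,\infty)^2$.

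\textbf{The separating $A$.} Its existence is the real content of the theorem, and you leave it to ``direct computation.'' It is short if you take $y=\mathbf 1_{2d}$. Then $\mathcal{C}_{\sigma\mid-\sigma}=\{([p/\alpha_1,\,-p/\alpha_2],\alpha):\alpha_1,\alpha_2>0\}$ with $p=A^{-1}\sigma$, so $\rho_{\max}(\sigma)=\|A^{-1}\sigma\|_\infty^{1/2}$. Since $W$ has rank one, $\|W\|_{\op}\|\alpha\|_2=\|p\|_2\sqrt{(\alpha_1^{-2}+\alpha_2^{-2})(\alpha_1^2+\alpha_2^2)}\ge 2\|p\|_2$, with equality at $\alpha_1=\alpha_2$. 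Balancing then gives $\rho_{\op}(\sigma)=\sqrt{2\|A^{-1}\sigma\|_2}$. For $d=2$, let $h_1=(1,1)^\top$ and $h_2=(1,-1)^\top$, and choose $B=A^{-1}$ with $Bh_1=(1,1)^\top$ and $Bh_2=(L,0)^\top$, where $1<L<\sqrt2$. Then $\rho_{\max}(h_1)=1<\sqrt L=\rho_{\max}(h_2)$, while $\rho_{\op}(h_2)=\sqrt{2L}<\sqrt{2\sqrt2}=\rho_{\op}(h_1)$. This is the paper's $B$ at $d=2$.

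\textbf{The barrier.} It needs no case analysis and no ``first time.'' The endpoint sign vectors of $AW_{\cdot 1}$ lie in the disjoint sets $\{\pm\sigma_{\op}\}$ and $\{\pm\sigma_{\max}\}$, so $(AW_{\cdot 1}(\tau^*))_i=0$ for some $i$ and $\tau^*$. At that point neuron $1$ vanishes on rows $i$ and $d+i$, and neuron $2$ is nonzero on at most one of them. So one prediction equals $0$ and the loss is at least $\tfrac12$; the $\alpha_k=0$ alternative is never needed.
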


Together, these results give a unified picture of inter-optimizer connectivity. At large width, $\mathcal{O}_{R_1}(\lambda_1)$ and $\mathcal{O}_{R_2}(\lambda_2)$ live inside a single connected zero-loss set \citep{nguyen2021note}; whether they form one or two components depends on whether the regularized sets touch. At small width, the zero-loss set itself can split, and optimizers can be separated by a genuine loss barrier. Although disconnection of the zero-loss set at finite width is known in classical mode connectivity \citep{freeman2017topologygeometryhalfrectifiednetwork,venturi2019spurious,kuditipudi2020explaininglandscapeconnectivitylowcost, nguyen2021note}, our construction shows that the disconnection can be optimizer-dependent: different optimizer implicit biases provably select different components, separating them by a loss barrier.

\section{Experiments}
\label{sec:experiments}
\subsection{Experimental Setup and Path Construction}
\label{sec:setup}
Following \citep{theus2025generalizedlinearmodeconnectivity}, we use decoder-only GPT-2 models from the HuggingFace \texttt{Trainer} class at two scales: a \emph{small} model ($\sim$30M parameters) and a \emph{large} model ($\sim$50M parameters). Small models are used in this section and \cref{sec:ood}'s experiments while large models are used in \cref{sec::inter_exp} for better spectral probe. We train on five datasets of varying sizes: enwik8, WikiText-103, Stories, BookCorpus, and One Billion Word. All models are trained for 15{,}000 steps, consuming roughly 983M tokens---slightly exceeding the Chinchilla $20\times$ optimal token count for our model sizes \citep{hoffmann2022trainingcomputeoptimallargelanguage}. Learning rate and weight decay are tuned independently per optimizer and per dataset; full details including model configurations, dataset statistics and hyperparameter sweeps are reported in \cref{observation_appendix}. We report/plot mean and std over $5$ random seeds.

Our path construction algorithm is a modification of \cite{theus2025generalizedlinearmodeconnectivity}. \cite{theus2025generalizedlinearmodeconnectivity} connects two independently trained Transformer models by the following procedure: first, they cast both models into a \emph{canonical form} that removes spurious symmetries: LayerNorm scales are absorbed into adjacent weight matrices (replaced by RMSNorm), residual-stream projections are mean-subtracted, and attention weights are reparametrized into per-head QK/OV circuits. The remaining symmetries - permutations of attention heads and MLP neurons, and orthogonal rotations of the residual stream - are resolved by learning alignment matrices via the Hungarian algorithm (for permutations, with straight-through gradients) and SVD projection (for the orthogonal matrix).

When used off-the-shelf, \cite{theus2025generalizedlinearmodeconnectivity} still exhibits a nontrivial loss barrier between the models. Motivated from \citep{garipov2018losssurfacesmodeconnectivity}, we introduce a \emph{polygonal chain} (polychain) interpolation of connecting two models in addition to the canonicalization/alignment procedure of \citep{theus2025generalizedlinearmodeconnectivity}. Let $\theta_A$ and $\theta_B$ be two models. we add a learnable bend-point $\theta_C$ in between which parametrizes the whole curve with two linear segments, 
\begin{equation}\label{eq:polychain}
    \theta(\lambda) = \begin{cases} (1-2\lambda)\,\theta_A + 2\lambda\,\theta_C & \lambda \in [0, 0.5], \\ (2-2\lambda)\,\theta_C + (2\lambda-1)\,\pi(\theta_B) & \lambda \in (0.5, 1], \end{cases}
\end{equation}
where $\pi(\theta_B)$ denotes the learnable aligned model of $\theta_B$. When combined with the same learned symmetric alignment used in \citep{theus2025generalizedlinearmodeconnectivity}, this reduces the barrier on BookCorpus from ${\sim}0.4$ \citep[Table 2]{theus2025generalizedlinearmodeconnectivity} to ${\sim}0.05$ (Table \ref{opt_lmc_tab} below). As the loss barrier diminishes significantly, we use the modified polychain approach as our main algorithm to connect two models. A detailed pseudocode can be found in \cref{observation_appendix}
\begin{table}[ht!]  
\caption{Same-optimizer connectivity loss barrier. We report the validation loss barrier $\max_{\lambda \in [0,1]} \bigl[\mathcal{L}_{\mathrm{val}}(\theta(\lambda)) - \bigl((1{-}\lambda)\mathcal{L}_{\mathrm{val}}(\theta_A) + \lambda\mathcal{L}_{\mathrm{val}}(\theta_B)\bigr)\bigr]$, i.e., the maximum validation loss deviation of the interpolation path above the linear interpolation of endpoint losses. Compared to \citep{theus2025generalizedlinearmodeconnectivity},  the introduction
of polychain path significantly reduces loss barriers in all cases.}
\centering
\setlength{\abovecaptionskip}{10pt}
\resizebox{\linewidth}{!}{%
\begin{tabular}{lccccc} 
\toprule
\makecell[l]{Optimizers \;|\; Datasets} &
 enwik8 & WikiText-103 & Stories & BookCorpus  & One Billion Word \\
\midrule
AdamW (linear) & $0.26\pm 0.05$  & $0.44\pm 0.02$ & $0.35\pm 0.03$ & $0.34\pm 0.00$ & $0.45\pm 0.07$ \\
AdamW (polychain) & $0.04\pm 0.03$ & \bm{$0.05\pm 0.02$} & \bm{$0.05\pm 0.04$} & $0.05\pm 0.01$  &  \bm{$0.06\pm 0.05$} \\
Muon (linear) & $0.18\pm 0.02$ & $0.43\pm 0.02$  & $0.33\pm 0.01$ & $0.32\pm 0.01$ & $0.40\pm 0.01$ \\ 
Muon (polychain) & \bm{$0.01\pm 0.00$} & \bm{$0.05\pm 0.01$}  & \bm{$0.05\pm 0.00$} & \bm{$0.04\pm 0.01$} & \bm{$0.06 \pm 0.00$}  \\
\bottomrule 
\end{tabular}%
}
\label{opt_lmc_tab}
\end{table} 
\subsection{Spectrum Along Connecting Paths}\label{sec::inter_exp}
Empirically, Muon-trained models have a more isotropic singular value spectrum than AdamW-trained ones which exhibit spectral outliers. This matches their different implicit regularization, which makes the spectrum a useful proxy for studying mode connectivity under optimizer-induced constraints.

\begin{figure}[!htbp]
    \centering
    \includegraphics[width=0.85\linewidth]{intra_srank_resize.png}
    \caption{
    Spectrum along same-optimizer connectivity paths.
    Each panel shows the singular value histogram at a different interpolation coefficient. 
    }
    \label{fig:spectral_same_optimizer}
\end{figure}

We first consider connecting two models trained with the same optimizer. In \cref{fig:spectral_same_optimizer}, the endpoints correspond to independently trained models, and we construct a path between them using our modified polychain algorithm. Each panel shows the singular value distribution of the model at a given interpolation coefficient coeff $ \in [0,1]$. Along this path, we track the stable rank number (denoted as srank in our plots) computed as $\sum \sigma_i^2/\sigma_{\max}^2$ where $\sigma_i$ is the $i$th singular value, and observe that it remains largely stable, consistent with the preservation of the overall spectral shape. This result provides empirical support for intra-optimizer mode connectivity in large-scale training: not only do low-loss paths exist, but they remain within a consistent optimizer-induced region of the parameter space.

\begin{figure}[!htbp]
    \centering
    \includegraphics[width=0.85\linewidth]{inter_srank_adam_v3_resize2.png}
    \caption{
    Spectrum along the AdamW$\rightarrow$Muon connectivity path.
    Each panel shows the singular value histogram at a different interpolation coefficient.
    }
    \label{fig:spectral_adamw_muon}
\end{figure}

We next connect a model trained with AdamW to one trained with Muon. As shown in \cref{fig:spectral_adamw_muon}, the spectrum evolves smoothly along the path: starting from the AdamW solution with spectral outliers, the singular values gradually become more isotropic, approaching the Muon solution. Such trend  is also quantified by models with monotone increasing stable ranks along the interpolation path. This transition occurs while maintaining relatively stable eval loss. The intermediate models along this path exhibit spectral profiles not typically produced by either AdamW or Muon, suggesting that inter-optimizer connectivity can traverse more diverse regions of the solution space.
\subsection{Out-of-Distribution Generalization Along Cross-Optimizer Paths}
\label{sec:ood}
\begin{figure}[t]
    \centering
    \includegraphics[width=\textwidth]{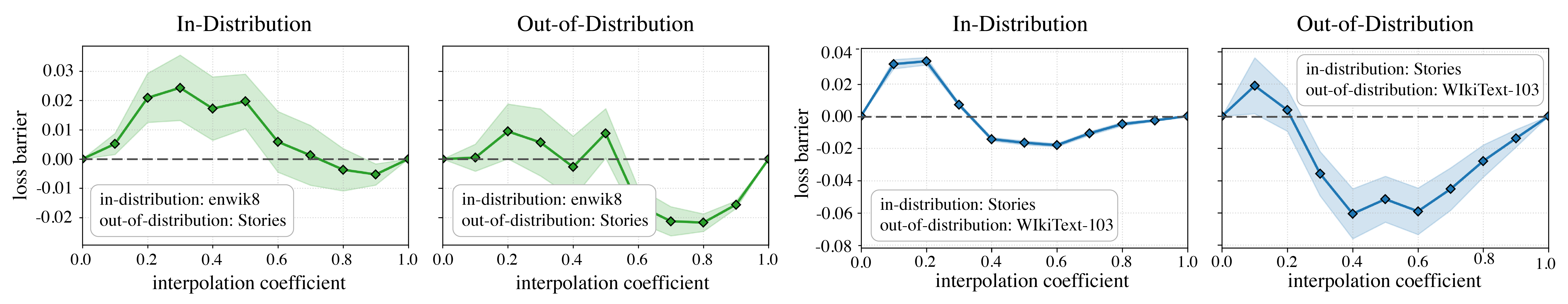}
    \caption{Loss barrier along the AdamW$\to$Muon cross-optimizer path, evaluated in-distribution on enwik8 (left) and out-of-distribution on Stories (right). }
    \label{fig:ood}
\end{figure}

Beyond connectivity itself, we examine whether the spectral diversity along cross-optimizer paths (\cref{sec::inter_exp}) carries practical signal. We take the AdamW$\to$Muon polychain path trained on enwik8 and evaluate it on both enwik8 and Stories, a held-out dataset. Similar procedure is also applied to Stories (in-distribution) and WikiText-103 (out-of-distribution) datasets. \cref{fig:ood} reports the loss $\mathcal{L}_{\mathrm{val}}(\theta(\lambda)) - \bigl((1{-}\lambda)\mathcal{L}_{\mathrm{val}}(\theta_A) + \lambda\mathcal{L}_{\mathrm{val}}(\theta_B)\bigr)$ for both experiments. We observe that the AdamW--Muon interpolation path maintains a small in-distribution loss barrier throughout, while always traversing a regime of negative barrier on out-of-distribution datasets. This suggests that interpolating between solutions found by different optimizers can produce intermediate models with improved out-of-distribution behavior. From a practical perspective, the interpolation path provides a lightweight mechanism for model selection under distribution shift. We present results for more datasets in \cref{sec::ood_appendix}, where similar advantage is persistent across datasets.

\section{Conclusion and Future Work}
\label{sec:conclusion}
We revisit mode connectivity through the lens of optimizer-induced implicit bias. Theoretically, we prove that same-optimizer connectivity holds for sufficiently wide two-layer networks under three classes of norm regularization, and discuss how two different regularized sets are placed in the loss landscape. Empirically, we show that while models trained with either AdamW or Muon can be connected by low-loss paths using our polychain-plus-alignment algorithm, the interpolation path exhibits a characteristic spectral phase transition, and show that it can improve out-of-distribution generalization. 

Several directions remain open. Extending the theory to deep networks and understanding how the spectral transition interacts with network depth would help bridge our two-layer analysis and practical Transformer training. It would also be interesting to characterize optimizer-reachable sets for a broader class of preconditioned methods~\citep{gupta2018shampoopreconditionedstochastictensor,vyas2025soapimprovingstabilizingshampoo} and understand how their implicit biases interact with landscape geometry. Finally, whether one can exploit the spectral diversity along cross-optimizer paths for more effective model merging~\citep{stoica2024zipitmergingmodelsdifferent,crisostomi2024c2m3cycleconsistentmultimodelmerging} or continual learning is a promising practical direction.

\begin{ack}
This work was supported in part by the National Science Foundation (NSF) CAREER Award under Grant CCF-2236829, in part by the National Institutes of Health under Grant 1R01AG08950901A1, in part by the Office of Naval Research under Grant N00014-24-1-2164, and in part by the Defense Advanced Research Projects Agency under Grant HR00112490441.
\end{ack}


\bibliography{neurips_2026}
\bibliographystyle{icml2026}

\newpage
\appendix

\section{Proofs in \cref{sec:theory}}\label{appendix::theory}
\begin{proposition}\label{prop:exact}
Let $f:\mathbb{R}^p \to \mathbb{R}$ be a nonnegative locally Lipschitz function and let $K$ be a norm with dual norm $K_d$. Consider the Lion-$K$ optimizer with update rule
\[
m_{t+1} = \mu\, m_t + g_t,\quad g_t \in \partial_C f(\theta_t),
\qquad
\theta_{t+1} = \theta_t - \eta_t\bigl(v_{t+1} + \lambda \theta_t\bigr),\quad v_{t+1} \in \partial K(m_{t+1}),
\]
where $\partial_C$ denotes the Clarke subdifferential and $\partial K$ the (set-valued) subdifferential of $K$. Suppose $\eta \le \eta_t < 1/\lambda$ for some $\eta > 0$. Define the limit-point set of Lion-$K$ as
\[
\mathcal{O}_{L_K}(\lambda)
:= \arg\min_{\theta \in \mathbb{R}^p} f(\theta)
\;\cap\;
\Bigl\{ \theta \in \mathbb{R}^p \;\big|\; \theta = \lim_{t\to\infty} \theta_t \text{ for some valid Lion-}K \text{ trajectory} \Bigr\}.
\]
Then $\mathcal{O}_{L_K}(\lambda) = \mathcal{O}_K(\lambda)$, i.e., the set of solutions reachable by Lion-$K$ coincides exactly with $\mathcal{O}_K(\lambda)$.
\end{proposition}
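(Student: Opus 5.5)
We prove the two inclusions separately. Write $\mathcal{O}_K(\lambda)$ for $\arg\min f \cap \{K_d(\theta)\le 1/\lambda\}$, i.e., the regularized set with $R = K_d$ as in Definition~\ref{def:regsolset}. We assume $f$ attains its minimum; since $f\ge 0$, the natural case is $\min f = 0$.

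\textbf{Inclusion $\mathcal{O}_{L_K}(\lambda)\subseteq \mathcal{O}_K(\lambda)$.} We show that every convergent trajectory has a limit $\theta^\ast$ with $K_d(\theta^\ast)\le 1/\lambda$. Membership in $\arg\min f$ is already imposed by the definition of $\mathcal{O}_{L_K}(\lambda)$. The key fact is that every $v\in\partial K(m)$ satisfies $K_d(v)\le 1$, because subgradients of a norm lie in the dual unit ball. Rewrite the update as
\[
\theta_{t+1} = (1-\eta_t\lambda)\theta_t + \eta_t\lambda\,(-v_{t+1}/\lambda),
\]
which is a convex combination, since $0<\eta_t\lambda<1$. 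The ball $\{K_d\le 1/\lambda\}$ is convex, so
\[
K_d(\theta_{t+1}) - 1/\lambda \le (1-\eta_t\lambda)\bigl(K_d(\theta_t)-1/\lambda\bigr).
\]
Applying this inductively, together with $\eta_t\ge\eta$, gives
\[
\max\{K_d(\theta_t)-1/\lambda,0\}\le (1-\eta\lambda)^t \max\{K_d(\theta_0)-1/\lambda,0\}\to 0.
\]
By continuity of $K_d$, the limit satisfies $K_d(\theta^\ast)\le 1/\lambda$.

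\textbf{Inclusion $\mathcal{O}_K(\lambda)\subseteq \mathcal{O}_{L_K}(\lambda)$.} Given $\theta^\ast\in\mathcal{O}_K(\lambda)$, we build a valid trajectory that stays at $\theta^\ast$. Initialize $\theta_0=\theta^\ast$ and $m_0=0$. Because $\theta^\ast$ minimizes the locally Lipschitz function $f$, we have $0\in\partial_C f(\theta^\ast)$, so we may choose $g_t=0$ for all $t$. Then $m_t=0$ for all $t$, and $\partial K(0)$ is the whole dual unit ball $\{v: K_d(v)\le 1\}$. Choose $v_{t+1}=-\lambda\theta^\ast$, which is admissible because $K_d(-\lambda\theta^\ast)=\lambda K_d(\theta^\ast)\le 1$. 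The update then reads
\[
\theta_{t+1}=\theta^\ast-\eta_t(-\lambda\theta^\ast+\lambda\theta^\ast)=\theta^\ast,
\]
so the trajectory is constant and converges to $\theta^\ast$. Any admissible step size, e.g.\ $\eta_t\equiv\eta$, works.

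\textbf{Where the care is needed.} The main step is the second inclusion, and the point to check is that the definition of a ``valid Lion-$K$ trajectory'' allows choosing both the Clarke subgradient and the element of $\partial K(m_{t+1})$, including at $m=0$. The proposition's wording (``for certain subgradient choices'') indicates that this freedom is intended. If the initialization $m_0$ is fixed to $0$ by convention, the construction above already satisfies it. If the initialization $\theta_0$ must be arbitrary rather than chosen, one would instead route the trajectory to $\theta^\ast$ in finitely many steps, but we read the definition as existential over trajectories.
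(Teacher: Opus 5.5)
Your proof is correct and follows the paper's argument essentially step for step. For the forward inclusion, both use the contraction $K_d(\theta_{t+1})-1/\lambda \le (1-\eta_t\lambda)(K_d(\theta_t)-1/\lambda)$, which comes from $K_d(v_{t+1})\le 1$; your $\max\{\cdot,0\}$ device simply merges the paper's two cases. For the reverse inclusion, both use the stationary trajectory with $m_t=0$, $g_t=0\in\partial_C f(\theta^\ast)$ and $v_{t+1}=-\lambda\theta^\ast\in\partial K(0)$. You also correctly read $\mathcal{O}_K(\lambda)$ as the set constrained by $K_d(\theta)\le 1/\lambda$, which is what the paper's proof actually establishes.
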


\begin{proof}
We first prove $\mathcal{O}_{L_K}(\lambda) \subseteq \mathcal{O}_K(\lambda)$. Take $\theta \in \mathcal{O}_{L_K}(\lambda)$, so $\theta = \lim_{t \to \infty} \theta_t$ for some valid Lion-$K$ trajectory and $f(\theta) = 0$.

For any $v \in \partial K(z)$, since $K$ is a norm, the subgradient inequality $K(w) \ge K(z) + \langle v, w - z \rangle$ implies $K(w) \ge \langle v, w \rangle$ for all $w$. Taking the supremum over $\{w : K(w) \le 1\}$ yields $K_d(v) \le 1$.

Applying this to the update,
\begin{align*}
K_d(\theta_{t+1})
&= K_d\!\bigl((1 - \lambda \eta_t) \theta_t - \eta_t v_{t+1}\bigr) \\
&\le (1 - \lambda \eta_t) K_d(\theta_t) + \eta_t K_d(v_{t+1})
\le (1 - \lambda \eta_t) K_d(\theta_t) + \eta_t.
\end{align*}
Subtracting $1/\lambda$,
\[
K_d(\theta_{t+1}) - \tfrac{1}{\lambda}
\le (1 - \lambda \eta_t)\bigl( K_d(\theta_t) - \tfrac{1}{\lambda} \bigr).
\]
If $K_d(\theta_m) \le 1/\lambda$ for some $m$, then since $\eta_t < 1/\lambda$, we have $K_d(\theta_t) \le 1/\lambda$ for all $t \ge m$, hence $K_d(\theta) \le 1/\lambda$. Otherwise $K_d(\theta_t) > 1/\lambda$ for all $t$, and
\[
0 \le K_d(\theta_t) - \tfrac{1}{\lambda} \le (1 - \eta\lambda)^t \bigl(K_d(\theta_0) - \tfrac{1}{\lambda}\bigr) \to 0,
\]
so $K_d(\theta) = 1/\lambda$. In either case $K_d(\theta) \le 1/\lambda$, so $\theta \in \mathcal{O}_K(\lambda)$.

We now prove $\mathcal{O}_K(\lambda) \subseteq \mathcal{O}_{L_K}(\lambda)$. Take $\theta \in \mathcal{O}_K(\lambda)$, so $f(\theta) = 0$ and $K_d(\theta) \le 1/\lambda$. Since $f \ge 0$ is locally Lipschitz, $\theta$ is a global minimizer and the Clarke optimality condition \citep{clarke1990optimization} gives $0 \in \partial_C f(\theta)$.

We construct a Lion-$K$ trajectory that remains fixed at $\theta$. Initialize $\theta_0 = \theta$ and $m_0 = 0$, and choose $g_0 = 0 \in \partial_C f(\theta_0)$. Then $m_1 = \mu m_0 + g_0 = 0$.

Since $K$ is a norm, $\partial K(0) = \{v : K_d(v) \le 1\}$. Because $K_d(-\lambda \theta) = \lambda K_d(\theta) \le 1$, we have $-\lambda \theta \in \partial K(0)$. Choose $v_1 = -\lambda \theta \in \partial K(m_1)$. Then
\[
\theta_1 = \theta_0 - \eta_0 (v_1 + \lambda \theta_0) = \theta - \eta_0 (-\lambda \theta + \lambda \theta) = \theta.
\]
Repeating these choices at every iteration gives $\theta_t = \theta$ and $m_t = 0$ for all $t$, so $\theta = \lim_{t \to \infty} \theta_t$ is a valid Lion-$K$ limit point. Combined with $f(\theta) = 0$, this gives $\theta \in \mathcal{O}_{L_K}(\lambda)$.
\end{proof}

\begin{proposition}
\label{appxprop2:decoupleweights}
Consider the two-layer ReLU objective
\[
    \mathcal L(W,\alpha)
    =
    \mathcal L\big((XW)_+\alpha,y\big),
\]
and suppose the optimizer is run on the joint variable $(W,\alpha)$ with
decoupled weight decay parameter $\lambda>0$ and constant stepsize
$0<\eta<1/\lambda$.

For AdamW, assume $\beta_1\le \beta_2<1$ and consider the updates
\[
m_{k+1}^W=\beta_1m_k^W+(1-\beta_1)\nabla_W\mathcal L(W_k,\alpha_k),
\quad
v_{k+1}^W=\beta_2v_k^W+(1-\beta_2)(\nabla_W\mathcal L(W_k,\alpha_k))^{\odot 2},
\]
\[
m_{k+1}^\alpha=\beta_1m_k^\alpha+(1-\beta_1)\nabla_\alpha\mathcal L(W_k,\alpha_k),
\quad
v_{k+1}^\alpha=\beta_2v_k^\alpha+(1-\beta_2)(\nabla_\alpha\mathcal L(W_k,\alpha_k))^{\odot 2},
\]
\[
W_{k+1}
=
W_k-\eta\left(
\frac{m_{k+1}^W}{\sqrt{v_{k+1}^W}+\epsilon}
+\lambda W_k
\right),
\quad 
\alpha_{k+1}
=
\alpha_k-\eta\left(
\frac{m_{k+1}^\alpha}{\sqrt{v_{k+1}^\alpha}+\epsilon}
+\lambda \alpha_k
\right).
\]
If the iterates converge to a limit $(W,\alpha)$, then
$(W,\alpha)$ is a KKT point of
\[
    \min_{W,\alpha}\ \mathcal L(W,\alpha)
    \quad
    \mathrm{s.t.}
    \quad
    \max\{\|W\|_{\max},\|\alpha\|_\infty\}
    \le \frac1\lambda .
\]
For Lion-$\mathcal K$, consider the blockwise updates
\[
M_{k+1}^W=\mu M_k^W+\nabla_W\mathcal L(W_k,\alpha_k),\quad M_{k+1}^\alpha=\mu M_k^\alpha+\nabla_\alpha\mathcal L(W_k,\alpha_k),
\]
\[V_{k+1}^W\in \partial K(M_{k+1}^W),\quad V_{k+1}^\alpha\in \partial K(M_{k+1}^\alpha),
\]
\[W_{k+1}=W_k-\eta\left(V_{k+1}^W+\lambda W_k\right),\quad \alpha_{k+1}=\alpha_k-\eta\left(V_{k+1}^\alpha+\lambda\alpha_k\right).
\]
Equivalently, this is Lion-$\mathcal K$ applied to the joint variable$(W,\alpha)$ with the block-separable norm\[K_{\mathrm{joint}}(W,\alpha)=K(W)+K(\alpha).\]

If the iterates converge to a limit $(W,\alpha)$ and $K$ is a norm, then $(W,\alpha)$ is a KKT point of
\[    
    \min_{W,\alpha}\ \mathcal L(W,\alpha)    
    \quad    \mathrm{s.t.}    
    \quad    \max\{K_{d}(W),K_{d}(\alpha)\}    \le \frac1\lambda .
\]    
In particular, for Muon, the KKT constraint becomes\[    \max\{\|W\|_{\mathrm{op}},\|\alpha\|_2\}    \le \frac1\lambda .\]
\end{proposition}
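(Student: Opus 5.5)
The plan is to reduce both claims to the known single-variable results, namely \citet{xie2024implicitbiasadamwellinfty} for AdamW and \citet{chen2025lionsecretlysolvesconstrained,chen2025muonoptimizesspectralnorm} for Lion-$\mathcal K$. To do this, we write the joint iteration as one iteration on the stacked variable $\theta=(W,\alpha)$ and identify the norm whose dual is the constraint. Under this identification, the KKT conclusion is the standard statement: there are multipliers $\mu\ge 0$ and a vector $g\in\partial\Phi(\theta)$, with $\Phi$ the constraint function, such that $\nabla \mathcal L(\theta)+\mu g=0$, feasibility holds, and $\mu(\Phi(\theta)-1/\lambda)=0$. Equivalently, $-\nabla\mathcal L(\theta)$ lies in the normal cone of the constraint ball at $\theta$.

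\textbf{AdamW.} The AdamW update is coordinatewise: $m,v$ and the division act entrywise, and $W$ and $\alpha$ are just disjoint blocks of coordinates. So the blockwise iteration is literally AdamW on $\theta\in\mathbb R^{dm+m}$. The constraint $\max\{\|W\|_{\max},\|\alpha\|_\infty\}\le 1/\lambda$ is exactly $\|\theta\|_\infty\le 1/\lambda$.

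Taking the limit in the update, if $\theta_k\to\theta$, then the gradients converge, so $m_k\to\nabla\mathcal L(\theta)$ and $v_k\to(\nabla\mathcal L(\theta))^{\odot2}$. The fixed-point relation then gives
\[
\lambda\theta_j=-\frac{g_j}{|g_j|+\epsilon}, \qquad g=\nabla\mathcal L(\theta).
\]
Hence $|\theta_j|<1/\lambda$ always. Moreover, $g_j\neq 0$ forces $\operatorname{sign}(\theta_j)=-\operatorname{sign}(g_j)$. This is the KKT condition for the box, with multiplier $\mu_j=|g_j|+\epsilon$ up to the $\epsilon$-approximation. The $\epsilon\to0$ subtlety is handled as in \citet{xie2024implicitbiasadamwellinfty}, so we simply invoke their theorem on $\theta$. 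The condition $\beta_1\le\beta_2$ is only their hypothesis.

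\textbf{Lion-$\mathcal K$.} The blockwise momenta stack to $M=(M^W,M^\alpha)$. For $K_{\rm joint}(W,\alpha)=K(W)+K(\alpha)$, the subdifferential is separable: $\partial K_{\rm joint}(M)=\partial K(M^W)\times\partial K(M^\alpha)$. This holds because a sum of convex functions of disjoint variables has the product subdifferential. So the blockwise update is exactly Lion-$K_{\rm joint}$ on $\theta$.

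Next we compute the dual norm:
\[
(K_{\rm joint})_d(Z^W,Z^\alpha)=\sup_{K(W)+K(\alpha)\le1}\langle Z^W,W\rangle+\langle Z^\alpha,\alpha\rangle=\max\{K_d(Z^W),K_d(Z^\alpha)\}.
\]
The upper bound follows by Hölder: the sum is at most $K(W)K_d(Z^W)+K(\alpha)K_d(Z^\alpha)$. The lower bound follows by putting all mass on one block. Applying the Lion-$\mathcal K$ theorem of \citet{chen2025lionsecretlysolvesconstrained} with this norm yields the KKT statement.

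If one wants the argument self-contained, the fixed point gives $\lambda\theta=-v$ with $v\in\partial K_{\rm joint}(M^\infty)$ and $M^\infty=\nabla\mathcal L(\theta)/(1-\mu)$. Feasibility then follows from $(K_{\rm joint})_d(v)\le1$. The normal-cone condition follows from $\langle v,M^\infty\rangle=K_{\rm joint}(M^\infty)=\sup_{(K_{\rm joint})_d(z)\le1}\langle z,M^\infty\rangle$. This says exactly that $-\theta$ maximizes $\langle\cdot,\nabla\mathcal L\rangle$ over the ball, so $\theta$ minimizes the linearization.

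\textbf{Muon.} Take $K=\|\cdot\|_*$. On the matrix block $K_d=\|\cdot\|_{\rm op}$. For the vector $\alpha$, viewed as an $m\times1$ matrix, the nuclear norm is $\|\alpha\|_2$, which is self-dual. So the constraint becomes $\max\{\|W\|_{\rm op},\|\alpha\|_2\}\le1/\lambda$.

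The main obstacle is the AdamW $\epsilon$ issue and making precise what "KKT point" means in the presence of $\epsilon>0$. I would defer to the cited theorem's exact formulation rather than reprove it. The Lion part is essentially the dual-norm computation.
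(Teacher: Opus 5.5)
Your proposal follows the paper's proof essentially step for step. You stack $(W,\alpha)$ into one variable, invoke the AdamW theorem and the Lion-$\mathcal K$ characterization for the block-separable norm $K(W)+K(\alpha)$, compute its dual as $\max\{K_d(W),K_d(\alpha)\}$ via H\"older plus concentrating on one block, and specialize to $K=\|\cdot\|_*$ for Muon; your self-contained Lion fixed-point argument is a correct addition. One caution on your AdamW aside: your own computation $\lambda\theta_j=-g_j/(|g_j|+\epsilon)$ shows that for $\epsilon>0$ a convergent constant-step limit is strictly interior, so it is not an exact KKT point unless $\nabla\mathcal L(\theta)=0$ (strict feasibility forces a zero multiplier, so $|g_j|+\epsilon$ is not a valid multiplier); the exact conclusion therefore needs $\epsilon=0$, a point the paper's statement also glosses over.
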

\begin{proof}
    For AdamW, vectorize the joint parameter as
\[
    \theta=(\operatorname{vec}(W),\alpha).
\]
The displayed blockwise AdamW updates are exactly AdamW applied to the
single vector $\theta$, with decoupled weight decay on all coordinates. By
\citet[Theorem~1.1]{xie2024implicitbiasadamwellinfty}, under
$\beta_1\le \beta_2<1$ and $0<\eta<1/\lambda$, any convergent limit point
of AdamW is a KKT point of the constrained problem
\[
    \min_\theta\ \mathcal L(\theta)
    \quad
    \mathrm{s.t.}
    \quad
    \|\theta\|_\infty\le \frac1\lambda .
\]
Rewriting the constraint in terms of $(W,\alpha)$ gives
\[
    \|\theta\|_\infty
    =
    \max\{\|\operatorname{vec}(W)\|_\infty,\|\alpha\|_\infty\}
    =
    \max\{\|W\|_{\max},\|\alpha\|_\infty\}.
\]
Thus $(W,\alpha)$ is a KKT point of
\[
    \min_{W,\alpha}\ \mathcal L(W,\alpha)
    \quad
    \mathrm{s.t.}
    \quad
    \max\{\|W\|_{\max},\|\alpha\|_\infty\}
    \le \frac1\lambda .
\]
For Lion-$\mathcal K$, the blockwise update can be viewed as applying
Lion-$\mathcal K$ to the joint variable $(W,\alpha)$ with the norm
\[
    K_{\mathrm{joint}}(W,\alpha)=K(W)+K(\alpha).
\]
Indeed,
\[
    \partial K_{\mathrm{joint}}(M^W,M^\alpha)
    =
    \partial K(M^W)\times \partial K(M^\alpha),
\]
so the Lion-$\mathcal K$ update on $(W,\alpha)$ with
$K_{\mathrm{joint}}$ is exactly the displayed pair of blockwise updates.

By the implicit-bias characterization of Lion-$\mathcal K$ optimizers
from \citep{chen2025muonoptimizesspectralnorm}, any convergent limit point is a KKT point of
\[
    \min_{W,\alpha}\ \mathcal L(W,\alpha)
    \qquad
    \mathrm{s.t.}
    \qquad
    K_{\mathrm{joint},d}(W,\alpha)\le \frac1\lambda ,
\]
where $K_{\mathrm{joint},d}$ is the dual norm of $K_{\mathrm{joint}}$.

It remains to compute this dual norm. By definition,
\[
\begin{aligned}
K_{\mathrm{joint},d}(U,\beta)
&=
\sup_{K_{\mathrm{joint}}(W,\alpha)\le 1}
\langle U,W\rangle+\langle \beta,\alpha\rangle \\
&=
\sup_{K(W)+K(\alpha)\le 1}
\langle U,W\rangle+\langle \beta,\alpha\rangle .
\end{aligned}
\]
Using the definition of the dual norm,
\[
    \langle U,W\rangle\le K_d(U)K(W),
    \qquad
    \langle \beta,\alpha\rangle\le K_d(\beta)K(\alpha).
\]
Therefore,
\[
\begin{aligned}
K_{\mathrm{joint},d}(U,\beta)
&\le
\sup_{K(W)+K(\alpha)\le 1}
K_d(U)K(W)+K_d(\beta)K(\alpha) \\
&=
\max\{K_d(U),K_d(\beta)\}.
\end{aligned}
\]
The reverse inequality follows by setting either $\alpha=0$ or $W=0$ and
choosing a dual-norm maximizer for the larger block. Hence
\[
    K_{\mathrm{joint},d}(U,\beta)
    =
    \max\{K_d(U),K_d(\beta)\}.
\]
Applying this with $(U,\beta)=(W,\alpha)$ gives the constraint
\[
    \max\{K_d(W),K_d(\alpha)\}\le \frac1\lambda .
\]
Thus $(W,\alpha)$ is a KKT point of the claimed constrained problem.

Finally, for Muon, taking
\[
    K(W)=\|W\|_*,
    \quad
    K(\alpha)=\|\alpha\|_2
\]
gives
\[
    K_d(W)=\|W\|_{\mathrm{op}},
    \quad
    K_d(\alpha)=\|\alpha\|_2.
\]
Therefore the corresponding KKT constraint is
\[
    \max\{\|W\|_{\mathrm{op}},\|\alpha\|_2\}
    \le \frac1\lambda .
\]
This proves the proposition.

\end{proof}
\begin{lemma}
\label{AppxLemma1:smaller} (\cref{Lemma1:smaller} of the paper)
Let $m \ge m^*$ and $R: \|\cdot\|_F, \|\cdot\|_{\max}, \|\cdot\|_{\op}$. Let $S \subseteq \mathcal{O}_R(\lambda)$ satisfies $(W, \alpha) \in S \implies \sum_{i=1}^m \mathbf{1}(W_i \neq 0) \le m^*/2$. For all $(W, \alpha) \in S$ and any two points $A \ne B \in S$, there exists a continuous path in $\mathcal{O}_R(\lambda)$ that connects $A$ and $B$.
\end{lemma}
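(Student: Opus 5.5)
The plan is to exploit the spare width: at most $m^*/2$ columns are active at each endpoint and $m \ge m^*$, so the supports of $A$ and $B$ can be made disjoint. After that, the two networks can be blended by a positively homogeneous rescaling. Both stages use only paths along which $(XW)_+\alpha = y$ holds exactly and which do not increase the relevant quantities $R(W)$ and $R(\alpha)$. Here $R(\alpha)$ is the $\ell_\infty$ norm for $\|\cdot\|_{\max}$ and the $\ell_2$ norm for $\|\cdot\|_F,\|\cdot\|_{\op}$, as in \cref{appxprop2:decoupleweights}. A single construction should therefore handle all three choices of $R$.

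\emph{Step 0 (cleanup).} For each endpoint, along every index $i$ with $W_i = 0$ but $\alpha_i \neq 0$, linearly shrink $\alpha_i$ to $0$. The output is unchanged because $(X\cdot 0)_+ = 0$, and every norm of $\alpha$ is monotone in $|\alpha_i|$. Afterwards the support $\mathrm{supp}(A) := \{i : W_i \neq 0 \text{ or } \alpha_i \ne 0\}$ equals $\{i: W_i \neq 0\}$ and has size at most $m^*/2$; the same holds for $B$.

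\emph{Step 1 (relocating a neuron by a rotation).} The key move transfers neuron $i$ into an empty slot $j$, that is, one with $W_j = 0$ and $\alpha_j = 0$. For $\theta \in [0,\pi/2]$ I will set
\[
W_i(\theta) = \cos\theta\, W_i,\quad W_j(\theta) = \sin\theta\, W_i,\quad \alpha_i(\theta) = \cos\theta\,\alpha_i,\quad \alpha_j(\theta)=\sin\theta\,\alpha_i .
\]
By positive homogeneity of ReLU, the combined contribution of the two neurons is $(XW_i)_+\alpha_i(\cos^2\theta+\sin^2\theta)$, which is constant, so the path stays in the zero-loss set. The Gram matrix $W(\theta)W(\theta)^\top = WW^\top$ is unchanged, so $\|W\|_F$ and $\|W\|_{\op}$ are preserved. $\|\alpha\|_2$ is also preserved. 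Since $|\cos\theta|,|\sin\theta|\le 1$, the entrywise maxima of $W$ and $\alpha$ do not increase. Hence the path stays in $\mathcal{O}_R(\lambda)$ for all three choices of $R$. I will apply this to $B$ sequentially. Since $m \ge m^*$ and $|\mathrm{supp}(A)| \le m^*/2$, there are at least $m^*/2 \ge |\mathrm{supp}(B)|$ indices outside $\mathrm{supp}(A)$. Each neuron of $B$ lying in $\mathrm{supp}(A)$ is moved, one at a time, to an index outside $\mathrm{supp}(A)\cup \mathrm{supp}(B_{\text{current}})$; the count above guarantees such an index exists at each step. The result is a point $B'$ with $\mathrm{supp}(A)\cap\mathrm{supp}(B')=\emptyset$, connected to $B$ within $\mathcal{O}_R(\lambda)$.

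\emph{Step 2 (blending disjoint supports).} For $t\in[0,1]$, define $(W(t),\alpha(t))$ by scaling both layers of $A$'s neurons by $\sqrt{1-t}$ and both layers of $B'$'s neurons by $\sqrt{t}$; the supports are disjoint, so there is no overlap. The output is $(1-t)y + t y = y$. For the norms, $W(t)W(t)^\top = (1-t)W^AW^{A\top} + tW^{B'}W^{B'\top}$ has operator norm and trace bounded by the corresponding convex combinations, both at most $1/\lambda^2$. Likewise $\|\alpha(t)\|_2^2$ is a convex combination of the endpoint values. Entrywise, every weight is a scaled-down copy of an endpoint weight, so the max norm is controlled. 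Concatenating the Step 0 paths, the relocation path for $B$, and this blending path connects $A$ to $B$ in $\mathcal{O}_R(\lambda)$.

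I expect the main obstacle to be Step 1. Permutation symmetry is not itself a continuous path, and the naive ``copy, then transfer, then delete'' move temporarily doubles a column. That doubling breaks the Frobenius and operator-norm budgets when the constraint is tight. The cosine/sine split is what fixes this, because it keeps $WW^\top$ and $\|\alpha\|_2$ invariant. The remaining work is to verify the counting that guarantees enough free slots at each sequential move, which is where the hypothesis $m\ge m^*$ enters.
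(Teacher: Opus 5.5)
Your proof is correct and follows essentially the paper's route. Your $\cos\theta/\sin\theta$ relocation is the paper's $\sqrt{1-t},\sqrt{t}$ swap into an empty slot, and you then blend the disjointly supported networks with the same $\sqrt{1-t}/\sqrt{t}$ scaling, using the same Gram-matrix argument for $\|\cdot\|_F,\|\cdot\|_{\op}$. Your version is slightly leaner, since you relocate only $B$'s overlapping neurons rather than realizing full permutations of both endpoints. Your Step 0 cleanup also makes explicit something the paper uses only implicitly: target slots must satisfy $\alpha_j=0$ as well as $W_j=0$.
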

\begin{proof}
First we show that any two permutations of a solution in $S$ is connected by a continuous path in $\mathcal{O}_R(\lambda)$. For $(W, \alpha) \in S$, we know the existence of a zero slot where $W_z = 0$ because $m \geq m^{*} > m^{*}/2$. We show two things:
\begin{enumerate}
    \item Assume $W_i \ne 0$ and $W_j = 0$. We have a path from $(W, \alpha)$ to $(\tilde{W}, \tilde{\alpha})$, where $\tilde{W}$ and $\tilde{\alpha}$ has everything identical except the $i, j$-th coordinates are swapped. We do this by transformation:
    \[ 
    W_k(t) = \begin{cases} 
    W_k \text{ for } k \ne i, j \\[4pt] 
    \sqrt{1-t}\, W_i \text{ for } k = i \\[4pt] 
    \sqrt{t}\, W_i \text{ for } k = j \end{cases} 
    \alpha_k(t) = \begin{cases} \alpha_k \text{ for } k \ne i, j \\[4pt]
    \sqrt{1-t}\, \alpha_i \text{ for } k = i \\[4pt] 
    \sqrt{t}\, \alpha_i \text{ for } k = j \end{cases} \text{ for } t \in [0, 1].
    \]
    It is clear that $(XW(t))_+ \alpha(t) = y$ for $t \in [0, 1]$, $W(0)=W, \alpha(0)=\alpha, W(1)=\tilde{W}$ and $\alpha(1)=\tilde{\alpha}$. The regularization constraint is satisfied as $\|W(t)\|_F$ and $\|\alpha(t)\|_2$ are preserved during the transformation, as well as the Gram matrix
    \[
    \sum_{i=1}^{m} W_i(t) W_i(t)^{T}.
    \]
    For elementwise $\ell_\infty$ norm, certain columns are scaled by less than 1, so $\|W(t)\|_{\max} \le \|W\|_{\max}$ and $\|\alpha(t)\|_\infty \le \|\alpha\|_\infty$.
    \item Assume $W_i \ne 0$ and $W_j \ne 0$. We have a path where weights are swapped. We use $W_z = 0$. First swap $(i, z)$. Then $(i, j)$. Then $(z, j)$. (We are swapping a zero and nonzero at each swap).
\end{enumerate}
Hence you can connect any permutations that are nonzero or one of them is nonzero, and when you use operation (1) to match the support of the permutation that you want to move to, then use operation (2) to match the exact permutation, you have moved from one permutation to another.

Now take $A \ne B \in S$. Let's connect $A$ with $\pi(A)$, $B$ with $\pi(B)$, where $\pi(A)$ has nonzero weights for $W_1^A, \dots, W_{m^*/2}^A$ and $B$ has nonzero weights for $W_{m^*/2+1}^B, \dots, W_{m^*}^B$ so that the indices where $\pi(A)$ has nonzero first-layer weights and the indices where $\pi(B)$ has nonzero first-layer weights are disjoint. Let's denote the first and second layers of $\pi(A)$, $\pi(B)$ as $W^{A}, \alpha^{A}, W^{B}, \alpha^{B}$. We connect $\pi(A)$ and $\pi(B)$ with $W(t)$:
\[ 
W_k(t) = \begin{cases} 
\sqrt{1-t}\, W_k^A \text{ for } k \le m^*/2 \\[4pt]
\sqrt{t}\, W_k^B \text{ for } m^*/2 < k \le m^* \\[4pt]
0 \text{ otherwise} \end{cases} 
\alpha_k(t) = \begin{cases} \sqrt{1-t}\, \alpha_k^A \text{ for } k \le m^*/2 \\[4pt] 
\sqrt{t}\, \alpha_k^B \text{ for } m^*/2 < k \le m^* \\[4pt]
0 \text{ otherwise} \end{cases},
\]
for $t \in [0,1]$. We know that $(XW(t))_+ \alpha(t) = (1-t)(XW^A)_+ \alpha^A + t(XW^B)_+ \alpha^B = y$.
Now we check the norm condition for three norms. We know that Frobenius norm $\|W(t)\|_F \le 1/\lambda$ because the Frobenius norm becomes 
\[
\| W(t) \|_F^2 = \sum_{k=1}^{m^{*}/2} (1-t)\| W_k^{A}\|_2^2 + \sum_{k = m^{*}/2+1}^{m^{*}} t \|W_k^{B} \|_2^2 = (1-t) \| W^A\|_F^2 + t\|W^B\|_F^2 \leq \frac{1}{\lambda^2},
\]
and same holds for $\alpha(t)$. 
For operator norm we know that the Gram matrix becomes
\[
W(t)W(t)^{T} = \sum_{k=1}^{m^{*}/2} (1-t) W_k^A(W_k^A)^{T} + \sum_{k=m^{*}/2+1}^{m^{*}} t W_k^{B}(W_k^{B})^{T} = (1-t)W^{A}(W^{A})^{T} + tW^{B}(W^{B})^{T},
\]
and $\lambda_{\max}(W(t)W(t)^{T}) \leq (1-t) \lambda_{\max}(W^A(W^A)^{T}) + t\lambda_{\max}(W^B(W^B)^{T})$ because $W(t)W(t)^{T}$, $W^A(W^A)^{T}$, $W^B(W^B)^T$ are all Hermitian. Hence 
therefore $\|W(t)\|_{\op}^2 \le (1-t)\|W^A\|_{\op}^2 + t\|W^B\|_{\op}^2 \leq 1/\lambda^2$. For $\| \cdot \|_{\max}$, the weights scale with nonnegative number no larger than 1, so the weight constraint is satisfied. At last, we know $W(0) = W^{A}, \alpha(0) = \alpha^{A}, W(1) = W^{B}, \alpha(1) = \alpha^{B}$.

Hence we know that we can connect $A$ with $\pi(A)$, $B$ with $\pi(B)$, and $\pi(A)$ with $\pi(B)$ for any two $A \ne B \in S$ with a continuous curve in $\mathcal{O}_R(\lambda)$, given $R: \|\cdot\|_F, \|\cdot\|_{\max}, \|\cdot\|_{\op}$.
\end{proof}

\begin{proposition}
\label{Appxprop3:reduction} (\cref{prop3:reduction} of the paper)
Let $R: \|\cdot\|_F$ or $\|\cdot\|_{\op}$. Then $\forall A \in \mathcal{O}_R(\lambda)$, there exists a continuous path from $A$ to $\mathcal{O}_R^{\nm}(\lambda)$ in $\mathcal{O}_R(\lambda)$.
\end{proposition}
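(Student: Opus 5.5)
The plan is to reach $\mathcal{O}_R^{\nm}(\lambda)$ from $A=(W,\alpha)\in\mathcal{O}_R(\lambda)$ by a finite chain of elementary continuous moves. Each move keeps $(XW)_+\alpha=y$ and does not increase $\|\alpha\|_2$, and the Gram matrix $WW^T=\sum_k W_kW_k^T$ never increases in Loewner order. This last property controls both norms at once: $\|W\|_{\op}^2=\lambda_{\max}(WW^T)$ and $\|W\|_F^2=\operatorname{tr}(WW^T)$ are monotone in Loewner order, so the constraint $\max\{R(W),\|\alpha\|_2\}\le 1/\lambda$ is preserved for both $R=\|\cdot\|_F$ and $R=\|\cdot\|_{\op}$. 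There are two types of moves: removing inactive neurons, and merging two mergeable neurons. A merge strictly decreases the number of active neurons, so after finitely many moves no further move applies, and by definition we are then in $\mathcal{O}_R^{\nm}(\lambda)$.

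\emph{Step 1 (removing inactive neurons).} Suppose $W_i\alpha_i=0$, which is read columnwise as $\alpha_iW_i=0$, so $W_i=0$ or $\alpha_i=0$. Then neuron $i$ contributes nothing to the output. Scale it as $(sW_i,s\alpha_i)$ with $s$ going from $1$ to $0$. The output stays fixed because the contribution is $s^2(XW_i)_+\alpha_i=0$ for all $s$. The Gram matrix changes by $-(1-s^2)W_iW_i^T\preceq 0$ and $\|\alpha\|_2$ does not increase. I apply this first to every such neuron, and again after each merge, since a merge creates one.

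\emph{Step 2 (merging two neurons).} Take active $i\neq j$ with common pattern $D=\mathbf{1}(XW_i\ge0)=\mathbf{1}(XW_j\ge0)$ and common sign of $\alpha$. By the symmetry $\alpha\mapsto-\alpha$ I assume $\alpha_i,\alpha_j>0$. The set $\{h:\mathbf{1}(Xh\ge0)=D\}$ is a convex cone, since it is defined by finitely many strict or non-strict linear inequalities. Hence every nonnegative, nonzero combination of $W_i,W_j$ has pattern $D$. On that cone, the combined contribution of the two neurons is $DX(\alpha_iW_i+\alpha_jW_j)$. I will use only the two slots $i,j$. Let $s$ decrease from $1$ to $0$ and put
\[
c(s)^2=\alpha_i^2+(1-s^2)\alpha_j^2 ,
\]
\[
W_i(s)=\frac{\alpha_iW_i+(1-s^2)\alpha_jW_j}{c(s)},\qquad \alpha_i(s)=c(s),
\]
\[
W_j(s)=sW_j,\qquad \alpha_j(s)=s\alpha_j .
\]
All four quantities are continuous in $s$, and $c(s)\ge\alpha_i>0$. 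Since $W_i(s)$ is a nonzero nonnegative combination of $W_i,W_j$, the output is
\[
DX\bigl(\alpha_iW_i+(1-s^2)\alpha_jW_j\bigr)+s^2DX\alpha_jW_j=DX(\alpha_iW_i+\alpha_jW_j),
\]
which is unchanged. The second-layer norm is also unchanged, because $c(s)^2+s^2\alpha_j^2=\alpha_i^2+\alpha_j^2$. At $s=1$ we recover the original pair. At $s=0$ the merged neuron sits in slot $i$, and slot $j$ is zero.

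\emph{Main obstacle: the Gram matrix bound.} The key check, and the step I expect to be the real obstacle, is that the Gram matrix does not increase along this path. Write $[W_i(s)\ W_j(s)]=[W_i\ W_j]M(s)$ with
\[
M(s)=\begin{pmatrix}\alpha_i/c & 0\\ (1-s^2)\alpha_j/c & s\end{pmatrix}.
\]
It suffices to show $M(s)M(s)^T\preceq I$, because then $[W_i(s)\ W_j(s)][W_i(s)\ W_j(s)]^T\preceq W_iW_i^T+W_jW_j^T$. For $x\in\mathbb{R}^2$,
\[
x^TMM^Tx=\frac{\bigl(\alpha_ix_1+\sqrt{1-s^2}\,\alpha_j\cdot\sqrt{1-s^2}\,x_2\bigr)^2}{c^2}+s^2x_2^2 .
\]
By Cauchy--Schwarz the numerator is at most $c^2\bigl(x_1^2+(1-s^2)x_2^2\bigr)$. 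Therefore
\[
x^TMM^Tx\le x_1^2+(1-s^2)x_2^2+s^2x_2^2=\|x\|_2^2 ,
\]
which gives $MM^T\preceq I$ and closes the argument.

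Concatenating these finitely many paths gives the required continuous path in $\mathcal{O}_R(\lambda)$ from $A$ to a point of $\mathcal{O}_R^{\nm}(\lambda)$.
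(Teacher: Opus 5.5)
Your proof is correct and follows essentially the same route as the paper. It uses the same shrink-to-zero removal of inactive neurons, the same two-neuron merging curve (the paper's curve under the reparametrization $t=1-s^2$ with the roles of $i$ and $j$ swapped), and the same Cauchy--Schwarz bound on the Gram matrix. Handling both norms through the single Loewner-order inequality $[W_i(s)\ W_j(s)][W_i(s)\ W_j(s)]^T\preceq W_iW_i^T+W_jW_j^T$ is only a slight streamlining of the paper's separate Frobenius and operator-norm checks.
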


\begin{proof}
If $A \in \mathcal{O}_R^{\nm}(\lambda)$ we do not have to do anything. If $A \notin \mathcal{O}_R^{\nm}(\lambda)$, we repeat the merging process until we cannot merge anymore. If we can't merge, we shrink $\alpha_i$ to 0 if $W_i = 0$, and $W_i$ to 0 if $\alpha_i = 0$. Either case $(XW)_+ \alpha$ is preserved, and the constraint is satisfied. That is because we can think of the shrinking process as $W(t) = WD(t)$ where $D(t)$ is a diagonal matrix that has $(1-t)$ on the entry you want to shrink to 0, and $\|D(t)\|_{\op} \leq 1$. Hence we can make $W_i \alpha_i = 0 \Rightarrow W_i = 0$ and $\alpha_i = 0$.

Now we will illustrate how merging is possible. Assume $W_i \alpha_i \ne 0, W_j \alpha_j \ne 0, \mathbf{1}(XW_i \ge 0) = \mathbf{1}(XW_j \ge 0), \mathbf{1}(\alpha_i \ge 0) = \mathbf{1}(\alpha_j \ge 0)$. We think of the curve
\[ 
    W_k(t) = \begin{cases} 
    W_k \text{ for } k \ne i, j \\[4pt] 
    \sqrt{1-t}\, W_i \text{ for } k = i \\[4pt] 
    \dfrac{tW_i|\alpha_i|+W_j|\alpha_j|}{\sqrt{\alpha_j^2 + t\alpha_i^2}}\text{ for } k = j \end{cases} 
    \alpha_k(t) = \begin{cases} \alpha_k \text{ for } k \ne i, j \\[4pt]
    \sqrt{1-t}\, \alpha_i \text{ for } k = i \\[4pt] 
    \sqrt{\alpha_j^2 + t\alpha_i^2}\, \textrm{sign}(\alpha_i) \text{ for } k = j \end{cases} \text{ for } t \in [0, 1].
\]

For $W_i(t), W_j(t)$ we have
\begin{align*}
(XW_i(t))_{+}\alpha_i(t) + (XW_j(t))_{+}\alpha_j(t) &= (1-t)(XW_i)_{+}\alpha_i + t(XW_i)_{+}\alpha_i + (XW_j)_{+}\alpha_j\\
&= (XW_i)_{+}\alpha_i + (XW_j)_{+}\alpha_j,
\end{align*}
hence $(XW(t))_{+}\alpha(t) = y$ is preserved as a constant. 

Now we deal with the regularization. For the Frobenius norm, we know that
\begin{align*}
\|W(t)\|_F^2 = \sum_{k=1}^{m} \|W_k(t)\|_2^2 
&= \sum_{k\neq i,j} \|W_k(t)\|_2^2 + (1-t)\|W_i\|_2^2 + \left\|\frac{tW_i|\alpha_i|+W_j|\alpha_j|}{\sqrt{\alpha_j^2 + t\alpha_i^2}}\right\|_2^2 \\
&\leq \sum_{k \neq i,j} \|W_k(t)\|_2^2 + (1-t)\|W_i\|_2^2 + t\|W_i\|_2^2 + \|W_j\|_2^2\\
&= \|W\|_F^2.
\end{align*}
The inequality follows from
\[
\|aX+bY\|_2^2 \leq (|a|\|X\|_2 + |b|\|Y\|_2)^2 \leq (a^2+b^2)(\|X\|_2^2+\|Y\|_2^2),
\]
and letting $a = \sqrt{t}|\alpha_i|/\sqrt{t\alpha_i^2+\alpha_j^2}, b = |\alpha_j|/\sqrt{t\alpha_i^2+\alpha_j^2}$, $X = \sqrt{t}W_i$, $Y = W_j$. Also $\|\alpha(t)\|_2$ is preserved as a constant.

For the operator norm, we show that for any $x \in \mathbb{R}^{d}$, $x^{T}W(t)W(t)^{T}x \leq x^{T}WW^{T}x$ holds. That is because 
\begin{align*}
x^{T}W(t)W(t)^{T}x &= \sum_{k \neq i,j}x^{T}W_kW_k^{T}x + (1-t)(x^{T}W_i)^2 + \frac{(tx^{T}W_i|\alpha_i| + x^{T}W_j|\alpha_j|)^2}{t\alpha_i^2 + \alpha_j^2}\\
&\leq \sum_{k \neq i,j}x^{T}W_kW_k^{T}x + (1-t)(x^{T}W_i)^2 + (t(x^{T}W_i)^2 + (x^{T}W_j)^2)\\
&= \sum_{k=1}^{m} x^{T}W_kW_k^{T}x = x^{T}WW^{T}x.
\end{align*}
The inequality follows from Cauchy-Schwartz. The inequality implies $\lambda_{\max} (W(t)W(t)^{T}) \leq \lambda_{\max}(WW^{T})$ and $\| W(t)\|_{\op} \leq \|W\|_{\op} \leq 1/\lambda$.

Hence we arrive at a point where the two weights $W_i$ and $W_j$ have been merged. Repeat until we cannot merge (process terminates as $m$ is finite). Then we use the process in the beginning to reduce $W_i\alpha_i = 0$ but $W_i \neq 0$ or $\alpha_i \neq 0$.
\end{proof}

\begin{theorem}
\label{t1:intramode_rot} (\cref{t1:intramode_rot_31} of the paper)
Intra-mode connectivity holds for Normalized-GD with momentum and Muon when $m \ge 4P$. Here $P$ is the number of possible hyperplane arrangement patterns $\mathbf{1}(Xh \ge 0)$.
\end{theorem}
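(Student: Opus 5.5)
The plan is to combine \cref{Appxprop3:reduction} with \cref{AppxLemma1:smaller}, taking $S=\mathcal{O}_R^{\nm}(\lambda)$ and $m^*=4P$, for $R=\|\cdot\|_F$ (Normalized GD with momentum) and $R=\|\cdot\|_{\op}$ (Muon). One point needs checking first. By \cref{appxprop2:decoupleweights}, the relevant set is \cref{eq:twolayerReg}, where the second layer carries the constraint $\|\alpha\|_2\le 1/\lambda$. In both cases, every path built in the earlier lemmas must therefore keep $\|\alpha(t)\|_2\le 1/\lambda$ as well as the constraint on $W$.

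\textbf{Reduction step.} Fix any two points $A,B\in\mathcal{O}_R(\lambda)$. By \cref{Appxprop3:reduction}, there are continuous paths inside $\mathcal{O}_R(\lambda)$ from $A$ to some $A'\in\mathcal{O}_R^{\nm}(\lambda)$ and from $B$ to some $B'\in\mathcal{O}_R^{\nm}(\lambda)$. Along the merging curve, $\|\alpha(t)\|_2$ is constant. Along the shrinking steps it only decreases, so the $\alpha$-constraint is preserved.

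\textbf{Counting step.} Next I will check the sparsity hypothesis of \cref{AppxLemma1:smaller}. Take a point of $\mathcal{O}_R^{\nm}(\lambda)$. Because inactive neurons are removed, every nonzero column $W_i$ has $\alpha_i\ne 0$. Such a neuron is labelled by the pair $(\mathbf{1}(XW_i\ge 0),\mathbf{1}(\alpha_i\ge 0))$, which takes at most $2P$ values. Non-mergeability makes this labelling injective on the active neurons. Hence $\sum_i \mathbf{1}(W_i\ne 0)\le 2P=m^*/2$. Since $m\ge 4P=m^*$, \cref{AppxLemma1:smaller} applies and gives a path in $\mathcal{O}_R(\lambda)$ from $A'$ to $B'$. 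Its interpolation preserves $\|\alpha\|_2^2$ as a convex combination, so the $\alpha$-constraint holds there too.

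\textbf{Conclusion and expected obstacle.} Concatenating the three paths $A\to A'\to B'\to B$ gives a path in $\mathcal{O}_R(\lambda)=\mathcal{O}^{\mathcal{A}}(\lambda)$, which proves connectedness. One degenerate case needs separate care: if $\mathcal{O}_R(\lambda)$ is empty, the statement holds trivially. The main point requiring attention is the counting step: the bound $2P$ relies on ``inactive neurons removed'' to exclude columns with $W_i\ne 0$ but $\alpha_i=0$. A second point is to confirm that the sign convention $\mathbf{1}(\alpha_i\ge0)$ splits active neurons into exactly two classes, which it does because $\alpha_i\ne0$ for them.
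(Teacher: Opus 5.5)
Your proposal is correct and follows the paper's proof essentially verbatim: you reduce each endpoint to $\mathcal{O}_R^{\nm}(\lambda)$ via the merging proposition, then bound the number of nonzero columns by $2P=m^*/2$ using non-mergeability together with removal of inactive neurons, and finally connect the reduced points with the permutation-and-interpolation lemma at $m^*=4P$. Your explicit checks that $\|\alpha(t)\|_2\le 1/\lambda$ holds along the merging, shrinking, and interpolation paths match what the paper verifies inside those lemmas, so nothing is missing.
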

\begin{proof}
Suppose $m \ge 4P$ and $R: \|\cdot\|_F, \|\cdot\|_{\op}$. First, from \cref{AppxLemma1:smaller}, when we set $S = \mathcal{O}_R^{\nm}(\lambda)$, we know that for $(W, \alpha) \in S, \sum \mathbf{1}(W_i \ne 0) \le 2P$ and hence for two points $A, B \in S$, $\exists$ path in $\mathcal{O}_R(\lambda)$ that connects these two. Moreover, we know that given two points $A, B \in \mathcal{O}_R(\lambda)$, there exists $A', B' \in \mathcal{O}_R^{\nm}(\lambda)$ such that $A - A'$ and $B - B'$ are connected in $\mathcal{O}_R(\lambda)$. Connect $A'$ and $B'$ to find a path that connects $A$ and $B$ in $\mathcal{O}_R(\lambda)$. Hence $\mathcal{O}_R(\lambda)$ is connected for $R: \|\cdot\|_F, \|\cdot\|_{\op}$ and by definition, Normalized-GD with momentum and Muon is intra-mode connected.
\end{proof}

\begin{proposition}
\label{Prop5:Dickson}
Let $Z = \{ (t, s) \mid P_{(t, s)} \ne \emptyset \}$ and let $Z_A$ be the set of minimal elements of $Z$, where minimal elements mean an element $\mathfrak{m} \in Z$ s.t. $s \le \mathfrak{m}$ for $s \in Z$ implies $\mathfrak{m} \le s$ ($\le$ is the partial order). Then $Z_A$ is finite and for all $p \in Z$, there exists $\mathfrak{m} \in Z_A$ s.t. $\mathfrak{m} \le p$.
\end{proposition}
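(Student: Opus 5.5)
This is a direct application of Dickson's lemma to the componentwise order on $\mathbb{Z}_{\ge 0}^{2P}$; the structure of $P_{(t,s)}$ plays no role. We only use that $Z \subseteq \mathbb{Z}_{\ge 0}^{2P}$. Here $\le$ is the componentwise partial order, under which $\mathbb{Z}_{\ge 0}^{2P}$ is the product of $2P$ copies of the well-order $(\mathbb{Z}_{\ge 0},\le)$.

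\textbf{Step 1: Dickson's lemma.} Every infinite sequence $(p^{(k)})_{k\ge1}$ in $\mathbb{Z}_{\ge0}^{2P}$ has indices $k<l$ with $p^{(k)} \le p^{(l)}$. Equivalently, there are no infinite antichains and no infinite strictly decreasing chains. We would either cite it or prove it quickly by induction on the dimension. In dimension one, any sequence of nonnegative integers has a nondecreasing subsequence. In the inductive step, extract a subsequence that is nondecreasing in the first coordinate, then apply the hypothesis to the remaining coordinates to get a further subsequence that is nondecreasing in every coordinate. Any two terms of that subsequence give the required pair $k<l$.

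\textbf{Step 2: existence of a minimal element below each $p \in Z$.} Fix $p \in Z$. The set $\{q \in Z : q \le p\}$ is nonempty, since it contains $p$, and it is finite, since every $q \le p$ satisfies $0 \le q_i \le p_i$. A nonempty finite poset has a minimal element $\mathfrak{m}$. We check that $\mathfrak{m}$ is minimal in all of $Z$. Suppose $s \in Z$ and $s \le \mathfrak{m}$. Then $s \le p$, so $s$ lies in the finite set, and minimality of $\mathfrak{m}$ there gives $\mathfrak{m} \le s$. Hence $\mathfrak{m} \in Z_A$ and $\mathfrak{m} \le p$. As a simpler alternative, take $q \le p$ in $Z$ with $\sum_i q_i$ minimal.

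\textbf{Step 3: finiteness of $Z_A$.} Distinct minimal elements $\mathfrak{m} \ne \mathfrak{m}'$ are incomparable. Indeed, if $\mathfrak{m} \le \mathfrak{m}'$, minimality of $\mathfrak{m}'$ gives $\mathfrak{m}' \le \mathfrak{m}$, and antisymmetry forces $\mathfrak{m} = \mathfrak{m}'$. So $Z_A$ is an antichain. If it were infinite, enumerating it as a sequence of distinct elements would contradict Step 1.

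The only nontrivial ingredient is Dickson's lemma in Step 1, and even that is standard. The main thing to watch is that Step 1 is applied to a sequence of \emph{distinct} elements, so that $p^{(k)} \le p^{(l)}$ genuinely contradicts the antichain property.
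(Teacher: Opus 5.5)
Your proof is correct and follows the same route as the paper: Dickson's lemma gives finiteness of the antichain $Z_A$, and a minimal element of the down-set below $p$ is shown to be minimal in all of $Z$. Your version is slightly more careful in two ways: you explicitly restrict the down-set to $\{q \in Z : q \le p\}$, whereas the paper's $Z'$ omits the condition $r \in Z$, and you note that finiteness of this set alone, without Dickson's lemma, already supplies the minimal element.
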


\begin{proof}
The proof resorts to Dickson's Lemma \cite{dickson1913finiteness}. The case where $Z = \emptyset$ is trivial. Let $Z \neq \emptyset$. First, we know that $Z$ is a subset of $\mathbb{Z}_{\ge 0}^{2P}$, hence $Z_A$ is finite by Dickson's Lemma. Let $Z' = \{ r \mid r \le p \}$ for any given $p \in Z$. By Dickson's Lemma, $Z'$ has a minimal element $\mathfrak{m}_p$. This implies for any $s \in Z$ s.t. $s \le \mathfrak{m}_p, \mathfrak{m}_p \le s$ and $\mathfrak{m}_p \in Z_A$.
\end{proof}

\begin{proposition}
\label{prop4:connectPandTh}
Let $m \ge \sum_{i=1}^P (t_i + s_i)$. If $P_{(t, s)} \ne \emptyset$ and $(t,s) \in Z_A$, then $\mathcal{O}_{\|\cdot\|_{\max}, (t, s)}^{\eq}(\lambda) \ne \emptyset$.
\end{proposition}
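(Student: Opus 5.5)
The plan is a direct construction: we build an element of $\mathcal{O}_{\|\cdot\|_{\max}, (t,s)}^{\eq}(\lambda)$ from a point $(u_p,v_p)_{p=1}^P \in P_{(t,s)}$ by splitting each $u_p$ (resp.\ $v_p$) evenly into $t_p$ (resp.\ $s_p$) identical neurons. The second-layer weights are fixed to $\pm 1/\lambda$, so the $\lambda^{-2}$ scaling in $P_{(t,s)}$ is exactly what the max-norm budget of the first layer allows.

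\textbf{Construction.} Fix $(u_p,v_p)_{p=1}^P \in P_{(t,s)}$.
\begin{itemize}
\item For each $p$ with $t_p>0$, create $t_p$ neurons with $W_i = \lambda u_p / t_p$ and $\alpha_i = 1/\lambda$.
\item For each $p$ with $s_p>0$, create $s_p$ neurons with $W_i = \lambda v_p/s_p$ and $\alpha_i = -1/\lambda$.
\item Set all remaining neurons to $W_i=0$, $\alpha_i=0$. This needs $m \ge \sum_p (t_p+s_p)$, which is assumed.
\end{itemize}

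\textbf{Key step: minimality forces nonzero blocks.} I will show that $(t,s)\in Z_A$ implies $u_p \ne 0$ whenever $t_p>0$, and $v_p\neq 0$ whenever $s_p>0$. Suppose $t_p>0$ but $u_p=0$. Then the same tuple $(u_q,v_q)_q$ lies in $P_{(t-e_p,s)}$. Indeed, the bound $\|u_p\|_\infty \le (t_p-1)/\lambda^2$ holds trivially, the pattern constraint is vacuous for $u_p=0$, and the interpolation equation is unchanged. Hence $(t-e_p,s)\in Z$ and $(t-e_p,s) < (t,s)$, contradicting minimality. The argument for $v_p$ is symmetric.

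This step is the crux, and the only place where minimality is used. Consequently, every constructed nonzero neuron has $W_i \ne 0$. Moreover, since $\lambda/t_p>0$, its activation pattern is $\mathbf{1}(XW_i\ge 0)=\mathbf{1}(Xu_p\ge 0)=D_p$, and similarly $D_p$ for the negative neurons built from $v_p$.

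\textbf{Verification.}
\begin{itemize}
\item \emph{Interpolation.} Since $\mathbf{1}(XW_i\ge0)=D_p$, we have $(XW_i)_+ = D_pXW_i$. The $t_p$ positive neurons for pattern $p$ therefore contribute
\[
t_p\cdot D_pX(\lambda u_p/t_p)\cdot \tfrac{1}{\lambda} = D_pXu_p .
\]
Likewise, the negative neurons contribute $-D_pXv_p$. Summing over $p$ gives $\sum_p D_pX(u_p-v_p)=y$.
\item \emph{Norm constraint.} We have $\|W_i\|_\infty = \lambda\|u_p\|_\infty/t_p \le \lambda \cdot t_p/(\lambda^2 t_p) = 1/\lambda$, and likewise for the $v_p$ neurons. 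Also $\|\alpha\|_\infty = 1/\lambda$. Hence $\max\{\|W\|_{\max},\|\alpha\|_\infty\}\le 1/\lambda$.
\item \emph{Equalized properties.} Inactive neurons are removed: each neuron is either fully zero or has both $W_i \neq 0$ and $\alpha_i\neq 0$. Each $\alpha_i\in\{0,\pm1/\lambda\}$.
\item \emph{Identical weights within a class.} Suppose two nonzero neurons share a pattern and the same $\alpha$. Because the patterns $D_p$ are distinct, they come from the same $p$ and the same sign, and hence have identical $W_i$ by construction. For zero neurons the condition is trivial.
\item \emph{Counts.} Again because the $D_p$ are distinct, the number of neurons with pattern $D_p$ and $\alpha_i>0$ is exactly $t_p$, and with $\alpha_i<0$ it is exactly $s_p$. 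Zero neurons have $\alpha_i=0$ and are not counted.
\end{itemize}

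Therefore the constructed point lies in $\mathcal{O}_{\|\cdot\|_{\max},(t,s)}^{\eq}(\lambda)$, so this set is nonempty.
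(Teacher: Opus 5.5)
Your proposal is correct and follows the paper's proof essentially verbatim. Both split each $u_p$ (resp.\ $v_p$) into $t_p$ (resp.\ $s_p$) identical neurons $\lambda u_p/t_p$ (resp.\ $\lambda v_p/s_p$) with second-layer weight $\pm 1/\lambda$, and use minimality of $(t,s)$ to guarantee the blocks are nonzero. The paper only asserts that step (``because $(t,s)\in Z_A$''); your argument that $u_p=0$ with $t_p>0$ would put $(t-e_p,s)$ in $Z$ fills it in correctly.
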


\begin{proof}
Take $(u_i, v_i)_{i=1}^P \in P_{(t, s)}$. We set the weights of $(W, \alpha)$ as following:

Initialize $k_0 = 1$. $i = 1, 2, ... P$. For each $i$,

if $t_i \ne 0$ then $u_i \neq 0$ because $(t,s) \in Z_A$, put $(W_{k_0} = \frac{u_i \lambda}{t_i}, \alpha_{k_0} = 1/\lambda), \dots, (W_{k_0+t_i-1} = \frac{u_i \lambda}{t_i}, \alpha_{k_0+t_i-1} = 1/\lambda)$. Then increment $k_0$ to $k_0 + t_i$.

If $s_i \ne 0$ then $v_i \neq 0$ because $(t,s) \in Z_A$, put $(W_{k_0} = \frac{v_i \lambda}{s_i}, \alpha_{k_0} = -1/\lambda), \dots, (W_{k_0+s_i-1} = \frac{v_i \lambda}{s_i}, \alpha_{k_0+s_i-1} = -1/\lambda)$. Then increment $k_0$ to $k_0 + s_i$.

Leave the rest $k > k_0$ first/second layer weights to 0.

Is the constructed $(W, \alpha)$ in $\mathcal{O}_{\|\cdot\|_{\max}, (t, s)}^{\eq}(\lambda)$? First, it is in $\mathcal{O}_{\|\cdot\|_{\max}}(\lambda)$ because:
$(XW)_+ \alpha = \sum D_i X (\frac{u_i \lambda}{t_i} \cdot \frac{1}{\lambda}) t_i - \sum D_i X (\frac{v_i \lambda}{s_i} \cdot \frac{1}{\lambda}) s_i = \sum D_i X (u_i - v_i) = y$.
$\|W\|_{\max} = \max \{ \frac{1}{t_i} \|u_i \lambda\|_\infty, \frac{1}{s_i} \|v_i \lambda\|_\infty \} \le 1/\lambda$, $\|\alpha\|_\infty = 1/\lambda$.
Also it is in $\mathcal{O}_{\|\cdot\|_{\max}}^{\eq}(\lambda)$ because $\alpha = \pm 1/\lambda$, $W_i \alpha_i = 0 \Rightarrow W_i=\alpha_i=0$ and solutions with same pattern are identical. At last, by construction $\sum_i \mathbf{1}(\mathbf{1}(XW_i \geq 0) = D_p)\mathbf{1}(\alpha_i > 0) = t_p$, and same for $s_p$.
\end{proof}

\begin{proposition}
\label{AppxProp6:l1reduction} (\cref{Prop6:l1reduction} of the paper)
Let $S = \bigcup_{(t,s) \in Z_A} \mathcal{O}_{\|\cdot\|_{\max}, (t,s)}^{\eq}(\lambda)$ and the critical width
\[
m^* = 2 \max_{(t,s) \in Z_A} \sum_{i=1}^P t_i + s_i.
\] Then the followings hold:
\begin{enumerate}
    \item[i)] Let $m \geq m^*$. If $(W, \alpha) \in S$, $\sum_{i=1}^m \mathbf{1}(W_i \neq 0) \leq m^*/2$.
    \item[ii)] For all $A \in \mathcal{O}_{\|\cdot\|_{\max}}(\lambda)$, there exists continuous path from $A$ to $S$ in $\mathcal{O}_{\|\cdot\|_{\max}}(\lambda)$.
\end{enumerate}
\end{proposition}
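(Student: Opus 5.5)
Part i) follows directly from the definitions. If $(W,\alpha)\in\mathcal{O}^{\eq}_{\|\cdot\|_{\max},(t,s)}(\lambda)$ with $(t,s)\in Z_A$, then inactive neurons are removed, so every neuron with $W_i\neq 0$ has $\alpha_i=\pm 1/\lambda$ and some pattern $D_p$. It is therefore counted once in $t_p$ or $s_p$. Hence $\sum_i \mathbf{1}(W_i\neq 0)\le \sum_p (t_p+s_p)\le m^*/2$.

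For ii), I would build the path in three stages, each staying inside $\mathcal{O}_{\|\cdot\|_{\max}}(\lambda)$.

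\textbf{Stage 1 (remove inactive neurons).} Shrink $W_i$ to $0$ when $\alpha_i=0$, and $\alpha_i$ to $0$ when $W_i=0$. This preserves the output, and scaling entries toward zero cannot increase the max norm.

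\textbf{Stage 2 (push second-layer weights to the boundary and equalize).} For each active neuron, use the positive-homogeneity rescaling $(W_i,\alpha_i)\mapsto(W_i c(\tau),\alpha_i/c(\tau))$, moving $|\alpha_i|$ up to $1/\lambda$. The product $W_i\alpha_i$ is fixed, and $\|W_i\|_\infty|\alpha_i|\le 1/\lambda^2$ guarantees $\|W_i\|_\infty$ stays at most $1/\lambda$ along the way. Within each group of neurons sharing pattern $D_p$ and sign of $\alpha$, I would linearly interpolate each $W_i$ to the group average $\bar W$. The first layer then lies in the convex cone of $D_p$, which is convex, so the pattern and hence the output $D_pX\sum W_i/\lambda$ are preserved (the sum is constant). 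Averaging also does not increase the max norm.

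One subtlety: a vector in the closed cone may have a pattern different from $D_p$ where $Xw=0$ is hit. Since $\mathbf{1}(Xw\ge0)$ is what matters and the cone $\{w: Xw\ge 0 \text{ on } D_p,\ Xw<0 \text{ off } D_p\}$ is convex, this causes no problem. After Stage 2 we reach a point of $\mathcal{O}^{\eq}_{\|\cdot\|_{\max},(t,s)}(\lambda)$ for some $(t,s)\in Z$. Indeed, setting $u_p=\sum_{\text{group}} W_i/\lambda$ gives $\|u_p\|_\infty\le t_p/\lambda^2$, so $(u,v)\in P_{(t,s)}$.

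\textbf{Stage 3 (reduce $(t,s)$ to a minimal element), the main obstacle.} By \cref{Prop5:Dickson} there is $(t',s')\in Z_A$ with $(t',s')\le(t,s)$ and some $(u',v')\in P_{(t',s')}$. I would use the convex structure via a mixing path in the spirit of \cref{AppxLemma1:smaller}. Since $m$ is fixed, the current point uses $\sum(t_p+s_p)$ slots and $(u',v')$ needs $\sum(t'_p+s'_p)$ slots, which may not fit simultaneously.

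So I would instead interpolate in the convexified space. For each group with $t_p$ neurons, write the target $u'_p$ split as $t'_p$ copies of $\lambda u'_p/t'_p$ and $t_p-t'_p$ zero neurons. Then move linearly: neuron $k$ of group $p$ goes from $\lambda u_p/t_p$ to its target, with $\alpha=1/\lambda$ throughout. The output is $\sum_p D_pX\big((1-\tau)u_p+\tau u'_p\big)-\ldots=y$, which requires the patterns to stay $D_p$ along the line. This holds by convexity of the cone, except for neurons heading to $0$: at the endpoint they become inactive and contribute zero, which is consistent. The max norm stays bounded because both endpoints are $\le1/\lambda$ entrywise. When $t'_p=0$, the whole group goes to zero, which is again fine. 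The delicate point is checking pattern preservation when $u_p$ or $u'_p$ is zero. I would handle those groups by scaling toward or from zero (the pattern is constant for positive multiples) rather than by linear interpolation. Finally, I would remove the now-inactive neurons as in Stage 1, landing in $S$.
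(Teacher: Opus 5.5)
Your proposal is correct and follows essentially the same route as the paper: remove inactive neurons, rescale each $|\alpha_i|$ up to $1/\lambda$ by positive homogeneity, average within each pattern/sign group, certify that the resulting counts satisfy $(t,s)\in Z$ through the convexified set $P_{(t,s)}$, choose a minimal $(t',s')\le(t,s)$ via Dickson's lemma, and interpolate slot by slot inside the existing group slots. Keeping $\alpha=\pm1/\lambda$ during that interpolation and deleting the vanished neurons only at the end is actually cleaner than the paper's ``interpolate all layers.'' The one detail worth stating explicitly is that minimality of $(t',s')$ forces $u'_p\neq 0$ whenever $t'_p>0$ (otherwise lowering $t'_p$ by one would give a smaller element of $Z$); this is what guarantees the endpoint has counts exactly $(t',s')$ and therefore lies in $S$.
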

\begin{proof}
i) If $(W, \alpha) \in S$, $(W, \alpha) \in \mathcal{O}_{\|\cdot\|_{\max}, [t',s']}^{\eq}(\lambda)$ for some $[t',s'] \in Z_A$. Let's count the number of nonzero first-layer columns. This is going to be $\sum_{i=1}^P t'_i +s'_i$, because $\sum_{i=1}^{P} t'_i$ will count the number of nonzero neurons with $\alpha_i > 0$, and $\sum_{i=1}^{P} s'_i$ will count the number of nonzero neurons with $\alpha_i < 0$.
Hence $\sum_{i=1}^m \mathbf{1}(W_i \neq 0) = \sum_{i=1}^P (t'_i + s'_i) \leq m^*/2$. The last inequality holds because $[t',s'] \in Z_A$.

ii) Take a point $A \in \mathcal{O}_{\|\cdot\|_{\max}}(\lambda)$. We will construct a continuous path from $A$ to a point in $S$ with first finding a point $A_1 \in \mathcal{O}_{\|\cdot\|_{\max}}^{\eq}(\lambda)$, then connecting $A_1$ to a point in $S$.
We follow the following process: first, given $\{W_1, \dots, W_m\}, \{\alpha_1, \dots, \alpha_m\}$, if $W_i \alpha_i = 0$, we can shrink the other without violating the regularization. Hence, we can move from $A$ to $A'$ such that the nonzero weights of $A$ and $A'$ are same, but for $A'$, inactive neurons are removed.
Now we think of the procedure where $|\alpha_i| < \frac{1}{\lambda}$ we scale $\alpha_i$ and $W_i$ at the same time. Specifically:
\[\alpha_i(t) = \alpha_i + (\tfrac{1}{\lambda} - \alpha_i)t, \quad W_i(t) = \frac{W_i \alpha_i}{\alpha_i + (\tfrac{1}{\lambda} - \alpha_i)t} \quad \text{if } \alpha_i > 0,\]
\[\alpha_i(t) = \alpha_i + (-\tfrac{1}{\lambda} - \alpha_i)t, \quad W_i(t) = \frac{W_i |\alpha_i|}{|\alpha_i + (-\tfrac{1}{\lambda} - \alpha_i)t|} \quad \text{if } \alpha_i < 0.\]
Both cases, $W_i(t)$ is scaled by scale $ < 1$, so the constraint is not violated. Also, $(XW)_{+}\alpha = y$ due to positive homogeneity.

Hence we have made $\alpha_i$ either $\pm \frac{1}{\lambda}$ if it is not 0.
Now for $W_i$'s that have same arrangement pattern and second-layer weight, without loss of generality say $W_1, W_2, \dots, W_k$, we can move from $(W_1, \dots, W_k)$ to $(\frac{\sum W_i}{k}, \dots, \frac{\sum W_i}{k})$ by considering $\Delta(t)$:
\[
\Delta(0) = I_{k \times k}, \quad \Delta(1) = \frac{1}{k} J_{k \times k}, \quad \Delta(t) = (1-t)\Delta(0) + t\Delta(1)
\] and $W(t)[1:k] = [W_1|W_2|\cdots|W_k]\Delta(t)$. During the interpolation we have each column $b$ a convex combination of the $k$ weights because the sum of each column entries is 1, and the norm constraint is preserved. The fit is also preserved because the sum of each row is 1 for $\Delta(t)$. After repeating this finitely, we can make all $\mathbf{1}(XW_j \geq 0) = \mathbf{1}(XW_j \geq 0)$ and $\alpha_i = \alpha_j \rightarrow W_i = W_j$.

Hence we continuously moved from $A \in \mathcal{O}_{\|\cdot\|_{\max}}(\lambda)$ to $A_1 \in \mathcal{O}_{\|\cdot\|_{\max}}^{\eq}(\lambda)$. The next step finding $[t_1, s_1] \in \mathbb{Z}_{\geq 0}^{2P}$ such that $A_1 \in \mathcal{O}_{\|\cdot\|_{\max}, [t_1, s_1]}^{\eq}(\lambda)$. Such $[t_1, s_1]$ can be easily found by letting
\[
t_{1p} = \sum_{i=1}^{m} \mathbf{1}(\mathbf{1}(XW^{A_1}_i \geq 0) = D_p)\mathbf{1}(\alpha^{A_1}_i > 0), \quad s_{1p} = \sum_{i=1}^{m} \mathbf{1}(\mathbf{1}(XW^{A_1}_i \geq 0) = D_p)\mathbf{1}(\alpha^{A_1}_i < 0),
\]
where $A_1 = (W^{A_1}, \alpha^{A_1})$.
For that $[t_1, s_1]$, we see that $P_{[t_1, s_1]} \neq \emptyset$. That is because we can construct $(u_i, v_i)_{i=1}^P$ as:
\begin{align*}
u_i = t_i W^{A_1}_j / \lambda \ \  \textrm{if}\ \  \alpha^{A_1}_j > 0,\ \mathbf{1}(XW^{A_1}_j \geq 0) = D_i,\\
v_i = s_i W^{A_1}_j / \lambda \ \  \textrm{if} \ \  \alpha^{A_1}_j < 0,\ \mathbf{1}(XW^{A_1}_j \geq 0) = D_i.
\end{align*}
As $\|W^{A_1}_j\|_\infty \leq \frac{1}{\lambda}$, $\|u_i W^{A_1}_j / \lambda\|_\infty \leq \frac{t_i}{\lambda}$, similarly $\|s_i W^{A_1}_j / \lambda\|_\infty \leq s_i / \lambda$. Also if $u_i \neq 0$, $u_i = (t_i/\lambda) \cdot W^{A_1}_j$, which is a positive multiple of $W^{A_1}_j$, hence $\mathbf{1}(Xu_i \geq 0) = D_i$, same for $v_i$.

At last, $\sum_{i=1}^m (XW^{A_1}_i)_{+} \alpha^{A_1}_i$ is preserved: for $\alpha^{A_1}_i > 0, \mathbf{1}(XW^{A_1}_i \geq 0) = D_1$, take them as $W_1, \dots, W_{t_{11}}$, then
$\sum_{i=1}^{t_{11}} (XW^{A_1}_i)_{+} \alpha_i^{A_1} = \sum_{i=1}^{t_{11}} D_1 X W_i^{A_1} \frac{1}{\lambda} = D_1 X \frac{t_{11} W_i}{\lambda}$, hence it is $D_1Xu_1$. Same for $v_i$ but we have a negation. This means $\sum_{i=1}^P D_i X (u_i - v_i) = (XW^{A_1})_{+}\alpha^{A_1} = y$.

Thus $P_{[t_1,s_1]} \neq \emptyset$ and $[t_1, s_1] \in Z$. By \cref{Prop5:Dickson} we know the existence of $[t_m, s_m] \in Z_A$ such that $[t_m, s_m] \leq [t_1, s_1]$. 

From \cref{prop4:connectPandTh}, we can find $A_1' = (W^{A_1'}, \alpha^{A_1'}) \in \mathcal{O}_{\|\cdot\|_{\max}, [t_m, s_m]}^{\eq}(\lambda)$ because $m \geq m^* \geq \sum_{i=1}^{P} t_{mi} + s_{mi}$. We can connect $(W^{A_1'},\alpha^{A_1'})$ and $(W^{A_1},\alpha^{A_1})$ by more interpolation.
For each $i$-th element in $S$ we have $t_{1i}$ identical $(W, 1/\lambda)$'s for $A_1$, and we have $t_{mi}$ $(W', 1/\lambda)$'s for $A_1'$ (which is the point in $\mathcal{O}_{\|\cdot\|_{\max}, [t_m, s_m]}^{\eq}(\lambda)$). We do
$W_j(t) = t W_j^{A_1} + (1-t) W_j^{A_1'}$ for $j = 1, 2, \dots, t_{mi}$, $W_j(t) = (1-t)W_j^{A_1'}$ for $j > t_{mi}, j \leq t_{1i}$.
We do this type of interpolation for all layers. 

The constraint is satisfied because we either do convex combination or shrinkage. Now the model fit is preserved, because $(XW)_{+}\alpha = \sum_{i=1}^P D_i X (\frac{t_i W_i}{\lambda}) - D_i X (\frac{s_i W'_i}{\lambda})$ (here, we know that for same arrangement pattern we have same vector, so we take the representative as $W_i, W'_i$ for $A_1$),
$(XW)_{+}\alpha = \sum_{i=1}^P D_i X (\frac{t_{1i} z_i}{\lambda}) - D_i X (\frac{s_{1i} z'_i}{\lambda})$ for $A_1'$ similarly, and at time $t$ we have
\begin{align*}
&\sum_{i=1}^P \Bigg[
    D_i X \frac{t_{1i}(1-t)W_i + t z_i}{\lambda}
    + D_i X \frac{(t_{1i}-t_{mi})(1-t)W_i}{\lambda} \\
&\qquad\quad
    - D_i X \frac{s_{mi}(1-t)W'_i + t z'_i}{\lambda}
    - D_i X \frac{(s_{1i}-s_{mi})(1-t)W'_i}{\lambda}
\Bigg] \\
&= \sum_{i=1}^P \left(
    \frac{t_{mi} D_i X z_i}{\lambda}
    + (1-t)\frac{t_{1i} D_i X W_i}{\lambda}
    - t\frac{s_{mi} D_i X z'_i}{\lambda}
    - (1-t)\frac{s_{1i} D_i X W'_i}{\lambda}
\right) \\
&= t \sum_{i=1}^P D_i X
    \left(\frac{t_{mi} z_i}{\lambda} - \frac{s_{mi} z'_i}{\lambda}\right)
    + (1-t) \sum_{i=1}^P D_i X
    \left(\frac{t_{1i} W_i}{\lambda} - \frac{s_{1i} W'_i}{\lambda}\right) \\
&= y.
\end{align*}
Hence $(XW)_{+}\alpha$ is preserved throughout, and the curve is in $\mathcal{O}_{\|\cdot\|_{\max}}(\lambda)$. Thus I moved continuously from $A$ to a point in $\mathcal{O}_{\|\cdot\|_{\max}, [t_m, s_m]}^{\eq}(\lambda)$ where $[t_m, s_m] \in Z_A$. This means I moved to a point in $S$ using only the points in $\mathcal{O}_{\|\cdot\|_{\max}}(\lambda)$.
\end{proof}

\begin{theorem}   
\label{Appxt2:intra_signum}
(Intra-mode connectivity for Signum) Let $m \geq m^*$ given as $m^* = 2 \max_{(t,s) \in Z_A} \sum_{i=1}^P t_i+s_i$, where $Z_A$ is defined in \cref{Prop5:Dickson}. Then $\mathcal{O}_{\| \cdot \|_{\max}}(\lambda)$ is connected, and $m^*$ is determined only with $(X, y, \lambda)$.
\end{theorem}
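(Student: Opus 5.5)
The argument mirrors the proof of \cref{t1:intramode_rot}. The reduction set is now $S = \bigcup_{(t,s)\in Z_A}\mathcal{O}^{\eq}_{\|\cdot\|_{\max},(t,s)}(\lambda)$ from \cref{AppxProp6:l1reduction}, in place of the non-mergeable set.

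First, we check that $m^*$ is well defined and depends only on $(X,y,\lambda)$. The arrangement patterns $D_1,\dots,D_P$ are determined by $X$. The sets $P_{(t,s)}$ are then defined purely in terms of $(X,y,\lambda)$, and hence so are $Z$ and $Z_A$. By \cref{Prop5:Dickson}, $Z_A$ is finite, so the maximum in $m^* = 2\max_{(t,s)\in Z_A}\sum_i (t_i+s_i)$ is a finite integer. Under \cref{assump:realizability}, with $\lambda$ small enough that $\mathcal{O}_{\|\cdot\|_{\max}}(\lambda)\neq\emptyset$, the set $Z$ is nonempty (by the construction in part ii) of \cref{AppxProp6:l1reduction}), and therefore $Z_A\neq\emptyset$. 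If $\mathcal{O}_{\|\cdot\|_{\max}}(\lambda)=\emptyset$, connectivity holds vacuously.

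Next, fix $m\ge m^*$ and take two points $A,B\in\mathcal{O}_{\|\cdot\|_{\max}}(\lambda)$. By \cref{AppxProp6:l1reduction} ii), there are continuous paths inside $\mathcal{O}_{\|\cdot\|_{\max}}(\lambda)$ from $A$ to some $A'\in S$ and from $B$ to some $B'\in S$. By \cref{AppxProp6:l1reduction} i), every point of $S$ has at most $m^*/2$ nonzero first-layer columns, so $S$ satisfies the hypothesis of \cref{AppxLemma1:smaller} with $R=\|\cdot\|_{\max}$.

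If $A'=B'$ there is nothing more to do. Otherwise, \cref{AppxLemma1:smaller} gives a path in $\mathcal{O}_{\|\cdot\|_{\max}}(\lambda)$ joining $A'$ and $B'$. Concatenating the three paths, with the second one reversed, yields a path from $A$ to $B$, so $\mathcal{O}_{\|\cdot\|_{\max}}(\lambda)$ is path-connected and hence connected.

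Finally, by \cref{appxprop2:decoupleweights}, both Signum (Lion-$\mathcal K$ with $K=\|\cdot\|_1$) and AdamW induce exactly the constraint $\max\{\|W\|_{\max},\|\alpha\|_\infty\}\le 1/\lambda$. Their optimizer-regularized sets therefore coincide with $\mathcal{O}_{\|\cdot\|_{\max}}(\lambda)$, and intra-optimizer connectivity follows for both.

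The main obstacle is the bookkeeping in the reduction step rather than the final assembly. One must make sure that $S$, the path from \cref{AppxProp6:l1reduction}, and the path from \cref{AppxLemma1:smaller} all live at the same width $m$. One must also make sure that the zero-slot permutation argument in \cref{AppxLemma1:smaller} applies, which needs $m > m^*/2$ and is guaranteed by $m\ge m^*$. A further degenerate case is $m^*=0$, which occurs only when $y=0$. There the only point of $S$ is the zero network, and the argument still goes through.
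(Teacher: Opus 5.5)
Your proposal is correct and follows the paper's proof. You move any two points of $\mathcal{O}_{\|\cdot\|_{\max}}(\lambda)$ into $S=\bigcup_{(t,s)\in Z_A}\mathcal{O}^{\eq}_{\|\cdot\|_{\max},(t,s)}(\lambda)$ via part ii) of the reduction proposition, then join the reduced points with the sparse-interpolation lemma, whose hypothesis comes from part i), and your extra bookkeeping (finiteness and nonemptiness of $Z_A$, so that $m^*$ is well defined and depends only on $(X,y,\lambda)$, and fixing $m\ge m^*$ before invoking part ii) spells out details the paper's short proof leaves implicit.
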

\begin{proof}
Suppose $m \geq m^*$. First, from \cref{AppxLemma1:smaller}, when we set $S = \bigcup_{(t,s) \in Z_A} \mathcal{O}_{\|\cdot\|_{\max}, [t,s]}^{\eq}(\lambda)$, we know that for $(W, \alpha) \in S$, $\sum_{i=1}^m \mathbf{1}(W_i \neq 0) \leq m^*/2$ from \cref{AppxProp6:l1reduction}(i) and hence for two points $A, B \in S, \exists$ path in $\mathcal{O}_{\|\cdot\|_{\max}}(\lambda)$ that connects $A$ and $B$. Moreover, also from \cref{AppxProp6:l1reduction}(ii), we know for any $A \in \mathcal{O}_{\|\cdot\|_{\max}}(\lambda)$, there exists a path from $A$ to $S$ in $\mathcal{O}_{\|\cdot\|_{\max}}(\lambda)$. For any two $A, B \in \mathcal{O}_{\|\cdot\|_{\max}}(\lambda)$, we can move $A \rightarrow A', B \rightarrow B'$ where $A', B' \in S$, then connect $A'$ and $B'$ in $\mathcal{O}_{\|\cdot\|_{\max}}(\lambda)$. This shows that $\mathcal{O}_{\|\cdot\|_{\max}}(\lambda)$ is connected when $m \geq m^*$.
\end{proof}

\begin{lemma}
\label{Appxlem2: regcrit_fit}
(Critical regularization for perfect fitting) Let $m$ be given and $R: \|\cdot\|_F, \|\cdot\|_{\max}, \|\cdot\|_{\op}$. Assume there exists $(W_0, \alpha_0)$ with width $m_0 \le m$ such that $(XW_0)_+\alpha_0 = y$, and let $\lambda_0 := 1/\max\{R(W_0), R(\alpha_0)\}$. Then $\mathcal{O}_R(m, \lambda) \neq \emptyset \iff \lambda \leq \lambda_f^*(m)$ where $\lambda_f^*(m)$ is defined as 
\[
\lambda_f^{*}(m) = \frac{1}{\min_{(XW)_{+}\alpha=y} \max\{R(W), R(\alpha)\}}.
\]
\end{lemma}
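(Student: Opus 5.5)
The plan is to work directly from \cref{def:regsolset} and \cref{eq:twolayerReg}. For fixed width $m$, $\mathcal{O}_R(m,\lambda)$ is the set of interpolators $(W,\alpha)$ of width $m$ with $\max\{R(W),R(\alpha)\}\le 1/\lambda$. Write
\[
c^*(m) := \inf_{(XW)_+\alpha = y} \max\{R(W),R(\alpha)\},
\]
with the infimum over width-$m$ parameters. The feasible set is nonempty, since padding $(W_0,\alpha_0)$ with zero neurons gives a width-$m$ interpolator; this uses \cref{assump:realizability} with $m_0\le m$. Padding by zero columns does not change $\|\cdot\|_{\max}$, $\|\cdot\|_F$ or $\|\cdot\|_{\op}$, so $c^*(m)\le 1/\lambda_0<\infty$. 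Also $c^*(m)>0$: if $y\neq 0$, any interpolator has $W\neq0$ and $\alpha\neq0$. (The degenerate case $y=0$ gives $c^*=0$ and $\lambda_f^*=\infty$, which is consistent.) The claim reduces to two facts: the infimum is attained, and $\mathcal{O}_R(m,\lambda)\ne\emptyset \iff 1/\lambda \ge c^*(m)$.

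For the direction ($\Rightarrow$), take $(W,\alpha)\in\mathcal{O}_R(m,\lambda)$. Then $c^*(m)\le \max\{R(W),R(\alpha)\}\le 1/\lambda$, so $\lambda \le 1/c^*(m)=\lambda_f^*(m)$. For the direction ($\Leftarrow$), a minimizer $(W^\star,\alpha^\star)$ attaining $c^*(m)$ satisfies $\max\{R(W^\star),R(\alpha^\star)\}=c^*(m)\le 1/\lambda$ whenever $\lambda\le\lambda_f^*(m)$. It therefore lies in $\mathcal{O}_R(m,\lambda)$. Note that the sets $\mathcal{O}_R(m,\lambda)$ are nested, decreasing in $\lambda$, which is consistent with the threshold form of the statement.

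The main obstacle is attainment of the minimum, because the feasible set $\{(W,\alpha)\mid (XW)_+\alpha=y\}$ is closed but unbounded. I would handle this by restricting to a sublevel set. Consider
\[
\{(W,\alpha)\mid (XW)_+\alpha=y,\ \max\{R(W),R(\alpha)\}\le 1/\lambda_0\}.
\]
This set is nonempty, since it contains the padded $(W_0,\alpha_0)$. It is closed, because $(W,\alpha)\mapsto (XW)_+\alpha$ is continuous and $R$ is continuous. It is bounded, because every $R$ in the list is a norm on $W$ and, applied to the vector $\alpha$, a norm on $\alpha$ (for $\alpha$ these are $\|\cdot\|_\infty$ or $\|\cdot\|_2$). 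By Weierstrass, the continuous function $\max\{R(W),R(\alpha)\}$ attains its minimum on this compact set. That minimum equals the global infimum $c^*(m)$, since points outside the sublevel set have value larger than $1/\lambda_0\ge c^*(m)$. This yields $\lambda_f^*(m)\ge\lambda_0$ and completes the equivalence.
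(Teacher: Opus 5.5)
Your proposal is correct and follows the paper's proof essentially step for step. Both arguments restrict $\max\{R(W),R(\alpha)\}$ to the nonempty compact sublevel set at level $1/\lambda_0$, obtain an attained minimum that is global on the interpolation set, and read off both directions of the equivalence from it; your explicit zero-padding of $(W_0,\alpha_0)$ to width $m$, with the remark that padding leaves each norm unchanged, fills a small step the paper leaves implicit.
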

\begin{proof}
We first show $\lambda^{*}_f(m)$ is well defined, i.e., that $\max\{R(W), R(\alpha)\}$ attains a minimum on the set $F = \{(W,\alpha)\ | \ (XW)_{+}\alpha=y\}$.

By assumption, $(W_0, \alpha_0) \in F$ with $\max\{R(W_0), R(\alpha_0)\} = 1/\lambda_0$. Setting $C := 1/\lambda_0$, the set
\[
F \cap \{(W,\alpha) \ | \ R(W) \leq C, \ R(\alpha) \leq C\}
\]
is nonempty (contains $(W_0, \alpha_0)$) and compact (closed by continuity of the constraints, bounded by $C$). Hence $\max\{R(W), R(\alpha)\}$ attains a minimum $\mu \leq C$ on this restricted set; outside the restricted set, $\max\{R(W), R(\alpha)\} > C \geq \mu$. So $\mu$ is the minimum on all of $F$. Also, $\mu > 0$ because $y \neq 0$.

Now $\lambda^{*}_f(m) = 1/\mu$. If $\lambda \leq \lambda_f^{*}(m)$, there exists $(W, \alpha) \in F$ achieving the minimum, so $\max\{R(W), R(\alpha)\} = 1/\lambda^{*}_f(m) \leq 1/\lambda$ and $\mathcal{O}_R(m,\lambda) \neq \emptyset$. If $\lambda > \lambda_f^{*}(m)$ and $\mathcal{O}_R(m, \lambda) \neq \emptyset$, then there exists $(W,\alpha) \in F$ with $\max\{R(W), R(\alpha)\} \leq 1/\lambda < \mu$, contradicting that $\mu$ is the minimum of $\max\{R(W), R(\alpha)\}$ on $F$.
\end{proof}

\begin{proposition}
\label{appxprop7:regime_FOP}
($\|\cdot\|_F$ and $\|\cdot\|_{\op}$: $(m, \lambda)$ of interest) Let $R: \|\cdot\|_F$ or $\|\cdot\|_{\op}$. Let $m \geq \max\{m_R^{o}, 4P\}$ and $0 <\lambda \leq \lambda_f^*(m)$. Then the set $\mathcal{O}_R(m, \lambda)$ is connected and nonempty.
\end{proposition}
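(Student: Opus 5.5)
The statement splits into nonemptiness and connectedness, each handled by an earlier result.

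\emph{Nonemptiness.} I will invoke \cref{Appxlem2: regcrit_fit}. Its hypothesis is that some interpolator $(W_0,\alpha_0)$ of width $m_0 \le m$ exists. This is \cref{assump:realizability}; I read $m_R^{o}$ as the width threshold guaranteeing this (e.g.\ $m_R^{o}=m_0$), so $m \ge m_R^{o}$ supplies it. A width-$m_0$ interpolator pads with zero columns and zero second-layer entries to width $m$ without changing the fit or either norm. The lemma then says $\mathcal{O}_R(m,\lambda)\neq\emptyset$ exactly when $\lambda \le \lambda_f^*(m)$, which is our assumption. One point to check is that the minimum defining $\lambda_f^*(m)$ is attained at width $m$ for both $\|\cdot\|_F$ and $\|\cdot\|_{\op}$. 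The compactness argument in the lemma's proof covers both, since both are norms and hence coercive and continuous.

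\emph{Connectedness.} Since $m\ge 4P$, \cref{t1:intramode_rot} applies directly and gives connectedness of $\mathcal{O}_R(\lambda)$ for $R\in\{\|\cdot\|_F,\|\cdot\|_{\op}\}$. Concretely, \cref{Appxprop3:reduction} moves any point to $\mathcal{O}_R^{\nm}(\lambda)$. Non-mergeable points have at most $2P = m^*/2$ nonzero neurons, one per (pattern, sign) pair. \cref{AppxLemma1:smaller} with $m^*=4P$ then joins any two such points.

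\emph{Main obstacle.} The only subtle step is that connectedness in \cref{t1:intramode_rot} is vacuous for an empty set, which is why the lower bound $\lambda\le\lambda_f^*(m)$ is needed. The two pieces are otherwise independent. I would also confirm that the paths in \cref{Appxprop3:reduction} and \cref{AppxLemma1:smaller} respect the joint constraint $\max\{R(W),\|\alpha\|_2\}\le 1/\lambda$. For $\alpha$, every step either preserves $\|\alpha\|_2$ (merging, and the $\sqrt{t},\sqrt{1-t}$ rescalings) or shrinks entries. For $W$, the Gram-matrix and Frobenius computations already given there handle it. So the proposition follows by combining the two results.
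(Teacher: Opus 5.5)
Your proposal is correct and is essentially the paper's proof: connectedness for $m\ge 4P$ comes directly from \cref{t1:intramode_rot}, nonemptiness for $\lambda\le\lambda_f^*(m)$ comes from the critical-regularization lemma applied to a realizing network of width $m_R^o\le m$, and the two regions are then intersected. Your extra checks (zero-padding, attainment of the minimum defining $\lambda_f^*(m)$, and that the merging and interpolation paths keep $\|\alpha\|_2\le 1/\lambda$) are sound; only note that $\lambda\le\lambda_f^*(m)$ is an upper bound on $\lambda$, not a lower bound.
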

\begin{proof}
We plot the region where $\mathcal{O}_R(m, \lambda)$ is nonempty and connected separately.
For connected region, \cref{t1:intramode_rot} shows that for all $\lambda > 0$, $m \geq 4P$ implies connectivity.
For nonempty region, we can see that from \cref{Appxlem2: regcrit_fit}, iff $m \geq m_R^*$ and $\lambda \leq \lambda_f^*(m)$, $\mathcal{O}_R(m, \lambda)$ is nonempty. We know that as $\mathcal{O}_R(m_R^o, \lambda_R^o) \neq \emptyset$, we can see $\mathcal{O}_R(m_R^o, \lambda_f^*(m_R^o)) \neq \emptyset$ for $m \geq m_R^o$ and can apply \cref{Appxlem2: regcrit_fit} for $m \geq m_R^o$.
Intersect the two regions. if $m \geq \max\{4P, m_R^o\}$, $0 < \lambda \leq \lambda_f^*(m)$, $\mathcal{O}_R(m, \lambda)$ is nonempty and connected.
\end{proof}

\begin{proposition}
\label{AppxProp8:Signum_reg}
Let $R: \|\cdot\|_{\max}$. For $m \geq 4P+1$, there exists critical regularization $\lambda_c^*(m)$ for each $m$ such that $\lambda \leq \lambda_c^{*}(m)$ implies $\mathcal{O}_R(m, \lambda)$ is connected.
\end{proposition}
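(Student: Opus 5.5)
The plan is to reduce this statement to \cref{Appxt2:intra_signum}. That theorem gives connectivity of $\mathcal{O}_{\|\cdot\|_{\max}}(\lambda)$ whenever $m \ge m^*(\lambda) = 2\max_{(t,s)\in Z_A}\sum_i (t_i+s_i)$. Both $Z$ and $Z_A$ depend on $\lambda$, so the task is to choose $\lambda_c^*(m)$ such that for every $\lambda \le \lambda_c^*(m)$ all minimal elements of $Z$ are $0/1$ vectors. Then $m^*(\lambda) \le 2\cdot 2P = 4P < m$, and \cref{Appxt2:intra_signum} (through \cref{AppxProp6:l1reduction} and \cref{AppxLemma1:smaller}) gives connectivity. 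This is where $m \ge 4P+1$ enters: it leaves room, and in fact $m \ge 4P$ would already suffice for this route.

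\textbf{Step 1 (finitely many supports).} For each support pattern $\sigma \subseteq [2P]$, i.e.\ a choice of which $u_i$ and $v_i$ are allowed to be nonzero, consider the pattern-constrained linear system
\[
\sum_i D_i X(u_i - v_i) = y, \quad \mathbf{1}(Xu_i\ge 0)=D_i \ \text{if } u_i \neq 0, \quad \mathbf{1}(Xv_i\ge 0)=D_i \ \text{if } v_i \neq 0,
\]
with nonzero blocks only on $\sigma$. Let $c_\sigma$ be the infimum of $\max_i \max\{\|u_i\|_\infty,\|v_i\|_\infty\}$ over its feasible points, and set $c_\sigma = -\infty$ if the system is infeasible. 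There are at most $2^{2P}$ supports, so $c := \max_{\sigma \text{ feasible}} c_\sigma$ is finite. I would set $\lambda_c^*(m)^2 := 1/c$, shrinking it further if necessary (see Step 3). Feasibility of some $\sigma$ follows from \cref{assump:realizability}, since a realizing network can be merged pattern by pattern into a convex solution.

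\textbf{Step 2 (domination by a $0/1$ vector).} Take $(t,s)\in Z$ with witness $(u,v)\in P_{(t,s)}$, and let $\sigma$ be the support of $(u,v)$. Then $\sigma \le (t,s)$ coordinatewise, because $u_i \neq 0$ forces $\|u_i\|_\infty \le t_i/\lambda^2$ and hence $t_i \ge 1$. The system for $\sigma$ is feasible, so it has a feasible point $(u',v')$ with support $\sigma' \subseteq \sigma$ and block norms at most $c \le 1/\lambda^2$. Hence the indicator $e_{\sigma'} \in \{0,1\}^{2P}$ lies in $Z$, since $(u',v')$ certifies $P_{e_{\sigma'}} \neq \emptyset$, and $e_{\sigma'} \le (t,s)$. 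So every minimal element of $Z$ is a $0/1$ vector. By \cref{Prop5:Dickson}, $Z_A \subseteq \{0,1\}^{2P}$, which gives $m^*(\lambda) \le 4P \le m$.

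\textbf{Step 3 (main obstacle).} The difficulty is that the pattern constraint $\mathbf{1}(Xu_i \ge 0) = D_i$ defines a cone that is not closed: its closure includes vectors with strictly smaller activation sets. The infimum $c_\sigma$ might therefore be approached only by points that leave the pattern cone. I would handle this in two ways. First, use the infimum only to pick a feasible point whose norm is within a factor of $2$ of $c_\sigma$, and define $\lambda_c^{*2} = 1/(2c)$; this is enough because Step 2 only needs some feasible point, not a minimizer. Second, note that every degeneration to the boundary is itself one of the finitely many patterns, so the collection of supports is still finite. Two further checks remain. One is that minimality in the sense of \cref{Prop5:Dickson} is compatible with the support shrinking from $\sigma$ to $\sigma'$; it is, since we only need some element of $Z$ below $(t,s)$. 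The other is that \cref{prop4:connectPandTh} holds for these $0/1$ minimal elements, which requires $m \ge \sum_i (t_i+s_i)$; this is automatic since that sum is at most $2P$.
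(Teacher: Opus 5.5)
Your argument is correct, and it takes a different and more elementary route than the paper.

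\textbf{The paper's route.} The paper lifts to the set $\mathcal{P}$ of tuples $(u_i,v_i,a_i,b_i)$ and splits it into $2^{2P}$ support-indexed polyhedra. It projects onto the bound coordinates $(a,b)$ and applies Minkowski--Weyl to the closures, which shows that the minimal points of this projection lie in a box $[0,M]^{2P}$. Minimality of $(t,s)\in Z_A(\lambda)$ then forces $t_k-1\le\lambda^2M$ and $s_k-1\le\lambda^2M$. This gives $m^*(\lambda)\le 4P+4\lambda^2MP$ for every $\lambda$, which is inverted to $\lambda_c^*(m)=\sqrt{(m/(4P)-1)/M}$. Requiring this to be positive is where the hypothesis $m\ge 4P+1$ comes from.

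\textbf{Your route.} You fix feasible points per support (finitely many supports) with block norms bounded by a $\lambda$-independent constant $c$. You then show that once $1/\lambda^2\ge c$, every element of $Z$ dominates an indicator vector that is itself in $Z$. Hence $Z_A(\lambda)\subseteq\{0,1\}^{2P}$ and $m^*(\lambda)\le 4P$. Two simplifications: the infimum $c_\sigma$ and the factor $2$ are unnecessary, since any single feasible point per support suffices; and the appeal to Dickson's lemma in Step 2 is just the definition of minimality.

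\textbf{What each buys.} Your version needs no polyhedral theory. It also never has to pick a minimal point of the projected set below a given point. The paper asserts that step without argument, even though the strict pattern inequalities mean that set need not be closed. You also get a width-independent threshold that is already valid for $m\ge 4P$. The paper's route, in exchange, bounds $m^*(\lambda)$ at every $\lambda$, so its $\lambda_c^*(m)$ grows like $\sqrt{m}$. That quantifies how wider networks stay connected under stronger weight decay, and it feeds the range $\lambda\le\min\{\lambda_f^*(m),\lambda_c^*(m)\}$ used in the large-width analysis. Conversely, the paper's bound combined with integrality of $t_k,s_k$ recovers your conclusion $m^*(\lambda)\le 4P$ whenever $\lambda^2M<1$.
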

\begin{proof} 
By \cref{Appxt2:intra_signum}, if $m \geq 2 \max_{(t,s) \in Z_A(\lambda)} \sum_{i=1}^P (t_i+s_i)$, $\mathcal{O}_R(m, \lambda)$ is connected. Let's define $m^{*}(\lambda)$ as
\[
m^{*}(\lambda) = \begin{cases}
2 \max_{(t,s) \in Z_A(\lambda)} \sum_{i=1}^P (t_i+s_i) \quad \textrm{if} \quad Z_A(\lambda) \neq \emptyset \\[4pt]
0 \quad \textrm{if} \quad Z_A(\lambda) = \emptyset.
\end{cases}
\]
We wish to find a tractable upper bound of $m^{*}(\lambda)$ that is increasing so that we can find critical width $M^{*}$ and $\lambda^{*}_c(m)$. Specifically, we will show that $m^{*}(\lambda) \leq 4P + 4\lambda^2 M P$ for a constant $M$ only dependent on $(X, y)$. 

First, let
\begin{align*}
\mathcal{P} = \Big\{ (u_i, v_i, a_i, b_i)_{i=1}^{P} \ | \ \sum_{i=1}^{P} &D_iX(u_i-v_i) = y, \ u_i = 0 \ \textrm{or} \ \mathbf{1}(Xu_i \geq 0) = D_i,\\
\ &v_i = 0 \ \textrm{or} \ \mathbf{1}(Xv_i \geq 0) = D_i, \ \|u_i\|_\infty \leq a_i,\ \|v_i\|_\infty \leq b_i \Big\}.
\end{align*}
Then we know that $\mathcal{P}$ is a union of at most $2^{2P}$ polyhedron, where for each polyhedron is for each chosen support. Specifically, let $S_1, S_2 \subseteq [P]$ is chosen. $\mathcal{P}_{(S_1,S_2)}$ is
\begin{align*}
\mathcal{P}_{(S_1,S_2)} = \Big\{ (u_i, v_i, a_i, b_i)_{i=1}^{P} \ | \ \sum_{i=1}^{P} &D_iX(u_i-v_i) = y, \ \mathbf{1}(Xu_i \geq 0) = D_i,\ \mathbf{1}(Xv_i \geq 0) = D_i, \ \\
&\|u_i\|_\infty \leq a_i,\ \|v_i\|_\infty \leq b_i,
u_i = 0 \ \textrm{if}\ i \in S_1, v_i = 0 \ \textrm{if}\ i \in S_2 \Big\},
\end{align*}
and 
\[
\mathcal{P} = \cup_{S_1, S_2 \subseteq [P]} \mathcal{P}_{(S_1,S_2)}.
\]
Note that $\mathcal{P}_{(S_1, S_2)}$ possibly has strict inequalities. For each polyhedron $\mathcal{P}_{(S_1,S_2)}$, the projection of the closure
\[
\mathcal{Q}_{(S_1,S_2)} = \Big\{(a_i, b_i)_{i=1}^{P} \ | \ (u_i, v_i, a_i, b_i)_{i=1}^{P} \in \mathcal{\bar{P}}_{(S_1,S_2)} \Big\}
\]
is also a polyhedron. Now, let the minimal points of $\mathcal{Q}_{(S_1,S_2)}$ be $\mathcal{Q}_m(S_1,S_2)$. Then $\mathcal{Q}_m(S_1,S_2)$ is bounded. The argument follows from Minkowski-Weyl theorem: because $\mathcal{Q}_{(S_1,S_2)}$ is a polyhedron, there exists finite vectors $p_1, p_2, \cdots p_N$, $r_1, r_2, \cdots , r_{N'}$ such that 
\[
\mathcal{Q}_{(S_1,S_2)} = \textrm{conv}(p_1, p_2, \cdots, p_N) + \textrm{nonneg}(r_1, r_2, \cdots, r_{N'}).
\]
We first know that $\textrm{conv}(p_1, p_2, \cdots, p_N)$ is bounded. Also, $\mathcal{Q}_{(S_1,S_2)} \subset \mathbb{R}_{\geq 0}^{2P}$. Hence each $r_i \geq 0$: if one element of $r_i < 0$, then increasing that direction to infinity will create a negative element. Now for a point $p \in \mathcal{Q}_m$, when we write
\[
p = \tau + \sum_{i=1}^{N'} c_i r_i, \quad c_i \geq 0
\]
for $\tau \in \textrm{conv}(p_1, p_2, \cdots p_N)$, then all $c_i = 0$ because if not, it is not minimal. Hence $\mathcal{Q}_m(S_1,S_2) \subseteq \textrm{conv}(p_1, p_2, \cdots , p_N)$, and $\mathcal{Q}_m(S_1,S_2)$ is bounded. 

Next, we show that for the real projection that we are interested in,
\[
\Pi_{(S_1,S_2)} = \Big\{(a_i, b_i)_{i=1}^{P} \ | \ (u_i, v_i, a_i, b_i)_{i=1}^{P} \in \mathcal{P}_{(S_1,S_2)} \Big\},
\]
and its minimal set $\Pi_{m}(S_1,S_2)$, that $\Pi_{m}(S_1,S_2)$ is bounded. That is because
\[
\Pi_{m}(S_1,S_2) \subseteq \mathcal{Q}_m(S_1,S_2).
\]
The proof is as follows: take $p \in \Pi_m(S_1,S_2)$, and let $(u^{p}, v^{p}, p) \in \mathcal{P}_{(S_1,S_2)}$. Suppose $p \notin \mathcal{Q}_m(S_1,S_2)$. There exists $q \in \mathcal{Q}_{(S_1,S_2)}$ such that $q < p$, and there exists $(u^c, v^c, q) \in \mathcal{\bar{P}}_{(S_1,S_2)}$. Hence, there exists a sequence $(u_k, v_k, q_k) \in \mathcal{P}_{(S_1,S_2)}$ that converges to $(u^c, v^c, q)$. For this sequence there exists $N$ such that $q_N < p$ which is a contradiction that $p \in \Pi_m(S_1,S_2)$ as $q_N \in \Pi(S_1,S_2)$. Hence $p \in \mathcal{Q}_m(S_1,S_2)$, $\Pi_{m}(S_1,S_2)$ is bounded. 

Then we see that the projected set
\[
\Pi = \Big\{(a_i, b_i)_{i=1}^{P} \ | \ (u_i, v_i, a_i, b_i)_{i=1}^{P} \in \mathcal{P} \Big\} = \cup_{S_1,S_2 \subseteq [P]} \Pi_{(S_1,S_2)},
\]
and its minimal set $\Pi_m$, then $\Pi_m$ is bounded. That is because 
\[
\Pi_m \subseteq \cup_{S_1,S_2 \subseteq [P]} \Pi_m(S_1,S_2),
\] 
because for any $p \in \Pi_m$, $p \in \Pi$ and $p \in \Pi_{(S_1,S_2)}$ for some $S_1, S_2$ and as $p \in \Pi_m$, $p \in \Pi_m(S_1,S_2)$. As $\Pi_m(S_1,S_2)$ is bounded, its finite union is bounded, and $\Pi_m$ is also bounded. 
Say $\Pi_m \subseteq [0, M]^{2P}$. As $\mathcal{P}$ is a function of $(X, y)$, $M$ is also a function of $(X, y)$.

Now take $(t, s) \in Z_A(\lambda)$. Then there exists $(u_i, v_i)_{i=1}^{P} \in P_{(t,s)}(\lambda)$ because $\mathcal{P}_{(t,s)} \neq \emptyset$. For that $(u_i, v_i)$, we know that 
\[
\sum_{i=1}^{P} D_iX(u_i - v_i) = y,
\]
\[
u_i = 0 \ \  \textrm{or} \ \ \mathbf{1}(Xu_i \geq 0) = D_i, \quad v_i = 0 \ \  \textrm{or} \ \ \mathbf{1}(Xv_i \geq 0) = D_i,
\]
\[
\| u_i \|_\infty \leq \frac{t_i}{\lambda^2}, \quad \| v_i \|_\infty \leq \frac{s_i}{\lambda^2}, \quad i \in [P].
\]
For this point $(u_i, v_i)_{i=1}^{P}$, we know that $(\|u_i\|_\infty, \|v_i\|_\infty)_{i=1}^{P} \in \Pi$. Think of the point 
\[
(\gamma_i, \delta_i)_{i=1}^{P} \in \Pi_m \ \ \textrm{such that} \ \ (\gamma_i, \delta_i)_{i=1}^{P} \leq (\|u_i\|_\infty, \|v_i\|_\infty)_{i=1}^{P}.
\]
If $\gamma_k \leq (t_k - 1)/\lambda^2$ or $\delta_k \leq (s_k - 1)/\lambda^2$ for some $k \in [P]$, as $(\gamma_i, \delta_i)_{i=1}^{P} \in \Pi$, there exists $(u_i', v_i')_{i=1}^{P}$ such that 
\[
\sum_{i=1}^{P} D_iX(u_i' - v_i') = y,
\]
\[
u_i' = 0 \ \  \textrm{or} \ \ \mathbf{1}(Xu_i' \geq 0) = D_i, \quad v_i' = 0 \ \  \textrm{or} \ \ \mathbf{1}(Xv_i' \geq 0) = D_i,
\]
\[
\| u_i' \|_\infty \leq \frac{t_i'}{\lambda^2}, \quad \| v_i'\|_\infty \leq \frac{s_i'}{\lambda^2}, \quad i \in [P].
\]
where $(t',s') \neq (t,s)$ because $\gamma_k \leq (t_k - 1)/\lambda^2$ or $\delta_k \leq (s_k - 1)/\lambda^2$ for some $k \in [P]$. Hence for all $k$, 
\[
\gamma_k \in \left[\frac{t_k-1}{\lambda^2}, \frac{t_k}{\lambda^2}\right], \quad \delta_k \in \left[\frac{s_k-1}{\lambda^2}, \frac{s_k}{\lambda^2}\right],
\]
and 
\[
\prod_{k=1}^{P} \left[\frac{t_k-1}{\lambda^2}, \frac{t_k}{\lambda^2}\right] \times \prod_{k=1}^{P} \left[\frac{s_k-1}{\lambda^2}, \frac{s_k}{\lambda^2}\right] \cap \Pi_m \neq \emptyset.
\]
As $\Pi_m \subseteq [0,M]^{2P}$, we know that for each $t_i$ and $s_i$ 
\[
(t_i-1) \leq \lambda^2M, \quad (s_i-1)\leq \lambda^2M.
\]
Hence for $(t,s) \in Z_A(\lambda)$, 
\[
\sum_{i=1}^{P} t_i + s_i \leq 2P + 2P\lambda^2M,
\]
and $m^{*}(\lambda) \leq 4P + 4\lambda^2MP$. This means given $\lambda$, if $m \geq 4P + 4\lambda^2MP$, $\mathcal{O}_R(m, \lambda)$ is connected. In other words, for given $m$ such that $m \geq 4P+1$, when we define 
\[
\lambda_c^{*}(m) = \sqrt{\frac{1}{M}\Big(\frac{m}{4P} - 1\Big)},
\]
if $\lambda \leq \lambda_c^{*}(m) = \sqrt{\frac{1}{M}\Big(\frac{m}{4P} - 1\Big)}$ we have $m \geq 4P + 4\lambda^2MP$ and $\mathcal{O}_R(m, \lambda)$ is connected. This finishes the proof.
\end{proof}

\begin{proposition}
\label{prop9:nontrivial_l1}
($R: \|\cdot\|_{\max}$ norm: $(m, \lambda)$ of interest) Let $R: \|\cdot\|_{\max}, m \geq \max \{m_R^o, 4P+1\}$ and $0 < \lambda \leq \min\{\lambda_f^*(m), \lambda_c^{*}(m)\}$. Then the set $\mathcal{O}_R(m, \lambda)$ is connected and nonempty.
\end{proposition}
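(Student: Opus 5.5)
This proposition combines two earlier results, in direct parallel with \cref{appxprop7:regime_FOP}. Nonemptiness comes from \cref{Appxlem2: regcrit_fit}, and connectivity comes from \cref{AppxProp8:Signum_reg}. Since $\lambda \le \min\{\lambda_f^*(m),\lambda_c^*(m)\}$, both $\lambda\le\lambda_f^*(m)$ and $\lambda\le\lambda_c^*(m)$ hold. So it suffices to check that the width condition $m\ge\max\{m_R^o,4P+1\}$ puts us in the range where each result applies.

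\textbf{Nonemptiness.} The threshold $m_R^o$ is the width, used in \cref{appxprop7:regime_FOP}, at which a realizing interpolator $(W_0,\alpha_0)$ exists, i.e.\ $\mathcal{O}_R(m_R^o,\lambda_R^o)\neq\emptyset$ for some $\lambda_R^o>0$. For $m\ge m_R^o$, we pad $(W_0,\alpha_0)$ with zero neurons. The padded network still satisfies $(XW)_+\alpha=y$, and both $\|W\|_{\max}$ and $\|\alpha\|_\infty$ are unchanged. Hence the hypothesis of \cref{Appxlem2: regcrit_fit} holds at width $m$. That lemma then says $\mathcal{O}_{\|\cdot\|_{\max}}(m,\lambda)\neq\emptyset$ for all $0<\lambda\le\lambda_f^*(m)$. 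The compactness argument there works verbatim for $R=\|\cdot\|_{\max}$, since the sublevel sets of $\max\{\|W\|_{\max},\|\alpha\|_\infty\}$ are compact.

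\textbf{Connectivity.} Since $m\ge 4P+1$, the quantity $\lambda_c^*(m)=\sqrt{\frac1M(\frac{m}{4P}-1)}$ is well defined and strictly positive. \cref{AppxProp8:Signum_reg} then directly gives that $\mathcal{O}_{\|\cdot\|_{\max}}(m,\lambda)$ is connected for every $\lambda\le\lambda_c^*(m)$. Intersecting this with the nonempty region gives the claim.

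\textbf{Main obstacle.} The delicate point is the case $\lambda$ where $Z_A(\lambda)=\emptyset$, or more generally whether the bound $m^*(\lambda)\le 4P+4\lambda^2MP$ in \cref{AppxProp8:Signum_reg} is used consistently with the nonemptiness statement. Nonemptiness gives an equalized solution via \cref{AppxProp6:l1reduction}, so $Z\ne\emptyset$, and the bound then applies to $Z_A(\lambda)$. If $Z_A(\lambda)$ were empty, the set itself would be empty and connectivity would be vacuous. In both cases no additional argument is needed beyond citing the two results. I would also note explicitly that the constant $M$ depends only on $(X,y)$, so $\lambda_c^*(m)>0$ and the admissible range of $\lambda$ is a nondegenerate interval.
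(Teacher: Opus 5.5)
Your proposal is correct and follows the paper's proof. Connectivity for $\lambda \le \lambda_c^*(m)$ comes from \cref{AppxProp8:Signum_reg}, using $m \ge 4P+1$; nonemptiness for $\lambda \le \lambda_f^*(m)$ comes from \cref{Appxlem2: regcrit_fit} once $m \ge m_R^o$; and the two regions are intersected. Your zero-neuron padding argument and your remark on the case $Z_A(\lambda)=\emptyset$ only make explicit details the paper leaves implicit.
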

\begin{proof} 
From \cref{AppxProp8:Signum_reg} we know that when $m \geq 4P+1$, $\mathcal{O}_R(m,\lambda)$ is connected when $0 < \lambda \leq \lambda^{*}_c(m)$.
Also, if $\lambda \leq \lambda_f^*(m)$ and $m \geq m_R^o$, we can see that for all $m$, there is $\lambda$ such that $\mathcal{O}_R(m, \lambda)$ is nonempty, and we can use \cref{Appxlem2: regcrit_fit}.
Intersect the two regions.
\end{proof}

\begin{theorem}
\label{appxthm:inter-connectivity} (\cref{thm:inter-connectivity} of the paper)
Let $R_1, R_2$ be two regularizers in $\{\|\cdot\|_{\max}, \|\cdot\|_F, \|\cdot\|_{\op}\}$, $m \geq \max\{m_{R_1}^*, m_{R_2}^*\}$, and $0 < \lambda_1 \leq \lambda_{R_1}^*(m)$ fixed. As $\lambda_2$ varies from $0$ to $\lambda_{R_2}^{*}(m)$:
\begin{enumerate}
    \item[i)] If $\mathcal{O}_{R_1}(\lambda_1) \cap \mathcal{O}_{R_2}(\lambda_{R_2}^{*}(m)) \neq \emptyset$, the union $\mathcal{O}_{R_1}(\lambda_1) \cup \mathcal{O}_{R_2}(\lambda_2)$ has a single connected component for all $\lambda_2$.
    \item[ii)] If $\mathcal{O}_{R_1}(\lambda_1) \cap \mathcal{O}_{R_2}(\lambda_{R_2}^{*}(m)) = \emptyset$, there exists $\lambda_2^*(\lambda_1)$ such that the union has one component when $\lambda_2 \leq \lambda_2^*(\lambda_1)$ and two components when $\lambda_2 > \lambda_2^*(\lambda_1)$.
\end{enumerate}
\end{theorem}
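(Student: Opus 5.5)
The plan is to reduce the number of connected components of the union to a single yes/no question: do the two sets intersect? I would then analyze that question through a monotone, compactness-based threshold in $\lambda_2$.

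\textbf{Step 1: each set is nonempty, closed, connected and compact.} Under the standing hypotheses $m \ge \max\{m^*_{R_1}, m^*_{R_2}\}$, $\lambda_1 \le \lambda^*_{R_1}(m)$ and $\lambda_2 \le \lambda^*_{R_2}(m)$, \cref{appxprop7:regime_FOP} and \cref{prop9:nontrivial_l1} (as applicable to each $R_i$) give that $\mathcal{O}_{R_1}(\lambda_1)$ and $\mathcal{O}_{R_2}(\lambda_2)$ are nonempty and connected. Each is also closed, being the intersection of the closed zero-loss set $\{(XW)_+\alpha = y\}$ with a closed norm ball. It is bounded as well, so it is compact.

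\textbf{Step 2: components of the union.} The union of two connected sets that intersect is connected, so it has exactly one component. If the two sets are disjoint, they are disjoint nonempty closed sets. Each is then relatively clopen in the union, so the union is disconnected, and its components are exactly the two sets. Hence the union has one component iff $\mathcal{O}_{R_1}(\lambda_1) \cap \mathcal{O}_{R_2}(\lambda_2) \neq \emptyset$, and two components otherwise.

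\textbf{Step 3: the intersection as a sublevel condition.} I would rewrite the intersection as
\[
\mathcal{O}_{R_1}(\lambda_1) \cap \{(W,\alpha) \mid \max\{R_2(W), R_2(\alpha)\} \le 1/\lambda_2\}.
\]
Define $\mu := \min_{(W,\alpha) \in \mathcal{O}_{R_1}(\lambda_1)} \max\{R_2(W), R_2(\alpha)\}$. The minimum is attained because $\mathcal{O}_{R_1}(\lambda_1)$ is compact and nonempty and the objective is continuous. Moreover $\mu > 0$, since $y \neq 0$ forces $(W,\alpha) \neq 0$. The intersection is then nonempty iff $1/\lambda_2 \ge \mu$, i.e.\ iff $\lambda_2 \le 1/\mu$. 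The point of this reformulation is that it makes the transition sharp: the set of good $\lambda_2$ is a closed interval $(0, 1/\mu]$, not merely monotone in $\lambda_2$.

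\textbf{Step 4: read off the two cases.} In case (i), the hypothesis says the intersection is nonempty at $\lambda_2 = \lambda^*_{R_2}(m)$, so $1/\mu \ge \lambda^*_{R_2}(m)$. Every $\lambda_2$ in the admissible range then gives a nonempty intersection, and by Step 2 the union has one component throughout. In case (ii), I would set $\lambda_2^*(\lambda_1) := 1/\mu$, which must satisfy $1/\mu < \lambda^*_{R_2}(m)$. Then the union has one component for $\lambda_2 \le \lambda_2^*(\lambda_1)$ and two components for $\lambda_2^*(\lambda_1) < \lambda_2 \le \lambda^*_{R_2}(m)$.

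The main subtlety is Step 2. It relies on closedness of both regularized sets, because disjoint but non-closed connected sets could still have a connected union. It also relies on applying the connectivity results at every $\lambda_2$ in the range, which is exactly why the theorem restricts $\lambda_2 \le \lambda^*_{R_2}(m)$.
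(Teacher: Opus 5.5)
Your proposal is correct and follows essentially the same route as the paper. Both arguments rest on the same ingredients: the regularized sets are nonempty, connected and compact in the large-width regime; the union has one component if the sets meet and two otherwise; and $\max\{R_2(W),R_2(\alpha)\}$ attains its minimum on the compact set $\mathcal{O}_{R_1}(\lambda_1)$. The only difference is presentational: the paper defines $\lambda_2^*(\lambda_1)$ as an infimum and uses this minimum in a contradiction argument to show the threshold is attained, whereas you set $\lambda_2^*(\lambda_1) := 1/\mu$ directly, which is exactly the value of the paper's infimum.
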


\begin{proof}
i) Assume $\mathcal{O}_{R_1}(\lambda_1) \cap \mathcal{O}_{R_2}(\lambda_{R_2}^*(m)) \neq \emptyset$. Then, for $\lambda_2 \leq \lambda_{R_2}^*(m)$, we know that $\mathcal{O}_{R_2}(\lambda_{R_2}^*(m)) \subseteq \mathcal{O}_{R_2}(\lambda_2)$. Hence, $\mathcal{O}_{R_2}(\lambda_2) \cap \mathcal{O}_{R_1}(\lambda_1) \neq \emptyset$ for $\lambda_2 \leq \lambda_{R_2}^*(m)$ and as $\mathcal{O}_{R_1}(\lambda_1), \mathcal{O}_{R_2}(\lambda_2)$ is connected, $\mathcal{O}_{R_1}(\lambda_1) \cup \mathcal{O}_{R_2}(\lambda_2)$ is connected.

ii) Assume $\mathcal{O}_{R_1}(\lambda_1) \cap \mathcal{O}_{R_2}(\lambda_{R_2}^*(m)) = \emptyset$. Now define
\[
\lambda_2^*(\lambda_1) = \inf_{\lambda_2 > 0} \{ \lambda_2 \mid \mathcal{O}_{R_1}(\lambda_1) \cap \mathcal{O}_{R_2}(\lambda_2) = \emptyset \}.
\]
We know that $\lambda_{R_2}^*(m) \in \{ \lambda_2 \mid \mathcal{O}_{R_1}(\lambda_1) \cap \mathcal{O}_{R_2}(\lambda_2) = \emptyset \}$, hence $\lambda_2^*$ is well defined. 

When $\lambda_2 > \lambda_2^*(\lambda_1)$, say $\mathcal{O}_{R_1}(\lambda_1) \cap \mathcal{O}_{R_2}(\lambda_2) \neq \emptyset.$ Then $ \mathcal{O}_{R_1}(\lambda_1) \cap \mathcal{O}_{R_2}(t) \neq \emptyset$ for $t \leq \lambda_2$. This means $\{ t \mid \mathcal{O}_{R_1}(\lambda_1) \cap \mathcal{O}_{R_2}(t) = \emptyset \} \subseteq (\lambda_2, \infty)$ and $\lambda^{*}_2(\lambda_1)\geq \lambda_2$. This is a contradiction, so 
\[\lambda_2 > \lambda_2^*(\lambda_1) \Rightarrow \mathcal{O}_{R_1}(\lambda_1) \cap \mathcal{O}_{R_2}(\lambda_2) = \emptyset.
\] 
If $\mathcal{O}_{R_1}(\lambda_1) \cap \mathcal{O}_{R_2}(\lambda_2) = \emptyset$ we can say the set union is not connected since both $\mathcal{O}_{R_1}(\lambda_1)$ and $\mathcal{O}_{R_2}(\lambda_2)$ are compact.

Now we show that if $\lambda_2 = \lambda_2^*(\lambda_1)$, the set $\mathcal{O}_{R_1}(\lambda_1) \cap \mathcal{O}_{R_2}(\lambda_2^*(\lambda_1)) \neq \emptyset$. If $\mathcal{O}_{R_1}(\lambda_1) \cap \mathcal{O}_{R_2}(\lambda_2^*(\lambda_1)) = \emptyset$, we can find $\mathcal{O}_{R_2}(\lambda_2^*(\lambda_1) - \varepsilon) \cap \mathcal{O}_{R_1}(\lambda_1) = \emptyset$ for some $\varepsilon > 0$. The method is clear: think of 
\[f(W, \alpha) = \max \{ R_2(W), R_2(\alpha) \}.
\] 
Think of the function's image on the set $\mathbb{R}^{d \times m} \times \mathbb{R}^m = \mathcal{O}_{R_1}(\lambda_1)$. We know that $\mathcal{O}_{R_1}(\lambda_1)$ is compact, hence $f$ has a minimum. Let the minimum be $f_m$. As $\mathcal{O}_{R_1}(\lambda_1) \cap \mathcal{O}_{R_2}(\lambda_2^*(\lambda_1)) = \emptyset$, $\frac{1}{\lambda_2^*(\lambda_1)} < f_m$ must hold. Then, take $\lambda' = \frac{2}{f_m + 1/\lambda_2^*(\lambda_1)} < \lambda_2^*(\lambda_1)$ to see that $1/\lambda' < f_m$, and $\mathcal{O}_{R_1}(\lambda_1) \cap \mathcal{O}_{R_2}(\lambda') = \emptyset.$ This implies $\lambda' \in \{\lambda \mid \mathcal{O}_{R_1}(\lambda_1) \cap \mathcal{O}_{R_2}(\lambda_2) = \emptyset\}$, contradiction that $\lambda_2^*(\lambda_1)$ is a lower bound.
This means $\mathcal{O}_{R_1}(\lambda_1) \cap \mathcal{O}_{R_2}(\lambda_2^*(\lambda_1)) \neq \emptyset$ and is nonempty for $\lambda_2 \leq \lambda_2^*(\lambda_1)$. This finishes the proof.
\end{proof}

\begin{proposition}
(\cref{prop2:connectivity} of the paper)
Let
$\mathcal O=\{(W,\alpha)\mid (XW)_+\alpha=y\}$
and let \(\mathcal C_1,\ldots,\mathcal C_N\) be its connected components. For two norms \(R_1,R_2\), write
\[
\mathcal O_{R_\ell}(\lambda_\ell)
=
\{(W,\alpha)\in\mathcal O\mid R_\ell(W,\alpha)\le 1/\lambda_\ell\},
\qquad \ell=1,2.
\]
Suppose that for some \(i\neq j\),
$
\mathcal O_{R_1}(\lambda_1)\subseteq \mathcal C_i,
$
and
$
\mathcal O_{R_2}(\lambda_2)\subseteq \mathcal C_j.
$
Then the following hold:
\begin{itemize}[left=0pt]
    \item For any \(\ell \in \{1,2\}\) and any \(p,q \in \mathcal O_{R_\ell}(\lambda_\ell)\), there exists a continuous path \(\gamma:[0,1]\to \mathcal O\) such that \(\gamma(0)=p\) and \(\gamma(1)=q\).
    \item There is no continuous path \(\gamma:[0,1]\to \mathcal O\) such that
    $ \gamma(0)\in \mathcal O_{R_1}(\lambda_1),
    \ \gamma(1)\in \mathcal O_{R_2}(\lambda_2).$
\end{itemize}
\end{proposition}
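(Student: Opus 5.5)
This proposition is a purely topological statement; it uses nothing about ReLU networks beyond the fact that $\mathcal O$ is a subspace of $\mathbb{R}^{d\times m}\times\mathbb{R}^m$ decomposed into its connected components $\mathcal C_1,\ldots,\mathcal C_N$. I will read ``connected components'' as path-connected components, which is the meaning needed for the first bullet. If one insists on components in the connectedness sense, I would first note that $\mathcal O$ is a real semi-algebraic set: each coordinate of $(XW)_+\alpha$ is piecewise polynomial with finitely many pieces. Semi-algebraic sets have finitely many connected components, and each of them is path-connected, so the two notions coincide. This remark is the only non-trivial input, and I would cite it rather than reprove it.

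For the first bullet, fix $\ell$ and $p,q\in\mathcal O_{R_\ell}(\lambda_\ell)$. By hypothesis both points lie in a single component, namely $\mathcal C_i$ if $\ell=1$ and $\mathcal C_j$ if $\ell=2$. Since that component is path-connected, there is a continuous $\gamma:[0,1]\to\mathcal C_i\subseteq\mathcal O$ (or into $\mathcal C_j$) with $\gamma(0)=p$ and $\gamma(1)=q$. The path is not required to stay inside the norm ball, so nothing more is needed.

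For the second bullet, argue by contradiction. Suppose $\gamma:[0,1]\to\mathcal O$ is continuous with $\gamma(0)\in\mathcal O_{R_1}(\lambda_1)\subseteq\mathcal C_i$ and $\gamma(1)\in\mathcal O_{R_2}(\lambda_2)\subseteq\mathcal C_j$. The image $\gamma([0,1])$ is connected, being the continuous image of an interval, and it lies in $\mathcal O$. It is therefore contained in a single connected component of $\mathcal O$, since each component is a maximal connected subset and any connected set meeting it is absorbed into it. But $\gamma([0,1])$ meets both $\mathcal C_i$ and $\mathcal C_j$, and distinct components are disjoint. This contradicts $i\neq j$.

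The argument also yields the remark in the main text about limit points. Intra-optimizer connectivity in the zero-loss set follows from the first bullet, and separation of the two optimizer-regularized sets by any zero-loss path follows from the second. I expect no real obstacle. The only point requiring care is the connected-versus-path-connected distinction in the first bullet, which is resolved either by the convention above or by the semi-algebraic remark.
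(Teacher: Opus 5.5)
Your proof is correct and follows the paper's argument essentially verbatim. For the first bullet, both points lie in a single component, which is path-connected; for the second, $\gamma([0,1])$ is connected and so confined to one component, contradicting $i\neq j$. Your semi-algebraic remark is a useful addition, since the paper asserts that the components of $\mathcal O$ are path-connected without justifying it.
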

\begin{proof}
Since \(\mathcal O_{R_1}(\lambda_1)\subseteq \mathcal C_i\) and
\(\mathcal O_{R_2}(\lambda_2)\subseteq \mathcal C_j\), any two points in the same
regularized set lie in the same connected component of \(\mathcal O\). Since the
components of \(\mathcal O\) are path-connected, for any
\(\ell\in\{1,2\}\) and any \(p,q\in \mathcal O_{R_\ell}(\lambda_\ell)\), there exists
a continuous path \(\gamma:[0,1]\to \mathcal O\) such that
\[
\gamma(0)=p,\qquad \gamma(1)=q.
\]
Now suppose, for contradiction, that there exists a continuous path
\(\gamma:[0,1]\to \mathcal O\) such that
\[
\gamma(0)\in \mathcal O_{R_1}(\lambda_1),
\qquad
\gamma(1)\in \mathcal O_{R_2}(\lambda_2).
\]
Then
\[
\gamma(0)\in \mathcal C_i,
\qquad
\gamma(1)\in \mathcal C_j.
\]
However, \(\gamma([0,1])\) is connected and contained in \(\mathcal O\). Hence it must
lie entirely in a single connected component of \(\mathcal O\). This contradicts
\(i\neq j\). Therefore no such path exists.
\end{proof}

\begin{proposition} Let $A\in\mathbb{R}^{d\times d}$ be invertible,
\[
X=\begin{bmatrix}A\\-A\end{bmatrix}\in\mathbb{R}^{2d\times d},
\qquad
y\in\mathbb{R}^{2d}_{>0},
\]
and consider width $m=2$. For the point $(W,\alpha) \in \mathcal{O}$ for the solution set
\[
\mathcal{O}=\{(W,\alpha)\mid (XW)_+\alpha=y\},
\]
we have
\[
A_{i\cdot}W_{\cdot j} \neq 0\ \  \forall i \in [d], j \in [2].
\]
\label{prop1:barrier}
\end{proposition}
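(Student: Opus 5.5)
We argue by contradiction, working row by row of the data. Fix $i \in [d]$ and set $a_j := A_{i\cdot}W_{\cdot j}$ for $j=1,2$. The rows of $X$ indexed by $i$ and $d+i$ are $A_{i\cdot}$ and $-A_{i\cdot}$. The interpolation condition $(XW)_+\alpha = y$ at these two rows therefore reads
\[
(a_1)_+\alpha_1 + (a_2)_+\alpha_2 = y_i > 0,
\qquad
(-a_1)_+\alpha_1 + (-a_2)_+\alpha_2 = y_{d+i} > 0 .
\]
The key observation is that for each neuron $j$, at most one of $(a_j)_+$ and $(-a_j)_+$ is nonzero, and both vanish exactly when $a_j = 0$.

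Now suppose $a_1 = 0$. Then neuron $1$ contributes nothing to either row, so the two equations become $(a_2)_+\alpha_2 = y_i > 0$ and $(-a_2)_+\alpha_2 = y_{d+i} > 0$. The first forces $a_2 > 0$, so that $(a_2)_+ > 0$. The second forces $a_2 < 0$, so that $(-a_2)_+ > 0$. These are incompatible. The case $a_2 = 0$ is symmetric.

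Hence $a_1 \neq 0$ and $a_2 \neq 0$. Since $i$ was arbitrary, $A_{i\cdot}W_{\cdot j} \neq 0$ for all $i \in [d]$ and $j \in [2]$.

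Two further remarks come out of the same argument. First, each pair of rows must be covered by two distinct neurons: one neuron has $a_j > 0$ and the other has $a_{j'} < 0$. This is what later yields the sign vector $\sigma \in \{\pm1\}^d$ indexing the components. Second, the main subtlety is noticing that a single neuron cannot serve both $A_{i\cdot}$ and $-A_{i\cdot}$; the strict positivity of $y$ supplies this. Invertibility of $A$ is not needed for this particular claim.
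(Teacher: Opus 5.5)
Your proof is correct and uses the same key fact as the paper's: a single neuron cannot be active on both $A_{i\cdot}$ and $-A_{i\cdot}$, and it is active on neither when $A_{i\cdot}W_{\cdot j}=0$. The paper turns this into a global count (each $(XW_{\cdot j})_+$ has at most $d$ nonzero entries, so each must have exactly $d$ to cover the $2d$ positive entries of $y$), whereas you argue locally on the row pair $(i,d+i)$; this is equivalent and slightly more direct, and you are right that invertibility of $A$ is not used.
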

\begin{proof}
The proof follows by looking at the nonzero elements of $(Xu)_{+}$ for $u \in \mathbb{R}^{d}$. We know that if $A_{i\cdot}u> 0$ for some $i$, then $-A_{i\cdot}u < 0$ and both cannot simultaneously be positive. Hence, in $(Xu)_{+}$ there can be at most $d$ nonzero elements, where the position of nonzero elements are determined by the tuple of signs 
\[
(A_{1\cdot}u, A_{2\cdot}u, \cdots, A_{d \cdot}u).
\]
Now we go to the solution set $\mathcal{O}$. As we have
\[
(XW)_{+}\alpha = (XW_{\cdot 1})_{+}\alpha_1 + (XW_{\cdot 2})_{+}\alpha_2 = y,
\]
and $y$ does not have nonzero elements, each $(XW_{\cdot 1})_{+}$, $(XW_{\cdot 2})_{+}$ should have exactly $d$ nonzero elements. If $A_{i\cdot}W_{\cdot j} = 0$ for some $i \in [d], j \in [2]$ this is impossible because for that $i$, both $(A_{i\cdot}W_{\cdot j})_{+} = (-A_{i\cdot}W_{\cdot j})_{+} = 0$ and $(XW_{\cdot j})_{+}$ will have nonzero elements strictly less than $d$.
\end{proof}

\begin{proposition}
\label{appxp2:connectedcomponentchar}
Let $A\in\mathbb R^{d\times d}$ be invertible and set
\[
X=\begin{bmatrix}A\\-A\end{bmatrix},
\]
$y\in\mathbb R^{2d}_{>0}$. For
$\sigma\in(\pm1)^d$, denote $y_\sigma\in\mathbb R^d$ by
\[
(y_\sigma)_i
=
\begin{cases}
y_i, & \sigma_i=+1,\\
-y_{d+i}, & \sigma_i=-1.
\end{cases}
\]
Also, let
\[
\mathcal C_{\sigma\mid-\sigma}
=
\left\{
\left(
\begin{bmatrix}
A^{-1}\dfrac{y_\sigma}{\alpha_1}
&
A^{-1}\dfrac{y_{-\sigma}}{\alpha_2}
\end{bmatrix},
(\alpha_1,\alpha_2)
\right)
:\alpha_1,\alpha_2>0
\right\}.
\]
For the solution set $\mathcal{O}$, we have 
\[
\mathcal{O} = \bigsqcup_{\sigma \in \{\pm 1\}^{d}}
\mathcal{C}_{\sigma \mid -\sigma}.
\]
More importantly, the solution set $\mathcal{O}$ has $2^{d}$ connected components, and has $2^{d-1}$ connected components up to permutation.
\end{proposition}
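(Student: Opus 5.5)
The plan is to first pin down $\mathcal{O}$ set-theoretically using \cref{prop1:barrier}, then show each $\mathcal{C}_{\sigma\mid-\sigma}$ is connected and relatively clopen in $\mathcal{O}$, and finally count components and divide by the column-swap symmetry.

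\emph{Step 1: characterize $\mathcal{O}$.} Take $(W,\alpha)\in\mathcal{O}$. By \cref{prop1:barrier}, $A_{i\cdot}W_{\cdot j}\neq 0$ for all $i,j$. Hence each $(XW_{\cdot j})_+$ has exactly $d$ strictly positive entries, one from each pair $\{i,d+i\}$. Its support is encoded by the sign vector $\sigma^{(j)}=\operatorname{sign}(AW_{\cdot j})\in\{\pm1\}^d$.
\begin{itemize}
\item Since $y$ has all $2d$ entries strictly positive, the supports of the two columns must together cover all $2d$ coordinates. Each has size $d$, so they are complementary, which gives $\sigma^{(2)}=-\sigma^{(1)}=:-\sigma$.
\item On its support, $(XW_{\cdot j})_+$ is strictly positive and equals $y$ divided by $\alpha_j$. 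This forces $\alpha_j>0$.
\item Reading off coordinates: for $\sigma_i=+1$ we get $A_{i\cdot}W_{\cdot1}=y_i/\alpha_1$. For $\sigma_i=-1$ we get $-A_{i\cdot}W_{\cdot 1}=y_{d+i}/\alpha_1$.
\item Together these say $AW_{\cdot1}=y_\sigma/\alpha_1$, and likewise $AW_{\cdot2}=y_{-\sigma}/\alpha_2$. Invertibility of $A$ gives the stated form.
\end{itemize}
Conversely, any point of $\mathcal{C}_{\sigma\mid-\sigma}$ has $AW_{\cdot1}=y_\sigma/\alpha_1$, whose signs are exactly $\sigma$. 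Plugging in shows directly that $(XW)_+\alpha=y$. The union is disjoint because $\sigma=\operatorname{sign}(AW_{\cdot1})$ is determined by the point.

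\emph{Step 2: components.} Each $\mathcal{C}_{\sigma\mid-\sigma}$ is the image of the connected set $(0,\infty)^2$ under a continuous map, so it is connected. The map $(W,\alpha)\mapsto\operatorname{sign}(AW_{\cdot1})$ is continuous on $\mathcal{O}$, because on $\mathcal{O}$ no entry $A_{i\cdot}W_{\cdot1}$ vanishes (\cref{prop1:barrier}). Its values lie in the discrete set $\{\pm1\}^d$, so it is locally constant. Thus each $\mathcal{C}_{\sigma\mid-\sigma}$, as a level set of this map, is open and closed in $\mathcal{O}$. Every $\sigma$ gives a nonempty piece, so $\mathcal{O}$ has exactly $2^d$ connected components.

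\emph{Step 3: up to permutation.} Swapping the two neurons, $(W_{\cdot1},W_{\cdot2},\alpha_1,\alpha_2)\mapsto(W_{\cdot2},W_{\cdot1},\alpha_2,\alpha_1)$, maps $\mathcal{C}_{\sigma\mid-\sigma}$ onto $\mathcal{C}_{-\sigma\mid\sigma}$. Since $\sigma\neq-\sigma$, this pairs the $2^d$ components without fixed points, leaving $2^{d-1}$ classes.

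The main obstacle is the rigidity argument in Step 1: concluding that the two supports are complementary and that $\alpha_j>0$. That argument depends on all entries of $y$ being strictly positive together with the exact count of $d$ nonzeros from \cref{prop1:barrier}. The clopen argument in Step 2 is easy once non-vanishing of $A_{i\cdot}W_{\cdot j}$ on all of $\mathcal{O}$ is in hand.
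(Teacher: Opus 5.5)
Your proposal is correct and follows essentially the same route as the paper: you characterize $\mathcal{O}$ through the sign vector $\operatorname{sign}(AW_{\cdot 1})$ and show each $\mathcal{C}_{\sigma\mid-\sigma}$ is connected as a continuous image of $(0,\infty)^2$. As in the paper, you then use \cref{prop1:barrier} to rule out sign changes inside $\mathcal{O}$ and pair components via the neuron swap. Two of your steps are slightly tighter than the paper's: you derive $\alpha_j>0$ from the complementary supports rather than asserting it, and you phrase the separation as local constancy of the sign map (so each piece is clopen in $\mathcal{O}$) rather than as a path argument, which identifies genuine connected components and not just path components.
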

\begin{proof}
First let's prove that 
\[
\bigsqcup_{\sigma \in \{\pm 1\}^{d}}
\mathcal{C}_{\sigma \mid -\sigma} \subseteq \mathcal{O}.
\]
Take $(W, \alpha) \in \mathcal{C}_{\sigma | -\sigma}$. We know that
\[
(\begin{bmatrix}
    A \\ -A
\end{bmatrix} W_{\cdot 1})_{+}\alpha_1 = (\begin{bmatrix}
    A \\ -A
\end{bmatrix} A^{-1} y_{\sigma})_{+} = (\begin{bmatrix}
    y_{\sigma} \\ -y_\sigma
\end{bmatrix})_{+}
\]
Now the entries of $(\begin{bmatrix}
    y_{\sigma} \\ -y_\sigma
\end{bmatrix})_{+}$ are:
if $\sigma_i = 1$ then the $i$-th entry of $(\begin{bmatrix}
    y_{\sigma} \\ -y_\sigma
\end{bmatrix})_{+}$ is $y_i$ the $i+d$-th entry is 0, if $\sigma_i = -1$ then the $i$-th entry is 0, the $i+d$-th entry is $y_{d+i}$.
Similarly, we have 
\[
(\begin{bmatrix}
    A \\ -A
\end{bmatrix} W_{\cdot 2})_{+}\alpha_2 = (\begin{bmatrix}
    y_{-\sigma} \\ -y_{-\sigma}
\end{bmatrix})_{+}
\]
and if $\sigma_i = 1$ then the $i$-th entry is 0, the $i+d$-th entry is $y_{i+d}$, and if $\sigma_i = -1$ the $i$-th entry is $y_i$, the $i+d$-th entry is 0. Hence we have
\[
(\begin{bmatrix}
    y_{\sigma} \\ -y_{\sigma}
\end{bmatrix})_{+}+(\begin{bmatrix}
    y_{-\sigma} \\ -y_{-\sigma}
\end{bmatrix})_{+} = y.
\]
Now we prove that 
\[
\mathcal{O} \subseteq \bigsqcup_{\sigma \in \{\pm 1\}^{d}}
\mathcal{C}_{\sigma \mid -\sigma}.
\]
Take an arbitrary $(W,\alpha)\in\mathcal O$, and write
\[
W=\begin{bmatrix}W_{\cdot 1} & W_{\cdot 2}\end{bmatrix},
\qquad
\alpha=(\alpha_1,\alpha_2),
\]
with $\alpha_1,\alpha_2>0$. We know that $\alpha_1, \alpha_2 > 0$ because they are the ratio between two positive vectors. Define
\[
z_1:=AW_{\cdot 1},\qquad z_2:=AW_{\cdot 2}.
\]
Then the equations $(XW)_+\alpha=y$ are equivalent, for each
$i=1,\dots,d$, to
\[
\alpha_1 (z_{1i})_+ + \alpha_2 (z_{2i})_{+} = y_i,
\]
and
\[
\alpha_1 (-z_{1i})_+ + \alpha_2 (-z_{2i})_+ = y_{d+i}.
\]
Since $y_i>0$ and $y_{d+i}>0$, both the positive and negative sides must be
activated. Hence $(z_1)_i$ and $(z_2)_i$ must have opposite signs for every
$i$. In particular, neither can vanish. Define
\[
\sigma_i:=\operatorname{sign}(z_{1i})\in\{\pm1\}.
\]
Then necessarily
\[
\operatorname{sign}(z_{2i})=-\sigma_i.
\]
If $\sigma_i=+1$, then $z_{1i}>0$ and $z_{2i}<0$, so the two equations give
\[
\alpha_1z_{1i}=y_i,
\qquad
-\alpha_2z_{2i}=y_{d+i}.
\]
If $\sigma_i=-1$, then $z_{1i}<0$ and $z_{2i}>0$, so
\[
-\alpha_1z_{1i}=y_{d+i},
\qquad
\alpha_2z_{2i}=y_i.
\]
Equivalently,
\[
z_1=\frac{y_\sigma}{\alpha_1},
\qquad
z_2=\frac{y_{-\sigma}}{\alpha_2}.
\]
Since $A$ is invertible, this implies
\[
W_{\cdot 1}=A^{-1}\frac{y_\sigma}{\alpha_1},
\qquad
W_{\cdot 2}=A^{-1}\frac{y_{-\sigma}}{\alpha_2}.
\]
Therefore
\[
(W,\alpha)\in \mathcal C_{\sigma\mid-\sigma}.
\]
This proves
\[
\mathcal{O} \subseteq
\bigsqcup_{\sigma\in\{\pm1\}^d}\mathcal C_{\sigma\mid-\sigma}.
\]

It remains to justify that the sets $\mathcal{C}_{\sigma | -\sigma}$ are disjoint and each set is connected. The sign vector $\sigma$ is uniquely determined by
\[
\sigma=\operatorname{sign}(AW_{\cdot 1}),
\]
so two distinct sign vectors give disjoint sets. Hence the above union is
indeed disjoint.

For each fixed $\sigma$, the set $\mathcal C_{\sigma\mid-\sigma}$ is connected,
because it is the image of the connected set $(0,\infty)^2$ under the continuous
map
\[
(\alpha_1,\alpha_2)\mapsto
\left(
\begin{bmatrix}
A^{-1}\dfrac{y_\sigma}{\alpha_1}
&
A^{-1}\dfrac{y_{-\sigma}}{\alpha_2}
\end{bmatrix},
(\alpha_1,\alpha_2)
\right).
\]
Moreover, no continuous path in $\mathcal O$ can move from
$\mathcal C_{\sigma\mid-\sigma}$ to
$\mathcal C_{\tau\mid-\tau}$ with $\sigma\neq\tau$, because along such a path
the sign vector
\[
\operatorname{sign}(AW_{\cdot 1})
\]
would have to change. Hence, for some coordinate $i$, one would have
\[
(AW_{\cdot 1})_i=0
\]
at some intermediate point. But this is impossible in $\mathcal O$ due to \cref{prop1:barrier}.
Therefore the sets $\mathcal C_{\sigma\mid-\sigma}$ are precisely the connected
components of $\mathcal O$. Since there are $2^d$ choices of
$\sigma\in\{\pm1\}^d$, the solution set has $2^d$ connected components.
Finally, permuting the two hidden neurons maps
$\mathcal C_{\sigma\mid-\sigma}$ onto
$\mathcal C_{-\sigma\mid\sigma}$.
Thus the $2^d$ components are paired by the equivalence
\[
\sigma\sim -\sigma,
\]
and therefore the number of connected components modulo permutation is
\[
\frac{2^d}{2}=2^{d-1}.
\]
\end{proof}
\begin{proposition}
Let $p,q\in\mathbb R^d$ and define
\[
\mathcal{C}(p,q)
=
\left\{
\left(
\begin{bmatrix}
p/\alpha_1 & q/\alpha_2
\end{bmatrix},
(\alpha_1,\alpha_2)
\right)
:\ \alpha_1,\alpha_2>0
\right\}.
\]
Then
\[
R_{\infty}(\mathcal{C}(p,q))
=
\max\{\|p\|_\infty,\|q\|_\infty\}^{1/2},
\]
and
\[
R_{\mathrm{op}}(\mathcal{C}(p,q))
=
\bigl(\|p\|_2^2+\|q\|_2^2+2|p^\top q|\bigr)^{1/4} = \max \{\|p+q\|_2, \|p-q\|_2\}^{1/2}.
\]
\end{proposition}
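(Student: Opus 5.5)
The plan is to read $R_\infty(\mathcal C(p,q))$ and $R_{\mathrm{op}}(\mathcal C(p,q))$ as the smallest value of the joint regularizer over the component, that is, the infimum over $\alpha_1,\alpha_2>0$ of $\max\{\|W\|_{\max},\|\alpha\|_\infty\}$ and of $\max\{\|W\|_{\mathrm{op}},\|\alpha\|_2\}$ respectively, with $W=[p/\alpha_1\;\; q/\alpha_2]$. In the intended application $p=A^{-1}y_\sigma$ and $q=A^{-1}y_{-\sigma}$ are both nonzero. If one of $p,q$ vanishes, the infimum is approached by sending the corresponding $\alpha_i\to 0$, and the same bounds apply. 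For each norm I will prove a lower bound valid for every $(\alpha_1,\alpha_2)$ and then exhibit a balanced choice that attains it. First, the second expression for $R_{\mathrm{op}}$ is immediate from $\|p\pm q\|_2^2=\|p\|_2^2+\|q\|_2^2\pm 2p^\top q$: the maximum of the two is $\|p\|_2^2+\|q\|_2^2+2|p^\top q|$.

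\emph{The $\ell_\infty$ case.} Put $M=\max\{\|p\|_\infty,\|q\|_\infty\}$ and $c=\max\{\alpha_1,\alpha_2\}=\|\alpha\|_\infty$. Then
\[
\|W\|_{\max}=\max\{\|p\|_\infty/\alpha_1,\ \|q\|_\infty/\alpha_2\}\ge M/c,
\]
so the objective is at least $\max\{c,M/c\}\ge\sqrt M$. Choosing $\alpha_1=\alpha_2=\sqrt M$ gives $\|W\|_{\max}=\sqrt M=\|\alpha\|_\infty$, so the value $\sqrt M$ is attained.

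\emph{The operator-norm case, lower bound.} I use $\|Wu\|_2\le\|W\|_{\mathrm{op}}\|u\|_2$ with the two test vectors $u=(\alpha_1,\pm\alpha_2)$. These give $Wu=p\pm q$ and $\|u\|_2=\|\alpha\|_2$, hence
\[
\max\{\|W\|_{\mathrm{op}},\|\alpha\|_2\}^2\ \ge\ \|W\|_{\mathrm{op}}\|\alpha\|_2\ \ge\ \max\{\|p+q\|_2,\|p-q\|_2\}.
\]

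\emph{The operator-norm case, attainment.} This is the main step: I need $\alpha$ making $\alpha/\|\alpha\|_2$ a top right singular vector of $W$. Assume first $p^\top q\ge 0$, so the target is $\|p+q\|_2$. I take
\[
\alpha_1^2=k\,p^\top(p+q),\qquad \alpha_2^2=k\,q^\top(p+q),
\]
both nonnegative, with $k>0$ still to be fixed. A direct computation gives $W^\top W\alpha=\alpha/k$, and $\|\alpha\|_2^2=k\|p+q\|_2^2$. The matrix $W^\top W$ is $2\times 2$ with nonnegative off-diagonal entry $p^\top q/(\alpha_1\alpha_2)$, and $\alpha$ is a positive eigenvector. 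By Perron–Frobenius (or directly for $2\times 2$ symmetric matrices), $1/k$ is then the largest eigenvalue, so $\|W\|_{\mathrm{op}}=\|p+q\|_2/\|\alpha\|_2$. Choosing $k$ so that $\|\alpha\|_2^2=\|p+q\|_2$ balances the two terms at $\|p+q\|_2^{1/2}$, which matches the lower bound. If some $\alpha_i^2$ comes out as $0$ (for instance $p=0$), the infimum is reached in the limit, as noted above.

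\emph{The case $p^\top q<0$.} The sign of a column cannot be flipped through $\alpha$, since $\alpha_i>0$. I repeat the argument with the test vector $u=(\alpha_1,-\alpha_2)$, for which $Wu=p-q$, taking $\alpha_1^2=k\,p^\top(p-q)$ and $\alpha_2^2=-k\,q^\top(p-q)$, both nonnegative. I expect this case to be the main obstacle. Here $W^\top W$ has a negative off-diagonal entry, so I apply Perron–Frobenius to $DW^\top WD$ with $D=\mathrm{diag}(1,-1)$; it has the same spectrum and the positive eigenvector $D\alpha$. That gives $\|W\|_{\mathrm{op}}=\|p-q\|_2/\|\alpha\|_2$, and balancing as before completes the proof.
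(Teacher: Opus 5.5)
Your proof is correct. The $\ell_\infty$ part matches the paper's argument, and you spell out the balancing step that the paper leaves implicit. For the operator norm you take a somewhat different route, although you arrive at the same optimal point.

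The paper writes $\alpha=\rho(\sqrt{s},\sqrt{1-s})$, eliminates $\rho$ by balancing, and works with the $d\times d$ matrix $M_s=pp^\top/s+qq^\top/(1-s)$. Set $a=\|p\|_2^2$, $b=\|q\|_2^2$, $c=p^\top q$, and $\tau c=|c|$.
\begin{itemize}
\item The paper's lower bound comes from the rank-one decomposition $M_s-(p+\tau q)(p+\tau q)^\top\succeq 0$.
\item Its attainment at $s_*=(a+|c|)/(a+b+2|c|)$ is certified by the eigenvector identity $M_{s_*}(p+\tau q)=(a+b+2|c|)(p+\tau q)$, plus a trace count ruling out a larger eigenvalue.
\end{itemize}

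You argue differently on both halves.
\begin{itemize}
\item You get the lower bound directly in the original coordinates from $\|Wu\|_2\le\|W\|_{\mathrm{op}}\|u\|_2$ with $u=(\alpha_1,\pm\alpha_2)$, together with $\max\{x,y\}^2\ge xy$. This needs no matrix inequality and covers both signs of $p^\top q$ at once.
\item You certify attainment by making $u$ an eigenvector of the $2\times 2$ Gram matrix $W^\top W$ and invoking Perron--Frobenius (after conjugating by $D$), in place of the trace argument.
\end{itemize}
Your choice $\alpha_1^2:\alpha_2^2=(a+|c|):(b+|c|)$ is exactly the paper's $s_*:(1-s_*)$.

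What your version buys is transparency about the equality case: the test vector used in the lower bound is itself a top right singular vector at the optimum. The paper's reduction to the one-parameter family $M_s$ makes the optimization over the direction of $\alpha$ explicit. It also yields the stronger Loewner-order bound $M_s\succeq(p+\tau q)(p+\tau q)^\top$, though only its top eigenvalue is used.

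One small slip: when $p^\top q<0$, the positive eigenvector of $DW^\top WD$ is $\alpha$, not $D\alpha$. It is $D\alpha=(\alpha_1,-\alpha_2)$ that is the eigenvector of $W^\top W$, with $WD\alpha=p-q$. This does not affect the argument.
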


\begin{proof}
Write
\[
W(\alpha_1,\alpha_2)
=
\begin{bmatrix}
p/\alpha_1 & q/\alpha_2
\end{bmatrix}.
\]
For the $\ell_\infty$ objective, we have
\[
\max\{\|W(\alpha_1,\alpha_2)\|_\infty,\|\alpha\|_\infty\}
=
\max\left\{
\frac{\|p\|_\infty}{\alpha_1},
\frac{\|q\|_\infty}{\alpha_2},
\alpha_1,
\alpha_2
\right\}.
\]
Optimizing over $\alpha_1,\alpha_2>0$ gives
\[
R_{\infty}(\mathcal{C}(p,q))
=
\max\{\|p\|_\infty^{1/2},\|q\|_\infty^{1/2}\}.
\]
We now consider the operator norm objective. Write
\[
\alpha_1=\rho\sqrt{s},
\qquad
\alpha_2=\rho\sqrt{1-s},
\qquad
0<s<1,\quad \rho>0.
\]
Then $\|\alpha\|_2=\rho$ and
\[
W(\alpha_1,\alpha_2)
=
\frac1{\rho}
\begin{bmatrix}
p/\sqrt{s} & q/\sqrt{1-s}
\end{bmatrix}.
\]
Thus, for fixed $s$,
\[
\max\{\|W\|_{\mathrm{op}},\|\alpha\|_2\}
=
\max\left\{
\frac1{\rho}
\left\|
\begin{bmatrix}
p/\sqrt{s} & q/\sqrt{1-s}
\end{bmatrix}
\right\|_{\mathrm{op}},
\rho
\right\}.
\]
Optimizing over $\rho>0$ balances the two terms, giving
\[
R_{\mathrm{op}}(\mathcal{C}(p,q))
=
\inf_{0<s<1}
\left\|
\begin{bmatrix}
p/\sqrt{s} & q/\sqrt{1-s}
\end{bmatrix}
\right\|_{\mathrm{op}}^{1/2}.
\]
Equivalently,
\[
R_{\mathrm{op}}(\mathcal{C}(p,q))
=
\inf_{0<s<1}
\lambda_{\max}\!\left(
\frac{pp^\top}{s}
+
\frac{qq^\top}{1-s}
\right)^{1/4}.
\]

Let
\[
a:=\|p\|_2^2,\qquad
b:=\|q\|_2^2,\qquad
c:=p^\top q,
\]
and choose $\tau\in\{\pm1\}$ such that $\tau c=|c|$. For every $s\in(0,1)$,
\[
\frac{pp^\top}{s}
+
\frac{qq^\top}{1-s}
-
(p+\tau q)(p+\tau q)^\top
=
\left(
\sqrt{\frac{1-s}{s}}p
-
\tau\sqrt{\frac{s}{1-s}}q
\right)
\left(
\sqrt{\frac{1-s}{s}}p
-
\tau\sqrt{\frac{s}{1-s}}q
\right)^\top
\succeq 0.
\]
Therefore
\[
\lambda_{\max}\!\left(
\frac{pp^\top}{s}
+
\frac{qq^\top}{1-s}
\right)
\ge
\|(p+\tau q)\|_2^2
=
a+b+2|c|.
\]
Hence
\[
\inf_{0<s<1}
\lambda_{\max}\!\left(
\frac{pp^\top}{s}
+
\frac{qq^\top}{1-s}
\right)
\ge
a+b+2|c|.
\]

It remains to show the reverse inequality. If $p$ and $q$ are both nonzero, set
\[
s_*:=\frac{a+|c|}{a+b+2|c|}.
\]
Then
\[
1-s_*=\frac{b+|c|}{a+b+2|c|}.
\]
A direct computation gives
\[
\left(
\frac{pp^\top}{s_*}
+
\frac{qq^\top}{1-s_*}
\right)(p+\tau q)
=
(a+b+2|c|)(p+\tau q).
\]
Moreover, the matrix
\[
M:=\frac{pp^\top}{s_*}+\frac{qq^\top}{1-s_*}
\]
has rank at most two and is supported on $\operatorname{span}\{p,q\}$. On the orthogonal complement of this span, $M$ is zero. Hence $M$ has at most two nonzero eigenvalues $\lambda_1(M) \ge \lambda_2(M) \ge 0$, and
\[
\lambda_1(M)+\lambda_2(M)
=
\operatorname{tr}(M)
=
\frac{a}{s_*}+\frac{b}{1-s_*}.
\]
Substituting $s_*$ and $1-s_*$,
\[
\operatorname{tr}(M)
=
\frac{a(a+b+2|c|)}{a+|c|}+\frac{b(a+b+2|c|)}{b+|c|}
=
(a+b+2|c|)\left[\frac{a}{a+|c|}+\frac{b}{b+|c|}\right].
\]
Since $\frac{a}{a+|c|} \le 1$ and $\frac{b}{b+|c|} \le 1$, we have $\operatorname{tr}(M) \le 2(a+b+2|c|)$. Combined with the lower bound $\lambda_1(M) \ge a+b+2|c|$ from before and the eigenvector identity $M(p+\tau q) = (a+b+2|c|)(p+\tau q)$, we conclude:
\[
\lambda_2(M) = \operatorname{tr}(M) - \lambda_1(M) \le \operatorname{tr}(M) - (a+b+2|c|) \le a+b+2|c|.
\]
So $a+b+2|c|$ is sandwiched between $\lambda_1(M)$ and $\lambda_2(M)$ from above and below, and the eigenvector identity forces it to be one of the eigenvalues. Combined with $\lambda_1(M) \ge a+b+2|c| \ge \lambda_2(M)$, it must be $\lambda_1(M) = a+b+2|c|$, i.e.,
\[
\lambda_{\max}(M) = a+b+2|c|.
\]

If $p=0$ then $W(\alpha_1,\alpha_2)=\begin{bmatrix}0 & q/\alpha_2\end{bmatrix}$, so $\|W\|_{\op}=\|q\|_2/\alpha_2$. Then
\[
\max\{\|W\|_{\op},\|\alpha\|_2\}=\max\{\|q\|_2/\alpha_2,\sqrt{\alpha_1^2+\alpha_2^2}\},
\]
and taking $\alpha_1\to 0^+$ and optimizing $\alpha_2>0$ gives $R_{\op}(\mathcal{C}(0,q))=\|q\|_2^{1/2}=\bigl(\|p\|_2^2+\|q\|_2^2+2|p^\top q|\bigr)^{1/4}$. The case $q=0$ is symmetric. Taking $1/4$-roots gives the wanted identity.
\end{proof}

Now we do analytic computation to compute how each component is preferred by one optimizer or another. We let $y = \textbf{1}_{2d}$ which leads to easier calculation. 

\begin{proposition}
\label{prop4:formula_Rop_Rinf}
Let $A\in\mathbb R^{d\times d}$ be invertible and
$y=\mathbf 1_{2d}$. Then for every $\sigma\in\{\pm1\}^d$,
\[
R_\infty(\mathcal C_{\sigma\mid-\sigma})
=
\|A^{-1}\sigma\|_\infty^{1/2},
\]
and
\[
R_{\mathrm{op}}(\mathcal C_{\sigma\mid-\sigma})
=
\sqrt{2\|A^{-1}\sigma\|_2}.
\]
\end{proposition}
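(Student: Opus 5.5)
The plan is to specialize the preceding proposition on $\mathcal{C}(p,q)$ to the components identified in \cref{appxp2:connectedcomponentchar}. By that characterization, $\mathcal C_{\sigma\mid-\sigma}=\mathcal C(p,q)$ with $p=A^{-1}y_\sigma$ and $q=A^{-1}y_{-\sigma}$. So it suffices to compute $y_\sigma$ and $y_{-\sigma}$ when $y=\mathbf 1_{2d}$, and then substitute into the two closed forms, $R_\infty(\mathcal C(p,q))=\max\{\|p\|_\infty,\|q\|_\infty\}^{1/2}$ and $R_{\op}(\mathcal C(p,q))=\max\{\|p+q\|_2,\|p-q\|_2\}^{1/2}$.

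\textbf{Computing $p$ and $q$.} With $y=\mathbf 1_{2d}$ we have $(y_\sigma)_i=1$ if $\sigma_i=+1$ and $(y_\sigma)_i=-1$ if $\sigma_i=-1$, so $y_\sigma=\sigma$. Likewise $y_{-\sigma}=-\sigma$. Hence $p=A^{-1}\sigma$ and $q=-A^{-1}\sigma=-p$.

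\textbf{The $\ell_\infty$ value.} Since $\|q\|_\infty=\|p\|_\infty$, the formula gives $R_\infty(\mathcal C_{\sigma\mid-\sigma})=\|A^{-1}\sigma\|_\infty^{1/2}$ directly.

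\textbf{The operator-norm value.} Here $p+q=0$ and $p-q=2A^{-1}\sigma$, so the maximum is $2\|A^{-1}\sigma\|_2$. This gives $R_{\op}=(2\|A^{-1}\sigma\|_2)^{1/2}=\sqrt{2\|A^{-1}\sigma\|_2}$. As a cross-check, the other form $(\|p\|_2^2+\|q\|_2^2+2|p^\top q|)^{1/4}=(4\|p\|_2^2)^{1/4}$ gives the same value.

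The only points needing care are these. First, $p$ and $q$ are nonzero (since $A$ is invertible and $\sigma\neq0$), so the main case of the previous proposition applies. Second, in the construction of $s_*$, the case $c=p^\top q=-\|p\|^2<0$ is handled by $\tau=-1$, giving $s_*=1/2$. I expect no real obstacle beyond checking the sign bookkeeping in $y_{-\sigma}$: the entry is $-y_{d+i}=-1$ exactly when $-\sigma_i=-1$.
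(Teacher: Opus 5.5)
Your proposal is correct and matches the paper's proof essentially step for step. You use $y_\sigma=\sigma$ and $y_{-\sigma}=-\sigma$ to identify $\mathcal C_{\sigma\mid-\sigma}$ with $\mathcal C(p,q)$ for $p=A^{-1}\sigma$, $q=-A^{-1}\sigma$, then substitute into the closed-form expressions for $R_\infty$ and $R_{\mathrm{op}}$; your extra remarks (that $p,q\neq 0$, the cross-check via $(4\|p\|_2^2)^{1/4}$, and $s_*=1/2$) are accurate but go beyond what the paper needs.
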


\begin{proof}
Since $y=\mathbf 1_{2d}$, we have
\[
y_\sigma=\sigma,
\qquad
y_{-\sigma}=-\sigma.
\]
Thus, on $\mathcal C_{\sigma\mid-\sigma}$,
\[
p_\sigma=A^{-1}y_\sigma=A^{-1}\sigma,
\qquad
q_\sigma=A^{-1}y_{-\sigma}=-A^{-1}\sigma.
\]
By the closed-form formulas for $\mathcal C(p,q)$,
\[
R_\infty(\mathcal C(p,q))
=
\max\{\|p\|_\infty^{1/2},\|q\|_\infty^{1/2}\},
\]
and
\[
R_{\mathrm{op}}(\mathcal C(p,q))
=
\max\{\|p+q\|_2,\|p-q\|_2\}^{1/2}.
\]
Substituting $p=p_\sigma=A^{-1}\sigma$ and $q=q_\sigma=-A^{-1}\sigma$ gives
\[
R_\infty(\mathcal C_{\sigma\mid-\sigma})
=
\|A^{-1}\sigma\|_\infty^{1/2},
\]
and
\[
R_{\mathrm{op}}(\mathcal C_{\sigma\mid-\sigma})
=
\|2A^{-1}\sigma\|_2^{1/2}
=
\sqrt{2\|A^{-1}\sigma\|_2}.
\]
\end{proof}

Given the exact formula of $R_{\infty}(\mathcal{C}_{\sigma | -\sigma})$ and $R_{\op}(\mathcal{C}_{\sigma | -\sigma})$, we can find which connected components will have the least $R_{\infty}$ and the least $R_{\op}$. These connected components are most preferred by Adam and Muon, respectively. 

\begin{proposition}
\label{prop5:computation}
Let $d\ge2$ and fix $1<L<\sqrt d$.
Denote
\[
h_1:=\mathbf 1=(1,\dots,1)^\top,
\qquad
h_2:=(1,-1,\dots,-1)^\top,
\]
and let
\[
B =
\begin{bmatrix}
\dfrac{1+L}{2}
&
\dfrac{1-L}{2(d-1)}
&
\dfrac{1-L}{2(d-1)}
&
\cdots
&
\dfrac{1-L}{2(d-1)}
\\[6pt]
\dfrac12
&
1-\dfrac{1}{2(d-1)}
&
-\dfrac{1}{2(d-1)}
&
\cdots
&
-\dfrac{1}{2(d-1)}
\\[6pt]
\dfrac12
&
-\dfrac{1}{2(d-1)}
&
1-\dfrac{1}{2(d-1)}
&
\cdots
&
-\dfrac{1}{2(d-1)}
\\
\vdots
&
\vdots
&
\vdots
&
\ddots
&
\vdots
\\[4pt]
\dfrac12
&
-\dfrac{1}{2(d-1)}
&
-\dfrac{1}{2(d-1)}
&
\cdots
&
1-\dfrac{1}{2(d-1)}
\end{bmatrix}.
\]
Set
\[
A:=B^{-1},
\qquad
y=\mathbf 1_{2d}.
\]
Then $R_\infty(\mathcal C_{\sigma\mid-\sigma})$
is minimized exactly at $\sigma=\pm h_1$, while
$
R_{\mathrm{op}}(\mathcal C_{\sigma\mid-\sigma})
$
is minimized exactly at $\sigma=\pm h_2$. Moreover, 
Assume $L=\frac{\sqrt d}{2}\gg 2$. Then for sufficiently large $d \geq 16$,
\[
\min_{\sigma\in\{\pm1\}^d}
R_\infty(\mathcal C_{\sigma\mid-\sigma})
=
R_\infty(\mathcal C_{h_1\mid-h_1})
=
1, \quad 
\min_{\substack{\sigma\in\{\pm1\}^d\\ \sigma\neq \pm h_1}}
R_\infty(\mathcal C_{\sigma\mid-\sigma})
=
\left(
1+\frac{\sqrt d/2-1}{d-1}
\right)^{1/2},
\]
\[
\min_{\sigma\in\{\pm1\}^d}
R_{\mathrm{op}}(\mathcal C_{\sigma\mid-\sigma})
=
R_{\mathrm{op}}(\mathcal C_{h_2\mid-h_2})
=
d^{1/4}/\sqrt{2},
\]
and
\[
\min_{\substack{\sigma\in\{\pm1\}^d\\ \sigma\neq \pm h_2}}
R_{\mathrm{op}}(\mathcal C_{\sigma\mid-\sigma})
=
\sqrt2
\left[
\left(
\frac{\sqrt d}{2}
-
\frac{\frac{\sqrt d}{2}-1}{d-1}
\right)^2
+
4-\frac{3}{d-1}
\right]^{1/4}.
\]
\end{proposition}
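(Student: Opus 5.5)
The plan is to reduce everything to Proposition~\ref{prop4:formula_Rop_Rinf}. That result gives $R_\infty(\mathcal C_{\sigma\mid-\sigma})=\|A^{-1}\sigma\|_\infty^{1/2}$ and $R_{\op}(\mathcal C_{\sigma\mid-\sigma})=\sqrt{2\|A^{-1}\sigma\|_2}$. Since $A^{-1}=B$, I only need to compute $B\sigma$ explicitly for every $\sigma\in\{\pm1\}^d$. Both norms are invariant under $\sigma\mapsto-\sigma$, so I will assume $\sigma_1=+1$ and write $\sigma=(1,\tau)$ with $\tau\in\{\pm1\}^{d-1}$. Let $k$ be the number of $-1$ entries of $\tau$ and set $r:=k/(d-1)\in\{0,\tfrac1{d-1},\dots,1\}$, so that $\sum_i\tau_i=(d-1)(1-2r)$.

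The first step is a direct row computation. The first row gives
\[
(B\sigma)_1=\tfrac{1+L}{2}+\tfrac{1-L}{2}(1-2r)=1+(L-1)r .
\]
Every row $i\ge2$ gives
\[
(B\sigma)_i=\tfrac12+\tau_i-\tfrac12(1-2r)=\tau_i+r .
\]
Hence $B\sigma$ has one entry $1+(L-1)r$, then $d-1-k$ entries equal to $1+r$, and $k$ entries equal to $-1+r$. In particular $Bh_1=\mathbf 1$ (the case $r=0$) and $Bh_2=(L,0,\dots,0)^\top$ (the case $r=1$).

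For $R_\infty$, since $L>1$ the first entry satisfies $1+(L-1)r>1$ whenever $k\ge1$. So $\|B\sigma\|_\infty>1=\|Bh_1\|_\infty$ for every $\sigma\ne\pm h_1$, which gives unique minimization at $\pm h_1$ with value $1$. For the runner-up I take the maximum of $1+(L-1)r$, $1+r$ (when $k<d-1$) and $|r-1|$, and minimize it over $k\ge1$. When $L\ge2$, which holds for $L=\sqrt d/2$ with $d\ge16$, the maximum is $1+(L-1)r$; it is increasing in $k$, so the minimum is at $k=1$. The endpoint $k=d-1$ gives value $L$, which is larger. This yields $\bigl(1+\frac{\sqrt d/2-1}{d-1}\bigr)^{1/2}$.

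For $R_{\op}$, I would compute $\|B\sigma\|_2^2$ as a function of $r$:
\[
f(r)=(1+(L-1)r)^2+(d-1)\bigl[(1-r)(1+r)^2+r(1-r)^2\bigr]=(1+(L-1)r)^2+(d-1)(1-r)(1+3r).
\]
The key observation is that $f$ is a quadratic in $r$ with leading coefficient $(L-1)^2-3(d-1)<0$, because $L<\sqrt d$. So $f$ is concave, and its minimum over the grid is attained at an endpoint. The endpoint values are $f(0)=d$ and $f(1)=L^2<d$, so the minimizer is unique at $r=1$, i.e.\ $\sigma=\pm h_2$. By concavity, the second smallest value is the smaller of $f(0)=d$ and the value at the grid point adjacent to $r=1$, namely $r=\frac{d-2}{d-1}$. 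At that point $f=\bigl(L-\frac{L-1}{d-1}\bigr)^2+4-\frac{3}{d-1}$, which is the bracket in the statement. For $L=\sqrt d/2$ this is about $d/4+4<d$ when $d\ge16$, so it is indeed the runner-up.

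Taking the roots prescribed by Proposition~\ref{prop4:formula_Rop_Rinf} then gives the displayed values. The main obstacle I expect is the bookkeeping of constants. With the formula $\sqrt{2\|B\sigma\|_2}$, the minimum evaluates to $\sqrt{2L}$, which is $d^{1/4}$ for $L=\sqrt d/2$. I would recheck the normalization in Proposition~\ref{prop4:formula_Rop_Rinf} against the claimed value $d^{1/4}/\sqrt2$ and adjust whichever constant is off, since this does not affect which $\sigma$ is the minimizer. The runner-up comparison for $R_{\op}$, i.e.\ the concavity argument plus checking the near-endpoint against $r=0$, is the only genuinely non-mechanical step.
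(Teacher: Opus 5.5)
Your proposal is correct and follows essentially the paper's route: the reduction via Proposition~\ref{prop4:formula_Rop_Rinf}, the coordinates $1+(L-1)r$ and $\pm1+r$, and concavity of $\|B\sigma\|_2^2$ in $r$ (you also use concavity for uniqueness of the minimizer, where the paper does a case split). On the constant, the paper's own proof also ends with $R_{\op}(\mathcal C_{h_2\mid-h_2})=\sqrt{2L}=d^{1/4}$, so the displayed $d^{1/4}/\sqrt2$ is a typo in the statement; Proposition~\ref{prop4:formula_Rop_Rinf} needs no adjustment (for $q=-p$, AM--GM gives $\sqrt{2\|p\|_2}$ directly), and the stated runner-up value $\sqrt2[\cdot]^{1/4}$ uses exactly that normalization.
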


\begin{proof}
By \cref{prop4:formula_Rop_Rinf},
\[
R_\infty(\mathcal C_{\sigma\mid-\sigma})
=
\|B\sigma\|_\infty^{1/2},
\qquad
R_{\mathrm{op}}(\mathcal C_{\sigma\mid-\sigma})
=
\sqrt{2\|B\sigma\|_2}.
\]
Hence it suffices to minimize $\|B\sigma\|_\infty$ and $\|B\sigma\|_2$ over
$\sigma\in\{\pm1\}^d$.
Both norms are invariant under $\sigma\mapsto-\sigma$, since
$B(-\sigma)=-B\sigma$. Thus we may assume $\sigma_1=1$.

Let
\[
k:=|\{i\in\{2,\dots,d\}:\sigma_i=-1\}|,
\qquad
r:=\frac{k}{d-1}.
\]
Then $0\le r\le1$. From the definition of $B$,
\[
(B\sigma)_1=1+(L-1)r,
\]
and for $i\ge2$,
\[
(B\sigma)_i=
\begin{cases}
1+r, & \sigma_i=1,\\
-1+r, & \sigma_i=-1.
\end{cases}
\]

We first minimize $\|B\sigma\|_\infty$. If $r=0$, then $\sigma=h_1$ and
\[
B\sigma=\mathbf 1,
\]
so $\|B\sigma\|_\infty=1$. If $r>0$, then since $L>1$,
\[
|(B\sigma)_1|=1+(L-1)r>1.
\]
Thus the unique minimizer with $\sigma_1=1$ is $\sigma=h_1$. By sign symmetry,
the global minimizers are exactly $\sigma=\pm h_1$.

Now minimize $\|B\sigma\|_2$. The coordinate formula gives
\[
\|B\sigma\|_2^2
=
\bigl(1+(L-1)r\bigr)^2
+
(d-k-1)(1+r)^2
+
k(-1+r)^2.
\]
Using $k=(d-1)r$, the last two terms become
\[
(d-1)(1-r)(1+r)^2+(d-1)r(1-r)^2.
\]
Since
\[
(1-r)(1+r)^2+r(1-r)^2
=
1+2r-3r^2,
\]
we obtain
\[
\|B\sigma\|_2^2
=
\bigl(1+(L-1)r\bigr)^2
+
(d-1)(1+2r-3r^2).
\]
For $\sigma=h_2$, we have $r=1$, hence
\[
Bh_2=Le_1,
\qquad
\|Bh_2\|_2^2=L^2.
\]

For $r<1$, set $\delta:=1-r>0$. Then
\[
1+(L-1)r
=
L-(L-1)\delta,
\]
and
\[
1+2r-3r^2
=
4\delta-3\delta^2.
\]
Therefore
\[
\|B\sigma\|_2^2
=
\bigl(L-(L-1)\delta\bigr)^2
+
(d-1)(4\delta-3\delta^2).
\]
Subtracting $L^2$ gives
\[
\begin{aligned}
\|B\sigma\|_2^2-L^2
&=
-2L(L-1)\delta
+
(L-1)^2\delta^2
+
4(d-1)\delta
-
3(d-1)\delta^2
\\
&=
\delta
\left[
4(d-1)-2L(L-1)
+
\delta\bigl((L-1)^2-3(d-1)\bigr)
\right].
\end{aligned}
\]
We claim the bracket is strictly positive for every $\delta\in(0,1]$.

If
\[
(L-1)^2-3(d-1)\ge0,
\]
then the bracket is minimized at $\delta=0$, where it equals
\[
4(d-1)-2L(L-1).
\]
This is strictly positive because
\[
4(d-1)-2L(L-1)
>
4(d-1)-2d
=
2d-4
\ge0,
\]
and the inequality is strict since $L(L-1)<L^2<d$.

If
\[
(L-1)^2-3(d-1)<0,
\]
then the bracket is minimized at $\delta=1$, where it equals
\[
\begin{aligned}
&4(d-1)-2L(L-1)+(L-1)^2-3(d-1)
\\
&=
d-L^2
>
0.
\end{aligned}
\]
Therefore
\[
\|B\sigma\|_2^2>L^2
\]
whenever $r<1$. Thus the unique minimizer with $\sigma_1=1$ is $\sigma=h_2$. By sign symmetry,
the global minimizers are exactly $\sigma=\pm h_2$.

We now derive the runner-up formulas. For the $R_\infty$ runner-up, recall that
\[
\|B\sigma\|_\infty
=
\max\{1+(L-1)r,\ 1+r\},
\qquad
r=\frac{k}{d-1}.
\]
The minimum occurs at $r=0$. The smallest positive value of $r$ is
\[
r=\frac1{d-1}.
\]
Therefore
\[
\left(R_\infty^{(2)}\right)^2
=
\max\left\{
1+\frac{L-1}{d-1},
1+\frac1{d-1}
\right\}
=
1+\frac{\max\{L-1,1\}}{d-1}.
\]
Hence
\[
R_\infty^{(2)}
=
\left(
1+\frac{\max\{L-1,1\}}{d-1}
\right)^{1/2}.
\]

For the $R_{\mathrm{op}}$ runner-up, define
\[
F(r):=\|B\sigma\|_2^2
=
\bigl(1+(L-1)r\bigr)^2
+
(d-1)(1+2r-3r^2).
\]
Then
\[
R_{\mathrm{op}}(\mathcal C_{\sigma\mid-\sigma})
=
\sqrt2\,F(r)^{1/4}.
\]
The minimum occurs at $r=1$, where $F(1)=L^2$.
Thus the runner-up is the smallest value of $F(r)$ over
\[
r\in
\left\{
0,\frac1{d-1},\dots,1-\frac1{d-1}
\right\}.
\]
Equivalently, set $\delta:=1-r$. Then
\[
\delta\in
\left\{
\frac1{d-1},\frac2{d-1},\dots,1
\right\}.
\]
From the previous computation,
\[
F(r)-L^2
=
\delta
\left[
4(d-1)-2L(L-1)
+
\delta\bigl((L-1)^2-3(d-1)\bigr)
\right].
\]
As a function of $\delta$, this is a quadratic with leading coefficient
\[
(L-1)^2-3(d-1).
\]
Since $L^2<d$, we have
\[
(L-1)^2 < L^2 < d < 3(d-1)
\]
for $d\ge2$, and hence the quadratic is strictly concave. Therefore its minimum over the discrete interval occurs at one of the endpoints:
\[
\delta=\frac1{d-1}
\qquad\text{or}\qquad
\delta=1.
\]
Equivalently,
\[
r=1-\frac1{d-1}
\qquad\text{or}\qquad
r=0.
\]
Hence
\[
F^{(2)}
=
\min\left\{
F\left(1-\frac1{d-1}\right),
F(0)
\right\}.
\]
Now
\[
F(0)=d,
\]
and
\[
\begin{aligned}
F\left(1-\frac1{d-1}\right)
&=
\left(
1+(L-1)\left(1-\frac1{d-1}\right)
\right)^2
\\
&\quad+
(d-1)
\left[
1+2\left(1-\frac1{d-1}\right)
-
3\left(1-\frac1{d-1}\right)^2
\right]
\\
&=
\left(
L-\frac{L-1}{d-1}
\right)^2
+
4-\frac{3}{d-1}.
\end{aligned}
\]
Therefore
\[
F^{(2)}
=
\min\left\{
d,\,
\left(
L-\frac{L-1}{d-1}
\right)^2
+
4-\frac{3}{d-1}
\right\}.
\]
Since
\[
R_{\mathrm{op}}=\sqrt2\,F(r)^{1/4},
\]
we get
\[
R_{\mathrm{op}}^{(2)}
=
\sqrt2
\left[
\min\left\{
d,\,
\left(
L-\frac{L-1}{d-1}
\right)^2
+
4-\frac{3}{d-1}
\right\}
\right]^{1/4}.
\]

Finally, set $L=\sqrt d/2$. If $d\ge16$, then $L-1\ge1$, and hence
\[
R_\infty^{(2)}
=
\left(
1+\frac{\frac{\sqrt d}{2}-1}{d-1}
\right)^{1/2}.
\]
Moreover,
\[
R_{\mathrm{op}}(\mathcal C_{h_2\mid-h_2})
=
\sqrt{2L}
=
d^{1/4}.
\]
For sufficiently large $d$, one also has
\[
\left(
\frac{\sqrt d}{2}
-
\frac{\frac{\sqrt d}{2}-1}{d-1}
\right)^2
+
4-\frac{3}{d-1}
< d,
\]
so the minimum in the formula for $F^{(2)}$ is attained by the second term. Therefore
\[
R_{\mathrm{op}}^{(2)}
=
\sqrt2
\left[
\left(
\frac{\sqrt d}{2}
-
\frac{\frac{\sqrt d}{2}-1}{d-1}
\right)^2
+
4-\frac{3}{d-1}
\right]^{1/4}.
\]
This proves the claimed formulas.
\end{proof}

\begin{theorem}
\label{appxt1:Finitewidth_construction}
(\cref{t1:Finitewidth_construction} of the paper)
Assume $A$, $X$, $y$ are given as in \cref{prop5:computation}, and define
\[
R_{\op}^{(1)}
:=
\min_{\sigma\in\{\pm1\}^d}
R_{\op}(\mathcal C_{\sigma\mid-\sigma}),
\qquad
R_{\op}^{(2)}
:=
\min_{\substack{\sigma\in\{\pm1\}^d\\
R_{\op}(\mathcal C_{\sigma\mid-\sigma})>R_{\op}^{(1)}}}
R_{\op}(\mathcal C_{\sigma\mid-\sigma}),
\]
and similarly
\[
R_{\max}^{(1)}
:=
\min_{\sigma\in\{\pm1\}^d}
R_{\max}(\mathcal C_{\sigma\mid-\sigma}),
\qquad
R_{\max}^{(2)}
:=
\min_{\substack{\sigma\in\{\pm1\}^d\\
R_{\max}(\mathcal C_{\sigma\mid-\sigma})>R_{\max}^{(1)}}}
R_{\max}(\mathcal C_{\sigma\mid-\sigma}).
\]
Choose $\lambda_{\mathrm{Muon}},\lambda_{\mathrm{AdamW}}>0$ such that
\[
R_{\op}^{(1)}
\leq
\frac{1}{\lambda_{\mathrm{Muon}}}
\leq
R_{\op}^{(2)},
\quad
R_{\max}^{(1)}
\leq
\frac{1}{\lambda_{\mathrm{AdamW}}} \leq
R_{\max}^{(2)}.
\]
Then
\[
\mathcal{O}_{\|\cdot\|_{\op}}(\lambda_{\mathrm{Muon}})
\subseteq
\mathcal C_{h_2\mid-h_2}\sqcup \mathcal C_{-h_2\mid h_2},
\qquad
\mathcal{O}_{\|\cdot\|_{\max}}(\lambda_{\mathrm{AdamW}})
\subseteq
\mathcal C_{h_1\mid-h_1}\sqcup \mathcal C_{-h_1\mid h_1},
\]
and both sets are nonempty. Consequently, any two Muon-found solutions are connected by a zero-loss path in $\mathcal{O}$ up to permutation of the two neurons, and similarly for AdamW; but no Muon-found solution can be connected to an AdamW-found solution by any zero-loss path, even up to permutation. Moreover, for the squared loss $\mathcal{L}(x,y) = \tfrac{1}{2}\|x-y\|_2^2$, any continuous path connecting a point in $\mathcal{O}_{\|\cdot\|_{\op}}(\lambda_{\mathrm{Muon}})$ to a point in $\mathcal{O}_{\|\cdot\|_{\max}}(\lambda_{\mathrm{AdamW}})$ must pass through a parameter $(W,\alpha)$ with $\mathcal{L}((XW)_+\alpha, y) \geq \tfrac{1}{2}$.
\end{theorem}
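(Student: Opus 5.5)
The argument combines the component decomposition of \cref{appxp2:connectedcomponentchar} with the closed-form norms of \cref{prop4:formula_Rop_Rinf} and \cref{prop5:computation}. We identify $R_{\max}$ and $R_\infty$ here: both denote $\inf$ over a component of $\max\{\|W\|_{\max},\|\alpha\|_\infty\}$.

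\emph{Containment.} Every $(W,\alpha)\in\mathcal O$ lies in exactly one $\mathcal C_{\sigma\mid-\sigma}$. Suppose $(W,\alpha)\in\mathcal{O}_{\|\cdot\|_{\op}}(\lambda_{\mathrm{Muon}})$ and $(W,\alpha)\in\mathcal C_{\sigma\mid-\sigma}$. Then
\[
R_{\op}(\mathcal C_{\sigma\mid-\sigma})\le \max\{\|W\|_{\op},\|\alpha\|_2\}\le 1/\lambda_{\mathrm{Muon}}\le R_{\op}^{(2)}.
\]
By \cref{prop5:computation}, $\sigma=\pm h_2$ are exactly the minimizers, and every other $\sigma$ has value at least $R^{(2)}_{\op}$. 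There is a subtlety at equality with $R^{(2)}_{\op}$. I would resolve it by taking the upper inequality strict, or by noting that the infimum defining $R_{\op}(\mathcal C_{\sigma\mid-\sigma})$ is attained in the interior (for $p,q\neq0$, at $s_*$ and the balancing $\rho$). If it is not attained, equality is impossible. If it is attained, the statement should be read with $1/\lambda<R^{(2)}$. I expect this boundary issue to be the only delicate point, and I would flag it and assume strict inequality on the right. The same argument with $R_\infty$ gives $\mathcal{O}_{\|\cdot\|_{\max}}(\lambda_{\mathrm{AdamW}})\subseteq\mathcal C_{h_1\mid-h_1}\sqcup\mathcal C_{-h_1\mid h_1}$.

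\emph{Nonemptiness.} For $\ell_\infty$, take $\alpha_1=\alpha_2=\|p\|_\infty^{1/2}=1$ with $p=Bh_1=\mathbf 1$. This attains $R^{(1)}_{\max}\le 1/\lambda_{\mathrm{AdamW}}$. For the operator norm, $p=Bh_2=Le_1$ and $q=-p$ are parallel, so $W=[p/\alpha_1,\,-p/\alpha_2]$ has rank one and
\[
\|W\|_{\op}=\|p\|_2\sqrt{\alpha_1^{-2}+\alpha_2^{-2}}.
\]
Choosing $\alpha_1=\alpha_2=\sqrt{L/\sqrt2}\cdot\sqrt{\sqrt2}$ suitably balances this with $\|\alpha\|_2$ and attains $\sqrt{2L}=R^{(1)}_{\op}$. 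Hence both sets are nonempty.

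\emph{Connectivity.} Each $\mathcal C_{\sigma\mid-\sigma}$ is path-connected, being the continuous image of $(0,\infty)^2$. The neuron swap maps $\mathcal C_{\sigma\mid-\sigma}$ to $\mathcal C_{-\sigma\mid\sigma}$. So two Muon solutions are joined by a zero-loss path after at most one permutation, and likewise for AdamW. Since $\{\pm h_1\}\cap\{\pm h_2\}=\emptyset$ for $d\ge2$, \cref{prop2:connectivity} (together with the component characterization) rules out any zero-loss path between a Muon solution and an AdamW solution, even after permuting.

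\emph{Loss barrier.} Take any continuous path $\gamma$ from a $\mathcal C_{\pm h_2}$ point to a $\mathcal C_{\pm h_1}$ point. Consider the sign vectors $\operatorname{sign}(AW_{\cdot1})$ and $\operatorname{sign}(AW_{\cdot2})$. At the endpoints these are $\{h_2,-h_2\}$ and $\{h_1,-h_1\}$ respectively, and the coordinates differ between $h_1$ and $h_2$ at some index $i\ge2$. The endpoint sign patterns agree on coordinate $1$ and disagree on coordinate $i$, both before and after a swap. Therefore, by continuity, some coordinate $(AW_{\cdot j})_i$ with $i\ge 2$ must cross zero at some time $t$. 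At that time neuron $j$ contributes nothing to rows $i$ and $d+i$. The other neuron $k$ has $(AW_{\cdot k})_i=z$, so its contribution is $\alpha_k(z)_+$ in row $i$ and $\alpha_k(-z)_+$ in row $d+i$, and at most one of these is nonzero. Negative $\alpha$ only makes matters worse, so assume the contributions are $(a,0)$ or $(0,a)$ with $a\in\mathbb R$. The loss then satisfies
\[
\mathcal L\ge \tfrac12\big[(1-a)^2+1\big]\ge\tfrac12,
\]
which gives the bound. This is the cleanest step, and the main care needed is the combinatorial claim that some coordinate of some neuron must vanish.
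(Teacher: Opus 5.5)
Your proof follows the paper's route. Containment comes from the definition of the runner-up values $R^{(2)}$, nonemptiness from attainment of $R^{(1)}$, and same-optimizer connectivity from path-connectedness of each $\mathcal C_{\sigma\mid-\sigma}$ plus the neuron swap. Cross-optimizer separation comes from $\{\pm h_1\}\cap\{\pm h_2\}=\emptyset$, and the barrier from a sign change that forces a vanishing pre-activation.

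\textbf{The boundary case.} Your worry is justified, and the open ``if'' can be settled. For every $\sigma$ we have $p=B\sigma$ and $q=-B\sigma$, both nonzero, so both infima are attained:
\begin{itemize}[left=0pt]
\item for $R_\infty$, at $\alpha_1=\alpha_2=\|B\sigma\|_\infty^{1/2}$;
\item for $R_{\op}$, at $\alpha_1=\alpha_2=\|B\sigma\|_2^{1/2}$.
\end{itemize}
Hence at $1/\lambda=R^{(2)}$ a runner-up component meets the regularized set, and the containment genuinely fails. Strict inequality $1/\lambda<R^{(2)}$ is required. The paper's own proof silently uses exactly that, as you propose.

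\textbf{A misstated step in the barrier argument.} You claim the endpoint patterns ``agree on coordinate $1$ and disagree on coordinate $i\ge 2$, both before and after a swap.'' This is false. Suppose neuron $1$ runs from pattern $h_2$ to $-h_1$, so neuron $2$ runs from $-h_2$ to $h_1$. Then coordinate $1$ flips for both neurons, while every coordinate $i\ge2$ keeps its sign. Linearly interpolating between such endpoints gives a path on which no coordinate $i\ge 2$ of either neuron ever vanishes.

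The step is not load-bearing, and the repair is the paper's one-liner. The vectors $\operatorname{sign}(AW_{\cdot1}(0))\in\{\pm h_2\}$ and $\operatorname{sign}(AW_{\cdot1}(1))\in\{\pm h_1\}$ are distinct elements of $\{\pm1\}^d$. So they differ at some $i\in[d]$, possibly $i=1$. The intermediate value theorem applied to $\tau\mapsto (AW_{\cdot1}(\tau))_i$ then gives the zero. Your loss estimate $\tfrac12\bigl[(1-a)^2+1\bigr]\ge\tfrac12$ never uses $i\ge2$, so it goes through unchanged.
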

 
\begin{proof}
From the definitions of $R_{\op}^{(1)}$ and $R_{\op}^{(2)}$, the condition
\[
R_{\op}^{(1)} \le \frac1{\lambda_{\mathrm{Muon}}} < R_{\op}^{(2)}
\]
implies that
\[
\mathcal{O}_{\|\cdot\|_{\op}}(\lambda_{\mathrm{Muon}})
\subseteq
\mathcal C_{h_2\mid-h_2}\sqcup \mathcal C_{-h_2\mid h_2}.
\]
Similarly,
\[
R_{\max}^{(1)} \le \frac1{\lambda_{\mathrm{AdamW}}} < R_{\max}^{(2)}
\]
implies
\[
\mathcal{O}_{\|\cdot\|_{\max}}(\lambda_{\mathrm{AdamW}})
\subseteq
\mathcal C_{h_1\mid-h_1}\sqcup \mathcal C_{-h_1\mid h_1}.
\]
Nonemptiness follows since the minima defining $R_{\op}^{(1)}$ and $R_{\max}^{(1)}$ are attained.
Permuting the two neurons maps $\mathcal C_{\sigma\mid-\sigma}$ to $\mathcal C_{-\sigma\mid\sigma}$. Since each $\mathcal C_{\sigma\mid-\sigma}$ is connected, any two points in $\mathcal{O}_{\|\cdot\|_{\op}}(\lambda_{\mathrm{Muon}})$ (resp. $\mathcal{O}_{\|\cdot\|_{\max}}(\lambda_{\mathrm{AdamW}})$) can be mapped into the same connected component, giving the same-optimizer connectivity claims.
Finally, $h_1\neq\pm h_2$ for $d\ge 2$, so the corresponding components remain distinct even modulo permutation. Therefore no permutation can place a point from $\mathcal{O}_{\|\cdot\|_{\max}}(\lambda_{\mathrm{AdamW}})$ in the same component as a point from $\mathcal{O}_{\|\cdot\|_{\op}}(\lambda_{\mathrm{Muon}})$, giving the cross-optimizer separation.
 
It remains to establish the loss-barrier claim. Let $\gamma:[0,1]\to \mathbb{R}^{d\times 2}\times \mathbb{R}^2$, $\gamma(\tau) = (W(\tau),\alpha(\tau))$, be a continuous path with $\gamma(0)\in \mathcal{O}_{\|\cdot\|_{\op}}(\lambda_{\mathrm{Muon}})$ and $\gamma(1)\in \mathcal{O}_{\|\cdot\|_{\max}}(\lambda_{\mathrm{AdamW}})$. By the cross-optimizer separation, $\gamma(0)$ lies in $\mathcal{C}_{\pm h_2\mid \mp h_2}$ and $\gamma(1)$ lies in $\mathcal{C}_{\pm h_1\mid \mp h_1}$, with sign vectors satisfying $\sigma(0) = \pm h_2 \neq \pm h_1 = \sigma(1)$, where $\sigma(\tau) := \mathrm{sign}(AW_{\cdot 1}(\tau))$ when defined. Hence there exists $\tau^*\in(0,1)$ and a coordinate $i\in[d]$ such that $A_{i\cdot}W_{\cdot 1}(\tau^*) = 0$.
 
At $\tau^*$, both rows $i$ and $i+d$ of $(XW_{\cdot 1}(\tau^*))_+$ vanish, since they equal $(A_{i\cdot}W_{\cdot 1}(\tau^*))_+$ and $(-A_{i\cdot}W_{\cdot 1}(\tau^*))_+$, respectively. Thus at $\tau^{*}$ at least one element of $(XW(t))_{+}\alpha(t)$ is 0, meaning the loss at $\tau^{*}$ is at least 1/2. This proves the barrier claim.
\end{proof}

\section{Experimental Details}\label{exp_appendix}

\subsection{Same-Optimizer Connectivity Experimental Details (\cref{sec:setup})}\label{observation_appendix}

\paragraph{Model configurations.} We base our experiments on the codebase of~\citet{theus2025generalizedlinearmodeconnectivity} and use HuggingFace \texttt{Trainer}'s default GPT-2 model architecture with 6 layers, 4 attention heads, embedding dimension 256, and inner dimension 1024 (30.5M parameters). We use the GPT-2 tokenizer (vocabulary size 50257) and gelu\_new activation.

\paragraph{Optimizer implementation.} For optimizer implementation, we use torch's default AdamW optimizer from \texttt{torch.optim.AdamW}. For Muon, we follow Keller's implementation from \url{https://github.com/KellerJordan/Muon} and re-implement ourself. Since GPT-2 model in HuggingFace \texttt{Trainer} class originally concatenates attention $Q,K,V$ as a single matrix parameter, we manually split it up into three blocks and apply Muon independently on each. We further write custom code to split matrix and non-matrix parameters, and always apply AdamW with learning rate $6e-4$, $(\beta_1,\beta_2)=(0.85,0.999)$, weight decay $0.1$ to non-matrix parameters. We follow tradition in \citep{jordan2024muon} and pair Muon with WSD lr scheduler where we set warmup ratio to be $0.0$. Note this is also consistent with \citep{wen2025fantasticpretrainingoptimizers}. For AdamW, we use cosine lr scheduler and set warmup ratio to be $0.05$, which is default in \citep{theus2025generalizedlinearmodeconnectivity}. For hyperparameters for matrix optimizers, we tune learning rate and weight decay for Muon and AdamW independently for each dataset, and take the optimal hyperparameter set with lowest test loss.
\paragraph{Dataset details.} Our experiments cover five datasets: \emph{enwik8} dataset \citep{enwik8} contains in total $45$M training tokens; \emph{WikiText-103} dataset \citep{merity2016pointersentinelmixturemodels} contains in total $100$M tokens; \emph{Stories} dataset \citep{trinh2019simplemethodcommonsensereasoning} contains in total $200$M tokens; \emph{BookCorpus} dataset \citep{zhu2015aligningbooksmoviesstorylike} has around $985$M tokens in total; \emph{One Billion Word} dataset \citep{chelba2014billionwordbenchmarkmeasuring} has around $1.1$B tokens in total.
\paragraph{Algorithm.} We present the full algorithm from canonicalization to alignment in Algorithm \ref{condensed_alg}.
\begin{algorithm}[ht!]
\caption{Symmetric alignment with \colorbox{pink!30}{polychain} interpolation}
\label{condensed_alg}
\begin{algorithmic}[1]
\REQUIRE Canonicalized models $\bm{\theta}_A, \bm{\theta}_B$; Dataset $\mathcal{D}$; Iterations $N_{\mathrm{iter}}$; Adam optimizer ($\texttt{lr}=\eta$).
\STATE Apply weight matching to coarsely align $\bm{\theta}_B$. Initialize $\tilde{\bm{P}}_{\mathrm{mlp}}, \tilde{\bm{P}}_{\mathrm{head}}, \tilde{\bm{O}}$ near identity.
\STATE \hlpink{Initialize bend-point $\bm{\theta}_C \leftarrow \tfrac{1}{2}(\bm{\theta}_A + \bm{\theta}_B)$.}
\FOR{$t = 1$ \textbf{to} $N_{\mathrm{iter}}$}
    \STATE Project: $\{\bm{P}_{\mathrm{mlp}}, \bm{P}_{\mathrm{head}}, \bm{O}\} \leftarrow \{\textsc{ProjPerm}(\tilde{\bm{P}}_{\mathrm{mlp}}),\, \textsc{ProjPerm}(\tilde{\bm{P}}_{\mathrm{head}}),\, \textsc{ProjOrth}(\tilde{\bm{O}})\}$.
    \STATE Align: $\bm{\theta}_B^{\mathrm{aligned}} \leftarrow \pi(\bm{\theta}_B;\, \bm{P}_{\mathrm{mlp}}, \bm{P}_{\mathrm{head}}, \bm{O})$. \hfill $\triangleright$ $\pi$ applies transformations
    \STATE Sample $\lambda \sim \text{Uniform}(0.4, 0.6)$.
    \STATE Sample batch $B = \{(\bm{X}_i, \bm{Y}_i)\}_{i=1}^{|B|}$ from $\mathcal{D}$.
    \STATE \hlpink{Interpolate via polychain:}
\[\hlpink{$\displaystyle
    \bm{\theta}_{\mathrm{interp}} = \begin{cases} (1-2\lambda)\,\bm{\theta}_A + 2\lambda\,\bm{\theta}_C & \lambda \in [0, 0.5] \\ (2-2\lambda)\,\bm{\theta}_C + (2\lambda-1)\,\bm{\theta}_B^{\mathrm{aligned}} & \lambda \in (0.5, 1] \end{cases}
$}\]
    \STATE Objective: $\mathcal{J} \leftarrow \frac{1}{|B|}\sum_{(\bm{X},\bm{Y})\in B} \mathcal{L}_{\mathrm{CE}}(\bm{\theta}_{\mathrm{interp}};\, \bm{X}, \bm{Y})$.
    \STATE Gradients: $(\bm{g}_{\tilde{\bm{P}}_\mathrm{mlp}}, \bm{g}_{\tilde{\bm{P}}_\mathrm{head}}, \bm{g}_{\tilde{\bm{O}}}, \bm{g}_{\bm{\theta}_C}) \leftarrow \nabla_{(\tilde{\bm{P}}_\mathrm{mlp}, \tilde{\bm{P}}_\mathrm{head}, \tilde{\bm{O}}, \bm{\theta}_C)}\,\mathcal{J}$. \hfill $\triangleright$ STE for $\tilde{\bm{P}}_{\mathrm{mlp}}, \tilde{\bm{P}}_{\mathrm{head}}$
    \STATE Update: For $k \in \{\bm{P}_\mathrm{mlp}, \bm{P}_\mathrm{head}, \bm{O}\}$, $\tilde k \leftarrow \mathrm{Adam}(\tilde k, \bm{g}_{\tilde k}, \eta)$;  \hlpink{$\bm{\theta}_C \leftarrow \mathrm{Adam}(\bm{\theta}_C, \bm{g}_{\bm{\theta}_C}, \eta)$}.
\ENDFOR
\STATE Final projections: $\bm{P}_{\mathrm{mlp}}^* \leftarrow \textsc{ProjPerm}(\tilde{\bm{P}}_\mathrm{mlp})$, $\bm{P}_{\mathrm{head}}^* \leftarrow \textsc{ProjPerm}(\tilde{\bm{P}}_\mathrm{head})$, $\bm{O}^* \leftarrow \textsc{ProjOrth}(\tilde{\bm{O}})$.
\RETURN $\bm{P}_{\mathrm{mlp}}^*, \bm{P}_{\mathrm{head}}^*, \bm{O}^*$, \hlpink{$\bm{\theta}_C$}.
\end{algorithmic}
\end{algorithm}
\paragraph{Hyperparameter details.} For model training, we first train AdamW on One Billion Word (largest dataset) with early stop enabled, and the training always exits after around $13000$ steps. We thus impose a uniform $15000$ training steps for all model training experiments, and we find $15000$ steps are also compliant with Chinchilla's rule. This amounts to $21\sim 22$ training epochs for enwik8; $\sim 10$ training epochs for WikiText-103; $\sim 5$ training epochs for Stories; around $1.1$ epoch over all training tokens for BookCorpus; around one single pass over the One Billion Word dataset.

We record learning rates sweep results for different optimizers for different datasets in \cref{lr_search_adamw_enwik8,lr_search_muon_enwik8,lr_search_adamw_wiki,lr_search_muon_wiki,lr_search_adamw_stories,lr_search_muon_stories,lr_search_adamw_bookcorpus,lr_search_muon_bookcorpus,lr_search_adamw_lm1b,lr_search_muon_lm1b} and weight decay sweep results in \cref{wd_search_adamw_enwik8,wd_search_muon_enwik8,wd_search_adamw_wiki,wd_search_muon_wiki,wd_search_adamw_stories,wd_search_muon_stories,wd_search_adamw_bookcorpus,wd_search_muon_bookcorpus,wd_search_adamw_lm1b,wd_search_muon_lm1b}.

For alignment training, despite we add a polychain component and thus introduce a new copy of trainable parameter, we stick to default alignment training setting choice in \citep{theus2025generalizedlinearmodeconnectivity}. We train for $10$ epochs for enwik8; $10$ epochs for WkiText-103; $10$ epochs for Stories; $1$ epoch for BookCorpus; $1$ epoch for One Billion Word.
\begin{table}[ht!]
  \centering
  \caption{learning rate sweep (AdamW, enwik8)}\label{lr_search_adamw_enwik8}
  \small
  \begin{tabular}{cccccc}
    \toprule
    \multicolumn{6}{c}{\text{lr: validation loss}}\\ 
    \midrule
    {1e-4: $3.259234$} & {1e-3: $3.040800$} & {5e-3: $2.987674$} & {6e-3: $2.949275$} & {7e-3: $2.974285$} & {8e-3: $2.971421$} \\
    {9e-3: $2.946757$} & {\roundedcell{1e-2: $2.940545$}} & {2e-2: $3.879081$} & {3e-2: $4.497614$} & {4e-2: $5.288924$} & {5e-2: $5.521785$} \\
    {1e-1: $5.795908$}  \\
    \bottomrule
  \end{tabular}
\end{table}

\begin{table}[ht!]
  \centering
  \caption{weight decay sweep (AdamW, enwik8)}\label{wd_search_adamw_enwik8}
  \small
  \begin{tabular}{cccccc}
    \toprule
    \multicolumn{6}{c}{\text{wd: validation loss}}\\ 
    \midrule
    {1e-5: $2.982424$} & {1e-4: $2.984059$} & {1e-3: $2.981161$} & {1e-2: $ 2.974202$} & {\roundedcell{1e-1: $2.940545$}} & {1e1: $3.063595$} \\
    \bottomrule
  \end{tabular}
\end{table}
\begin{table}[ht!]
  \centering
  \caption{learning rate sweep (Muon, enwik8)}\label{lr_search_muon_enwik8}
  \small
  \begin{tabular}{cccccc}
    \toprule
    \multicolumn{6}{c}{\text{lr: validation loss}}\\ 
    \midrule
    {1e-4: $3.459713$} & {1e-3: $3.138396$} & {5e-3: $3.021547$} & {1e-2: $2.991482$} & {2e-2: $2.977345$} & {\roundedcell{3e-2: $2.955646$}} \\
    {4e-2: $2.965786$} & {5e-2: $2.966031$} & {6e-2: $2.974766$} & {7e-2: $2.981924$} & {8e-2: $2.993871$} & {9e-2: $2.995507$} \\
    {1e-1: $3.002056$}  \\
    \bottomrule
  \end{tabular}
\end{table}

\begin{table}[ht!]
  \centering
  \caption{weight decay sweep (Muon, enwik8)}\label{wd_search_muon_enwik8}
  \small
  \begin{tabular}{cccccc}
    \toprule
    \multicolumn{6}{c}{\text{wd: validation loss}}\\ 
    \midrule
    {1e-5: $2.995775$} & {1e-4: $3.001897$} & {1e-3: $2.995257$} & {1e-2: $2.976258$} & {\roundedcell{1e-1: $2.955646$}} & {1e1: $3.194322$} \\
    \bottomrule
  \end{tabular}
\end{table}
\begin{table}[ht!]
  \centering
  \caption{learning rate sweep (AdamW, WikiText-103)}\label{lr_search_adamw_wiki}
  \small
  \begin{tabular}{cccccc}
    \toprule
    \multicolumn{6}{c}{\text{lr: validation loss}}\\ 
    \midrule
    {1e-4: $3.716662$} & {5e-4: $3.548120$} & {1e-3: $3.504182$} & {2e-3: $3.487067$} & {\roundedcell{3e-3: $3.484125$}} & {4e-3: $3.485193$} \\
    {5e-3: $3.486527$} & {6e-3: $3.487567$} & {7e-3: $3.487229$} & {8e-3: $3.512726$} & {9e-3: $3.865148$} & {1e-2: $4.216534$} \\
    {1e-1: $6.215991$}  \\
    \bottomrule
  \end{tabular}
\end{table}

\begin{table}[ht!]
  \centering
  \caption{weight decay sweep (AdamW, WikiText-103)}\label{wd_search_adamw_wiki}
  \small
  \begin{tabular}{cccccc}
    \toprule
    \multicolumn{6}{c}{\text{wd: validation loss}}\\ 
    \midrule
    {1e-5: $3.504556$} & {1e-4: $3.504628$} & {1e-3: $3.501938$} & {1e-2: $3.502638$} & {\roundedcell{1e-1: $3.484125$}} & {1e1: $3.548662$} \\
    \bottomrule
  \end{tabular}
\end{table}
\begin{table}[ht!]
  \centering
  \caption{learning rate sweep (Muon, WikiText-103)}\label{lr_search_muon_wiki}
  \small
  \begin{tabular}{cccccc}
    \toprule
    \multicolumn{6}{c}{\text{lr: validation loss}}\\ 
    \midrule
    {1e-4: $3.762618$} & {1e-3: $3.502252$} & {2e-3: $3.476695$} & {3e-3: $3.463665$} & {4e-3: $3.456732$} & {\roundedcell{5e-3: $3.456051$}} \\
    {6e-3: $3.459424$} & {7e-3: $3.462454$} & {8e-3: $3.464876$} & {9e-3: $3.464611$} & {1e-2: $3.472285$} & {5e-2: $3.563784$} \\
    {1e-1: $3.636153$}  \\
    \bottomrule
  \end{tabular}
\end{table}

\begin{table}[ht!]
  \centering
  \caption{weight decay sweep (Muon, WikiText-103)}\label{wd_search_muon_wiki}
  \small
  \begin{tabular}{cccccc}
    \toprule
    \multicolumn{6}{c}{\text{wd: validation loss}}\\ 
    \midrule
    {1e-5: $3.466826$} & {1e-4: $3.465939$} & {1e-3: $3.465568$} & {1e-2: $3.462012$} & {\roundedcell{1e-1: $3.456051$}} & {1e1: $3.694885$} \\
    \bottomrule
  \end{tabular}
\end{table}
\begin{table}[ht!]
  \centering
  \caption{learning rate sweep (AdamW, Stories)}\label{lr_search_adamw_stories}
  \small
  \begin{tabular}{cccccc}
    \toprule
    \multicolumn{6}{c}{\text{lr: validation loss}}\\ 
    \midrule
    {1e-4: $4.250962$} & {5e-4: $4.114538$} & {1e-3: $4.091298$} & {2e-3: $4.078916$} & {3e-3: $4.074738$} & {\roundedcell{4e-3: $4.074564$}} \\
    {5e-3: $4.075013$} & {6e-3: $4.101315$} & {7e-3: $4.100038$} & {8e-3: $4.540163$} & {9e-3: $4.577148$} & {1e-2: $4.745496$} \\
    {1e-1: $5.940750$}  \\
    \bottomrule
  \end{tabular}
\end{table}

\begin{table}[ht!]
  \centering
  \caption{weight decay sweep (AdamW, Stories)}\label{wd_search_adamw_stories}
  \small
  \begin{tabular}{cccccc}
    \toprule
    \multicolumn{6}{c}{\text{wd: validation loss}}\\ 
    \midrule
    {1e-5: $4.100180$} & {1e-4: $4.096759$} & {1e-3: $4.099617$} & {1e-2: $4.093496$} & {\roundedcell{1e-1: $4.074564$}} & {1e1: $4.140388$} \\
    \bottomrule
  \end{tabular}
\end{table}
\begin{table}[ht!]
  \centering
  \caption{learning rate sweep (Muon, Stories)}\label{lr_search_muon_stories}
  \small
  \begin{tabular}{cccccc}
    \toprule
    \multicolumn{6}{c}{\text{lr: validation loss}}\\ 
    \midrule
    {1e-4: $4.283535$} & {1e-3: $4.101864$} & {2e-3: $4.064723$} & {3e-3: $4.058753$} & {\roundedcell{4e-3: $4.057592$}} & {5e-3: $4.058157$} \\
    {6e-3: $4.060997$} & {7e-3: $4.062566$} & {8e-3: $4.064774$} & {9e-3: $4.067385$} & {1e-2: $4.072307$} & {5e-2: $4.153174$} \\
    {1e-1: $4.210531$}  \\
    \bottomrule
  \end{tabular}
\end{table}

\begin{table}[ht!]
  \centering
  \caption{weight decay sweep (Muon, Stories)}\label{wd_search_muon_stories}
  \small
  \begin{tabular}{cccccc}
    \toprule
    \multicolumn{6}{c}{\text{wd: validation loss}}\\ 
    \midrule
    {1e-5: $4.065752$} & {1e-4: $4.065660$} & {1e-3: $4.065554$} & {1e-2: $4.062541$} & {\roundedcell{1e-1: $4.057592$}} & {1e1: $4.251467$} \\
    \bottomrule
  \end{tabular}
\end{table}
\begin{table}[ht!]
  \centering
  \caption{learning rate sweep (AdamW, BookCorpus)}\label{lr_search_adamw_bookcorpus}
  \small
  \begin{tabular}{cccccc}
    \toprule
    \multicolumn{6}{c}{\text{lr: validation loss}}\\ 
    \midrule
    {1e-4: $3.498248$} & {5e-4: $3.362210$} & {1e-3: $3.335717$} & {2e-3: $3.319726$} & {3e-3: $3.317407$} & {\roundedcell{4e-3: $3.314930$}} \\
    {5e-3: $3.327077$} & {6e-3: $3.330844$} & {7e-3: $3.334404$} & {8e-3: $3.336071$} & {9e-3: $3.349094$} & {1e-2: $3.852602$} \\
    {1e-1: $5.223763$}  \\
    \bottomrule
  \end{tabular}
\end{table}

\begin{table}[ht!]
  \centering
  \caption{weight decay sweep (AdamW, BookCorpus)}\label{wd_search_adamw_bookcorpus}
  \small
  \begin{tabular}{cccccc}
    \toprule
    \multicolumn{6}{c}{\text{wd: validation loss}}\\ 
    \midrule
    {1e-5: $3.337265$} & {1e-4: $3.337102$} & {1e-3: $3.334261$} & {1e-2: $3.332119$} & {\roundedcell{1e-1: $3.314930$}} & {1e1: $3.388930$} \\
    \bottomrule
  \end{tabular}
\end{table}
\begin{table}[ht!]
  \centering
  \caption{learning rate sweep (Muon, BookCorpus)}\label{lr_search_muon_bookcorpus}
  \small
  \begin{tabular}{cccccc}
    \toprule
    \multicolumn{6}{c}{\text{lr: validation loss}}\\ 
    \midrule
    {1e-4: $3.526606$} & {1e-3: $3.347175$} & {2e-3: $3.315645$} & {3e-3: $3.308063$} & {4e-3: $3.307607$} & {\roundedcell{5e-3: $3.306859$}} \\
    {6e-3: $3.308815$} & {7e-3: $3.311170$} & {8e-3: $3.314373$} & {9e-3: $3.315783$} & {1e-2: $3.318446$} & {5e-2: $3.394219$} \\
    {1e-1: $3.448584$}  \\
    \bottomrule
  \end{tabular}
\end{table}

\begin{table}[ht!]
  \centering
  \caption{weight decay sweep (Muon, BookCorpus)}\label{wd_search_muon_bookcorpus}
  \small
  \begin{tabular}{cccccc}
    \toprule
    \multicolumn{6}{c}{\text{wd: validation loss}}\\ 
    \midrule
    {1e-5: $3.310488$} & {1e-4: $3.310778$} & {1e-3: $3.310378$} & {1e-2: $3.307313$} & {\roundedcell{1e-1: $3.306859$}} & {1e1: $3.496961$} \\
    \bottomrule
  \end{tabular}
\end{table}
\begin{table}[ht!]
  \centering
  \caption{learning rate sweep (AdamW, One Billion Word)}\label{lr_search_adamw_lm1b}
  \small
  \begin{tabular}{cccccc}
    \toprule
    \multicolumn{6}{c}{\text{lr: validation loss}}\\ 
    \midrule
    {1e-4: $4.179953$} & {5e-4: $4.036719$} & {6e-4: $4.029493$} & {7e-4: $4.023592$} & {8e-4: $4.019736$} & {9e-4: $4.016547$} \\
    {1e-3: $4.013754$} & {2e-3: $4.003098$} & {3e-3: $4.001358$} & {\roundedcell{4e-3: $3.998650$}} & {5e-3: $4.030982$} & {1e-2: $4.609152$} \\
    {1e-1: $6.124138$}  \\
    \bottomrule
  \end{tabular}
\end{table}

\begin{table}[ht!]
  \centering
  \caption{weight decay sweep (AdamW, One Billion Word)}\label{wd_search_adamw_lm1b}
  \small
  \begin{tabular}{cccccc}
    \toprule
    \multicolumn{6}{c}{\text{wd: validation loss}}\\ 
    \midrule
    {1e-5: $4.023601$} & {1e-4: $4.056160$} & {1e-3: $4.024538$} & {1e-2: $4.019354$} & {\roundedcell{1e-1: $3.998650$}} & {1e1: $4.080319$} \\
    \bottomrule
  \end{tabular}
\end{table}
\begin{table}[ht!]
  \centering
  \caption{learning rate sweep (Muon, One Billion Word)}\label{lr_search_muon_lm1b}
  \small
  \begin{tabular}{cccccc}
    \toprule
    \multicolumn{6}{c}{\text{lr: validation loss}}\\ 
    \midrule
    {1e-4: $4.210103$} & {1e-3: $4.017325$} & {2e-3: $3.986968$} & {\roundedcell{3e-3: $3.980139$}} & {4e-3: $3.981803$} & {5e-3: $3.984024$} \\
    {6e-3: $3.986329$} & {7e-3: $3.990405$} & {8e-3: $3.990673$} & {9e-3: $3.994923$} & {1e-2: $3.997182$} & {5e-2: $4.095955$} \\
    {1e-1: $4.196278$}  \\
    \bottomrule
  \end{tabular}
\end{table}

\begin{table}[ht!]
  \centering
  \caption{weight decay sweep (Muon, One Billion Word)}\label{wd_search_muon_lm1b}
  \small
  \begin{tabular}{cccccc}
    \toprule
    \multicolumn{6}{c}{\text{wd: validation loss}}\\ 
    \midrule
    {1e-5: $3.991678$} & {1e-4: $3.991680$} & {1e-3: $3.991378$} & {1e-2: $3.988833$} & {\roundedcell{1e-1: $3.980139$}} & {1e1: $4.162952$} \\
    \bottomrule
  \end{tabular}
\end{table} 
\paragraph{Compute.} We run all experiments on NVIDIA H100 GPUs.
\subsection{Spectral Phase Transition Details (\cref{sec::inter_exp})}\label{appendix::inter_exp}

For the spectral phase transition experiment (\cref{fig:spectral_same_optimizer} and \cref{fig:spectral_adamw_muon}), we use a larger model (8 layers, embedding dim 1024, inner dim 4096) trained on One Billion Word with optimally tuned hyperparameters (AdamW: lr=$1\mathrm{e}{-3}$, wd=$0.1$; Muon: lr=$5\mathrm{e}{-3}$, wd=$0.01$). These hyperparameters are optimally tuned individually using grid search, see \cref{muon_sweep_large,adamw_sweep_large} for hyperparameter sweeping results. After symmetric alignment, we plot singular value histograms at interpolation coefficients $t \in \{0, 0.1, 0.3, 0.5, 0.7, 0.9, 1.0\}$. Note that both endpoint models are cast to canonical form (LayerNorm coefficients absorbed into weights) before alignment, which affects the SVD distribution but not the model's mathematical function. In figures presented in our main paper, we always train the midpoint $\theta_C$ with Adam using default hyperparameters in \citep{theus2025generalizedlinearmodeconnectivity}. We also present in \cref{adamw-muon-muon} where we train midpoint with Muon, and we observe similar spectral phase transition there.
\begin{figure}[!htbp]
    \centering
    \includegraphics[width=0.95\linewidth]{inter_srank_muon_v2_resize.png}
    \caption{
    Spectrum along the AdamW$\rightarrow$Muon connectivity path.
    Each panel shows the singular value histogram at a different interpolation coefficient $t$. $\theta_C$ trained with Muon.
    }\label{adamw-muon-muon}
\end{figure}
\begin{table}[ht!]
  \centering
  \caption{learning rate and weight decay sweep (Muon, One Billion Word, large)}\label{muon_sweep_large}
  \small
  \begin{tabular}{l*{7}{c}}
    \toprule
    & \multicolumn{7}{c}{lr} \\
    \cmidrule(lr){2-8}
    wd & $1$ & $10^{-1}$ & $10^{-2}$ & $5\!\times\!10^{-3}$ & $10^{-3}$ & $10^{-4}$ & $10^{-5}$ \\
    \midrule
    $0.0$    & $6.78$ & $5.37$ & $3.49$ & $3.49$ & $3.55$ & $3.85$ & $4.27$ \\
    $0.01$ & $6.71$ & $4.84$ & $3.49$ & \roundedcell{$3.48$} & $3.54$ & $3.85$ & $4.27$ \\
    $0.1$  & $4.18$ & $3.82$ & $3.55$ & $3.51$ & $3.52$ & $3.84$ & $4.27$ \\
    $1.0$    & $4.89$ & $4.35$ & $3.95$ & $3.84$ & $3.69$ & $3.80$ & $4.27$ \\
    \bottomrule
  \end{tabular}
\end{table}

\begin{table}[ht!]
  \centering
  \caption{learning rate and weight decay sweep (AdamW, One Billion Word, large)}\label{adamw_sweep_large}
  \small
  \begin{tabular}{l*{6}{c}}
    \toprule
    & \multicolumn{6}{c}{lr} \\
    \cmidrule(lr){2-7}
    wd & $10^{-2}$ & $5\!\times\!10^{-3}$ & $10^{-3}$ & $5\!\times\!10^{-4}$ & $10^{-4}$ & $10^{-5}$ \\
    \midrule
    $0.0$    & $5.59$ & $5.62$ & $3.526$ & $3.530$ & $3.63$ & $4.00$ \\
    $0.01$ & $5.63$ & $5.61$ & $3.52$  & $3.53$  & $3.63$ & $4.00$ \\
    $0.1$  & $5.68$ & $5.66$ & \roundedcell{$3.51$}  & $3.52$  & $3.63$ & $4.00$ \\
    $1.0$    & $4.49$ & $4.85$ & $3.52$  & $3.52$  & $3.61$ & $4.00$ \\
    \bottomrule
  \end{tabular}
\end{table}
\subsection{Additional Out-Of-Distribution Results (\cref{sec:ood})}\label{sec::ood_appendix}
\begin{figure}[ht!]
    \centering
    \includegraphics[width=0.85\textwidth]{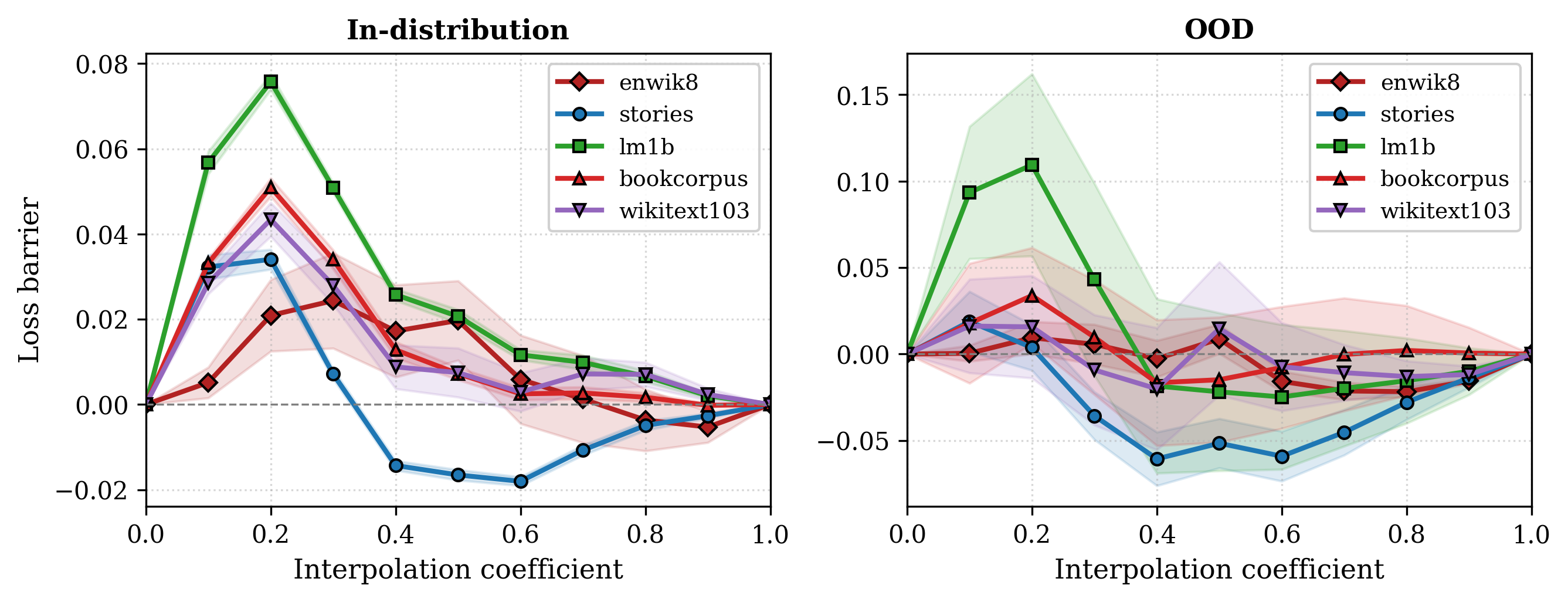}
    \caption{Out-of-distribution experiments for additional datasets. }\label{ood_additional}
\end{figure}
We present out-of-distribution probe for more datasets in Figure \ref{ood_additional}. For enwik8, we consider in-distribution dataset to be enwik8 and out-of-distribution dataset to be Stories; for Stories, we consider in-distribution dataset to be Stories and out-of-distribution dataset to be WikiText-103; for One Billion Word, we consider in-distribution dataset to be One Billion Word and out-of-distribution dataset to be enwik8; for BookCorpus, we consider  in-distribution dataset to be BookCorpus and out-of-distribution dataset to be One Billion Word; for WikiText-103, we consider in-distribution dataset to be WikiText-103 and out-of-distribution dataset to be BookCorpus. We observe that there is out-of-distribution generalization advantage in nearly all cases.


\end{document}